\newtheorem{theorem}{Theorem}
\newtheorem{lemma}{Lemma}
\newtheorem{claim}{Claim}
\newtheorem{corollary}{Corollary}
\newtheorem{definition}{Definition}
\newtheorem{fact}{Fact}
\newcommand{\eps}{\epsilon}
\newcommand{\abs}[1]{\left| #1 \right|}
\newcommand{\bE}{\mathbf{E}}
\newcommand{\var}{\mathbf{Var}}
\renewcommand{\Pr}{\mathbf{Pr}}
\newcommand{\A}{\mathcal{A}}
\newcommand{\K}{\tilde{K}}
\renewcommand{\paragraph}[1]{\medskip \noindent {\bf #1.}}
\newenvironment{proof}{\trivlist\item[]\emph{Proof}:}%
{\unskip\nobreak\hskip 1em plus 1fil\nobreak$\Box$
\parfillskip=0pt%
\endtrivlist}
\definecolor{DSgray}{cmyk}{0,1,0,0}
\newcommand{\Authornote}[2]{{\small\textcolor{DSgray}{$<${  #1: #2
}$>$}}}
\newcommand{\xnote}[1]{{\Authornote{Xi}{#1}}}
\renewcommand{\paragraph}[1]{\medskip \noindent {\bf #1.}}
\newcommand{\two}{coin-tossing}
\newcommand{\arm}{$\eps$-top-$K$ arms}
\newcommand{\armT}{$\eps'$-top-$K$ arms}
\newcommand{\armG}{$\gamma$-top-$K$ arms}
\newcommand{\vb}{\mathbf{b}}
\newcommand{\vB}{\mathbf{B}}
\newcommand{\D}{\mathcal{D}}
\newcommand{\tDelta}{\tilde{\Delta}}
\newcommand{\ttheta}{\tilde{\theta}}
\newcommand{\tmu}{\tilde{\mu}}
\newcommand{\tp}{\tilde{p}}
\newcommand{\gap}{\Delta}
\let\tilde\widetilde
\begin{document}

\title{Adaptive Multiple-Arm Identification\footnote{Author names are listed in alphabetical order. Preliminary version to appear in ICML 2017.}
}

 \author{
 Jiecao Chen\\ Computer Science Department\\ Indiana University at Bloomington\\ {\tt jiecchen@umail.iu.edu} \and
 Xi Chen\\ Leonard N. Stern School of Business\\ New York University\\ {\tt xchen3@stern.nyu.edu} \and
 Qin Zhang\\ Computer Science Department\\ Indiana University at Bloomington\\ {\tt qzhangcs@indiana.edu} \and
 Yuan Zhou\\ Computer Science Department\\ Indiana University at Bloomington\\ {\tt yzhoucs@indiana.edu}
 }
\maketitle

\abstract{
We study the problem of selecting $K$ arms with the highest expected rewards in a stochastic $n$-armed bandit game. This problem has a wide range of applications, e.g., A/B testing, crowdsourcing, simulation optimization. Our goal is to develop a PAC algorithm, which, with probability at least $1-\delta$, identifies a set of $K$ arms with the aggregate regret at most $\epsilon$. The notion of aggregate regret for multiple-arm identification was first introduced in \cite{Zhou:14} , which is defined as the difference of the averaged expected rewards between the selected set of arms and the best $K$ arms. In contrast to \cite{Zhou:14} that only provides instance-independent sample complexity, we introduce a new hardness parameter for characterizing the difficulty of any given instance. We further develop two algorithms and establish the corresponding sample complexity in terms of this hardness parameter. The derived sample complexity can be significantly smaller than  state-of-the-art results for a large class of instances and matches the instance-independent lower bound upto a $\log(\epsilon^{-1})$ factor in the worst case. We also prove a lower bound result showing that the extra $\log(\epsilon^{-1})$ is necessary for instance-dependent algorithms using the introduced hardness parameter.
}

\section{Introduction}
\label{sec:intro}

Given a set of alternatives with different quality,  identifying high quality alternatives via a sequential experiment is an important problem in multi-armed bandit (MAB) literature, which is also known as the ``pure-exploration" problem. This problem has a wide range of applications. For example, consider the A/B/C testing problem with multiple website designs, where each candidate design corresponds to an alternative. In order to select high-quality designs, an agent could display different designs to website visitors and measure the attractiveness of an design. The question is: how should the agent adaptively select which design to be displayed next so that the high-quality designs can be quickly and accurately identified? For another example, in crowdsourcing, it is critical to identify high-quality workers from a pool of a large number of noisy workers. An effective strategy is testing workers by gold questions, i.e., questions with the known answers provided by domain experts. Since the agent has to pay  a fixed monetary reward for each answer from a worker, it is important to implement a cost-effective strategy for to select the top workers with the minimum number of tests. Other applications include simulation optimization, clinical trials, etc.

More formally, we assume that there are $n$ alternative arms, where the $i$-th arm is associated with an unknown reward distribution $\mathcal{D}_i$ with mean $\theta_i$.  For the ease of illustration, we assume each $\mathcal{D}_i$ is supported on $[0,1]$. In practice, it is easy to satisfy this assumption by a proper scaling. For example, the traffic of a website or the correctness of an answer for a crowd worker (which simply takes the value either 0 or 1), can be scaled to $[0,1]$. The mean reward $\theta_i$ characterizes the quality of the $i$-th alternative. The agent sequentially pulls an arm, and upon each pulling of  the $i$-th arm, the \emph{i.i.d.} reward from $\mathcal{D}_i$ is observed. The goal of ``top-$K$ arm identification'' is to design an adaptive arm pulling strategy so that the top $K$ arms with the largest mean rewards can be identified with the minimum number of trials.  In practice, identifying the exact top-$K$ arms usually requires a large number of arm pulls, which could be wasteful. In many applications (e.g., crowdsourcing), it is  sufficient to find an ``\emph{approximate set}" of top-$K$ arms. To measure the quality of the selected arms, we adopt the notion of \emph{aggregate regret} (or regret for short) from \cite{Zhou:14}. In particular, we assume that arms are ordered by their mean $\theta_1 \geq \theta_2 \geq \dots \geq \theta_n$ so that the set of the best $K$ arms is $\{1,\ldots, K\}$. For the selected arm set $T$ with the size $|T|=K$, the aggregate regret  $\mathcal{R}_T$ is defined as,
\begin{equation}\label{eq:regret}
 \mathcal{R}_T=\frac{1}{K} \left( \sum_{i=1}^K \theta_i - \sum_{i \in T} \theta_i \right).
\end{equation}
The set of arms $T$ with the aggregate regret less than a pre-determined tolerance level $\epsilon$ (i.e. $\mathcal{R}_T \leq \epsilon$) is called  \arm. In this paper, we consider the $\epsilon$-top-K-arm problem in the ``fixed-confidence"  setting: given a target confidence level $\delta>0$, the goal is to find a set of \arm\ with the probability at least  $1-\delta$. This is also known as the PAC (probably approximately correct) learning setting. We are interested in achieving this goal with as few arm pulls (sample complexity) as possible.

To solve this problem, \cite{Zhou:14} proposed the OptMAI algorithm and established its sample complexity $\Theta \left(\frac{n}{\epsilon^2} \left(1+ \frac{\ln \delta^{-1}}{K}\right) \right)$, which is shown to be asymptotically optimal. However, the algorithm and the corresponding sample complexity in \cite{Zhou:14} are \emph{non-adaptive} to the underlying instance. In other words, the algorithm does not utilize the information obtained in known samples to adjust its future sampling strategy; and as a result, the sample complexity only involves the parameters $K$, $n$, $\delta$ and $\epsilon$ but is independent of $\{\theta_i\}_{i=1}^n$. \cite{Chen-Lin-King-Lyu-Chen-13} developed the CLUCB-PAC algorithm and established an instance-dependent sample complexity for a more general class of problems, including the $\epsilon$-top-K arm identification problem as one of the key examples. When applying the CLUCB-PAC algorithm to identify \arm, the sample complexity becomes $O((\log H^{(0,\epsilon)} + \log \delta^{-1})H^{(0,\epsilon)})$ where $H^{(0,\epsilon)} =  \sum_{i=1}^{n} \min\{(\Delta_i)^{-2}, \epsilon^{-2} \}$, $\Delta_i = \theta_i - \theta_{K+1}$ for $i \leq K$, $\Delta_i = \theta_K - \theta_i$ for $i > K$. The reason why we adopt the notation $H^{(0,\epsilon)}$ will be clear from Section \ref{sec:intro-main-results}.  However, this bound may be improved for the following two reasons. { First, intuitively, the hardness parameter $H^{(0,\epsilon)}$ is the total number of necessary pulls needed for each arm to identify whether it is among the top-$K$ arms or the rest so that the algorithm can decide whether to accept or reject the arm (when the arm's mean is $\epsilon$-close to the boundary between the top-$K$ arms and the rest arms, it can be either selected or rejected). However, in many cases, even if an arm's mean is $\eps$-far from the boundary, we may still be vague about the comparison between its mean and the boundary, i.e. either selecting or rejecting the arm satisfies the aggregate regret bound.}
This may lead to fewer number of pulls and a smaller hardness parameter for the same instance. Second, the worst-case sample complexity for CLUCB-PAC becomes $O((\log n + \log \eps^{-1} + \log \delta^{-1})n \epsilon^{-2})$. When $\delta$ is a constant, this bound is $\log n$ times more than the best non-adaptive algorithm in \cite{Zhou:14}.

In this paper, we explore towards the above two directions  and introduce new instance-sensitive algorithms for the problem of identifying \arm. These algorithms significantly improve the sample complexity by CLUCB-PAC for many common instances and almost match the best non-adaptive algorithm in the worst case.

Specifically, we first introduce a new parameter $H$ to characterize the hardness of a given instance. This new hardness parameter $H$ could be smaller than the hardness parameter $\tilde{H}$ used in the literature, in many natural instances. For example, we show in Lemma~\ref{lem:c-spread-bound}  that when $\{\theta_i\}_{i=1}^n$ are sampled from a continuous distribution with bounded probability density function (which is a common assumption in Bayesian MAB and natural for many applications), for $K = \gamma n$ with $\gamma \leq 0.5$, our hardness parameter $H=O(n/\sqrt{\epsilon})$ while $\tilde{H}=\Omega(n/\epsilon)$.

 Using this new hardness parameter $H$, we first propose an easy-to-implement algorithm-- {\sc AdaptiveTopK} and relate its sample complexity to $H$. In Theorem~\ref{thm:intro-basic}, we show that  {\sc AdaptiveTopK}  uses $O\left(\left(\log\log(\epsilon^{-1})+\log n + \log \delta^{-1}\right) H\right)$ to identify \arm~ with probability at least $1 - \delta$. Note that this bound has a similar form  as the one in \cite{Chen-Lin-King-Lyu-Chen-13}, but as mentioned above, we have an $\sqrt{\eps}$ -factor improvement in the hardness parameter for those instances where Lemma~\ref{lem:c-spread-bound} applies.

We then propose the second algorithm ({\sc ImprovedTopK}) with even less sample complexity, which removes the $\log n$ factor in the sample complexity. In Theroem~\ref{thm:intro-improved}, we show that the algorithm uses $O\left(\left(\log\epsilon^{-1}+ \log \delta^{-1}\right) H\right)$ pulls to identify \arm~ with probability $1-\delta$.  Since $H$ is always $\Omega(n/\epsilon^{2})$ (which will be clear when the $H$ is defined in Section~\ref{sec:intro-main-results}), the worst-case sample complexity of {\sc ImprovedTopK} matches the best instance-independent shown in \cite{Zhou:14} up to an extra $\log(\epsilon^{-1})$ factor (for constant $\delta$).  We are also able to show that this extra $\log(\epsilon^{-1})$ factor is a necessary expense by being instance-adaptive (Theorem~\ref{thm:intro-lb-eps}). It is also noteworthy that as a by-product of establishing {\sc ImprovedTopK}, we developed an algorithm that approximately identifies the $k$-th best arm, which may be of independent interest. Please see Algorithm \ref{alg:k-th} for details.

We are now ready to introduce our new hardness parameters and summarize the main results in technical details.

\subsection{Summary of Main Results}
\label{sec:intro-main-results}
Following the existing literature (see, e.g., \cite{Bubeck:13}), we first define the gap of the $i$-th arm
\begin{eqnarray}\label{eq:gap}
\Delta_i(K) = \begin{cases}
  \theta_i - \theta_{K+1} \quad & \text{if} \; i \leq K \\
  \theta_K- \theta_i      \quad & \text{if} \; i \geq K+1.
\end{cases}
\end{eqnarray}
Note that when $K = 1$, $\Delta_i(K)$ becomes $\theta_1 - \theta_i$ for all $i \geq 2$ and $\Delta_1(K)=\theta_1-\theta_2$. 
When $K$ is clear from the context, we simply use $\Delta_i$ for $\Delta_i(K)$. One commonly used hardness parameter for quantifying the sample complexity in the existing literature (see, e.g., \cite{Bubeck:13, Karnin:13}) is $\tilde{H} \triangleq \sum_{i=1}^n \Delta_i^{-2}$.  If there is an extremely small gap $\Delta_i$, the value of $\tilde{H}$ and thus the corresponding sample complexity can be super large. This hardness parameter is natural when the goal is to identify the exact top-$K$ arms, where a sufficient gap between an arm and the boundary (i.e. $\theta_K$ and $\theta_{K+1}$) is necessary. However, in many applications (e.g., finding high-quality workers in crowdsourcing), it is an overkill to select the exact top-$K$ arms. For example, if all the top-$M$ arms with $M > K$ have very close means, then any subset of them of size $K$ forms an $\epsilon$-top-$K$ set in terms of the aggregate regret in \eqref{eq:regret}. Therefore, to quantify the sample complexity when the metric is the aggregate regret, we need to construct a new hardness parameter.

Given $K$ and an error bound $\epsilon$, let us define $t = t(\epsilon, K)$ to be the largest $t \in \{0, 1, 2, \dots, K - 1\}$ such that
\begin{align}\label{eq:def-t-eps-K}
\Delta_{K-t} \cdot t \leq K\epsilon \qquad \mbox{and}\qquad \Delta_{K+t+1}\cdot t \leq K \epsilon.
\end{align}
 Note that $\Delta_{K-t} \cdot t = (\theta_{K-t}-\theta_{K+1}) \cdot t$ upper-bounds the total gap of the $t$ worst arms in the top $K$ arms and $\Delta_{K+t+1} \cdot t=(\theta_{K}-\theta_{K+t+1})  \cdot t$ upper-bounds the total gap of the $t$ best arms in the non-top-$K$ arms. Intuitively, the definition in \eqref{eq:def-t-eps-K} means that we can tolerate  exchanging at most $t$ best arms in the non-top-$K$ arms with the $t$ worst arms in the top-$K$ arms. 

Given $t = t(\epsilon, K)$, we define
\begin{align}
\label{eq:def-Psi-t}
\Psi_t = \min(\Delta_{K-t}, \Delta_{K+t+1}),
\end{align}
and
\begin{align}
\label{eq:def-Psi-t-eps}
\Psi_t^{\epsilon} = \max(\epsilon, \Psi_t) .
\end{align}
We now introduce the following parameter to characterize the hardness of a given instance,
\begin{align}
H = H^{(t, \epsilon)} = \sum_{i=1}^{n} \min\{(\Delta_i)^{-2}, (\Psi_t^{\epsilon})^{-2}\} .
\end{align}
It is worthwhile to note that in this new definition of hardness parameter, no matter how small the gap $\Delta_i$ is, since  $\Psi_t^{\epsilon} \geq \epsilon$,  we always have $H^{(t,\epsilon)} \leq n\epsilon^{-2}$. We also note that since $\Psi_{t}$ is non-decreasing in $t$, $H^{(t, \epsilon)}$ is non-increasing in $t$.

Our first result is an easy-to-implement algorithm (see Algorithm \ref{alg:basic}) that identifies $\epsilon$-top-$K$ arms with sample complexity related to $H^{(t, \epsilon)}$.

\begin{theorem}
  \label{thm:intro-basic}
There is an algorithm that computes \arm\ with probability at least $(1 - \delta)$, and pulls the arms at most
$O\left(\left(\log\log \epsilon^{-1} + \log n + \log\delta^{-1}\right)H^{(t, \epsilon)} \right)$
times.
\end{theorem}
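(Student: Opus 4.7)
The plan is to design a round-based accept/reject elimination algorithm in the spirit of CLUCB-PAC, with termination and thresholds tuned to the new hardness parameter $H^{(t,\eps)}$. In round $r = 1, 2, \ldots$, I set $\eps_r = 2^{-r}$ and sample each surviving arm enough times (by Hoeffding, $O(\eps_r^{-2}(\log n + \log\delta^{-1} + \log r))$ pulls per arm) to force $|\hat\theta_i - \theta_i| \le \eps_r/8$ with failure probability at most $\delta/(Cnr^2)$. Using the resulting confidence intervals, I move an arm into a ``definitely top-$K$'' set $A$ when its empirical mean clearly beats the $(K - |A| + 1)$-th largest empirical mean in the active pool $S$, and into the reject set $R$ when it clearly loses to the $(K - |A|)$-th largest. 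Each round additionally checks, using the current confidence intervals, whether completing $A$ with the top-$(K - |A|)$ empirical arms of $S$ already certifies an $\eps$-top-$K$ set; if so, the algorithm terminates.

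Correctness rests on the standard good event $\E$ that every empirical confidence interval is valid; a union bound with $\sum_r r^{-2} = O(1)$ gives $\Pr[\E] \ge 1 - \delta$. Under $\E$, every accepted arm is genuinely in the top-$K$ and every rejected arm lies outside it, so the only source of aggregate regret is the final completion step from the residual active pool. The key structural observation is that by round $r^*$ with $\eps_{r^*} \lesssim \Psi_t^\eps/C$, every arm with $\Delta_i > \Psi_t^\eps$ must have been classified under $\E$; the survivors then all satisfy $\Delta_i \le \Psi_t^\eps$, and by the definition $\Psi_t = \min(\Delta_{K-t}, \Delta_{K+t+1})$ combined with the monotonicity of $\Delta_i$ in the index, they lie in the window $\{K-t+1, \ldots, K+t\}$. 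A completion therefore swaps at most $t$ true top-$K$ arms for at most $t$ boundary non-top-$K$ arms, and the total incurred loss is at most $t(\Delta_{K-t} + \Delta_{K+t+1}) \le 2K\eps$ by the definition~\eqref{eq:def-t-eps-K} of $t$. Hence $\mathcal{R}_T \le 2\eps$, which the theorem statement absorbs by rescaling $\eps$ by a constant.

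For the sample complexity, under $\E$ arm $i$ is resolved by the first round $r_i$ with $\eps_{r_i} \lesssim \max(\Delta_i, \Psi_t^\eps)$. Because per-round pulls grow geometrically in $\eps_r^{-2}$, the cumulative pulls on arm $i$ are dominated by its final round, giving $O\bigl(\min\{\Delta_i^{-2}, (\Psi_t^\eps)^{-2}\} \cdot (\log n + \log\delta^{-1} + \log r_i)\bigr)$. Since $r_i \le r^* = O(\log \eps^{-1})$, we have $\log r_i = O(\log\log \eps^{-1})$, and summing over $i$ yields the claimed $O\bigl((\log\log \eps^{-1} + \log n + \log\delta^{-1})\, H^{(t,\eps)}\bigr)$ bound.

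The main obstacle I anticipate is the analysis of the empirical pivot: the accept/reject thresholds are order statistics of empirical means in an active pool that gradually concentrates around the top-$K$ boundary, so the pivot itself drifts. The plan is to show that under $\E$ the $(K - |A|)$-th active order statistic tracks $\theta_K$ up to an $O(\eps_r)$ error, modulo at most $2t$ unresolved boundary arms whose cardinality stays bounded throughout, so any arm whose true mean is $\Omega(\eps_r)$-far from $\theta_K$ is necessarily classified in round $r$. The $t$-exchange identity~\eqref{eq:def-t-eps-K} is exactly what keeps the residual boundary survivors from damaging the aggregate regret; once the pivot-stability lemma is established, the rest of the argument is routine bookkeeping.
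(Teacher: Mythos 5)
Your algorithm is essentially the paper's {\sc AdaptiveTopK} (Algorithm~\ref{alg:basic}), and your complexity bookkeeping (per-arm pulls dominated by the last round, $r_i \le r^* = O(\log\eps^{-1})$, hence a $\log\log\eps^{-1}$ inside the logarithmic factor) matches Section~\ref{sec:complexity-basic}. The genuine gap is in your correctness argument for the completion step. The claim that the survivors at round $r^*$ ``lie in the window $\{K-t+1,\dots,K+t\}$'' because they satisfy $\Delta_i\le\Psi_t^\eps$ is false in general: when $\Psi_t^\eps=\eps>\Psi_t$ the set $\{i:\Delta_i\le\eps\}$ can contain far more than $2t$ arms (e.g.\ all of them, when many means coincide), and even when $\Psi_t^\eps=\Psi_t$, ties among the $\Delta_i$ break the containment. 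Once the window claim fails, ``at most $t$ swaps, each costing at most $\Delta_{K-t}+\Delta_{K+t+1}$'' no longer follows, and the bound $t(\Delta_{K-t}+\Delta_{K+t+1})\le 2K\eps$ does not control the completion loss; indeed that expression can equal $0$ while the completion loss is strictly positive.

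The paper's argument avoids this entirely. First, the regret of the completion is bounded \emph{without reference to $t$}: by the order-statistic perturbation lemma (Lemma~\ref{lem:delta-order}), the $s$-th largest empirical survivor has true mean within $2\cdot2^{-r^*}$ of the $s$-th largest true mean among survivors, so the completion loses at most $2\cdot2^{-r^*}(K-|A|)$, which is at most $\eps K$ by the explicit termination test $2\cdot2^{-r^*}(K-|A|)\le\eps K$. The parameter $t$ enters only in the complexity analysis: once $2^{-r}<\Delta_{K-t}/4$, every arm $i\le K-t$ satisfies $\tDelta_i(S_r)\ge\Delta_i-2\cdot2^{-r}>2\cdot2^{-r}$ and is accepted, so $K-|A|\le t$ and the termination test fires because $2\cdot2^{-r}\cdot t<\Delta_{K-t}\cdot t/2\le\eps K/2$ (Lemma~\ref{lem:r-star}); together with the fallback $2^{-r^*}\ge\eps/2$ this yields $2^{-r^*}=\Omega(\max\{\eps,\Delta_{K-t}\})=\Omega(\Psi_t^\eps)$, which is what the per-arm pull bound actually needs. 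A secondary issue: obtaining regret $2\eps$ and ``rescaling $\eps$ by a constant'' is not free here, because $H^{(t(\eps/2,K),\,\eps/2)}$ is not within a universal constant factor of $H^{(t(\eps,K),\,\eps)}$ in general (both $t$ and $\Psi_t$ change with $\eps$); the paper's accounting lands exactly within the $\eps K$ budget, so no rescaling is needed. You should replace the window-based swap count with the pivot-stability argument you sketch at the end, which is in effect Lemma~\ref{lem:delta-order} plus the termination test.
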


We also develop a more sophisticated algorithm (see Algorithm \ref{alg:improved}) with an improved sample complexity.

\begin{theorem}
\label{thm:intro-improved}

There is an algorithm that computes \arm\ with probability at least $(1 - \delta)$, and pulls the arms at most $O \left( \left(\log \eps^{-1} + \log  \delta^{-1} \right) H^{(t, \epsilon)} \right)$ times.
\end{theorem}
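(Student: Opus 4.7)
The strategy is to decouple the identification task into two pieces: (i) pin down a single ``boundary'' value $\hat\theta$ lying (up to error $O(\epsilon)$) inside the flexible region $[\theta_{K+t+1},\theta_{K-t}]$ guaranteed by the definition of $t=t(\epsilon,K)$ in \eqref{eq:def-t-eps-K}, and (ii) use $\hat\theta$ as a fixed threshold against which each arm is tested independently. Once the threshold is pinned down, phase (ii) reduces to $n$ essentially decoupled sequential tests against a fixed number, which can be analyzed without a union bound over $n$; this is exactly what drives the removal of the $\log n$ factor present in the analysis behind Algorithm \ref{alg:basic}.

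Phase (i) is the subroutine $\estk$ of Algorithm \ref{alg:k-th}, which approximately identifies the $K$-th best arm. I would implement it as a Median-Elimination-style iterative refinement with $R=O(\log\epsilon^{-1})$ phases. At phase $r$ a candidate pool $S_r$ is pulled $N_r=\Theta(2^{2r})$ times; the empirical $K_r$-th mean within $S_r$ serves as the current boundary estimate, and arms whose empirical mean differs from this estimate by more than $\Theta(2^{-r})$ are accepted or rejected. The per-phase confidence budget is allocated geometrically, e.g.\ $\delta_r=\delta/2^r$, so the total failure probability is $O(\delta)$ without any union bound over the $n$ arms. The key invariant is that any arm surviving into phase $r$ has gap $\Delta_i=O(2^{-r})$, so the phase-$r$ sample count $N_r|S_r|=O(\sum_{i\in S_r}\Delta_i^{-2})$ telescopes across phases into $O(H^{(t,\epsilon)})$ once the refinement stops at precision $\Psi_t^{\epsilon}$.

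The main procedure {\sc ImprovedTopK} (Algorithm \ref{alg:improved}) then calls $\estk$ with target precision $\Theta(\epsilon)$ and confidence $\delta/2$, and for each arm $i$ runs a Hoeffding-style adaptive test against $\hat\theta\pm\Theta(\epsilon)$, stopping as soon as the empirical mean is separated from $\hat\theta$ or after $O((\Psi_t^{\epsilon})^{-2}\log\delta^{-1})$ pulls, whichever comes first. Correctness is by case analysis: every accepted arm has $\theta_i\geq\theta_{K+1}-O(\epsilon)$ and every rejected arm has $\theta_i\leq\theta_K+O(\epsilon)$, and the flexibility provided by the definition of $t$---permitting up to $t$ swaps across the boundary without exceeding $\epsilon$-aggregate regret---guarantees $\mathcal{R}_T\leq\epsilon$. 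The per-arm cost $O(\min\{\Delta_i^{-2},(\Psi_t^{\epsilon})^{-2}\}\log\delta^{-1})$ sums to $O(H^{(t,\epsilon)}\log\delta^{-1})$, and together with phase (i) this yields $O((\log\epsilon^{-1}+\log\delta^{-1})H^{(t,\epsilon)})$.

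The hard part is the analysis of $\estk$. To avoid a union bound over all $n$ arms, I need the geometric confidence schedule $\{\delta_r\}$ and the per-phase accounting to combine cleanly so that $\sum_r N_r|S_r|$ charges each arm to essentially one phase---the one for which $2^{-r}\asymp\Delta_i$---and is thus absorbed by $H^{(t,\epsilon)}$ rather than multiplying it by $\log n$. This uses the definition of $t=t(\epsilon,K)$ in an essential way: arms clumped near the boundary with $\Delta_i<\epsilon$ cannot all be resolved, but the flexibility to swap up to $t$ of them absorbs the ambiguity into the flexible region without harming the aggregate regret. A secondary technical issue is controlling the randomness in the empirical $K_r$-th order statistic used as the intermediate threshold; this I would handle by a standard concentration argument for order statistics inside each phase, together with a careful bookkeeping of how many ``active'' arms can simultaneously be within $\Theta(2^{-r})$ of the true $K$-th mean.
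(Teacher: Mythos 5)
Your high-level intuitions point in the right direction (geometric confidence schedules, charging each arm to the round where $2^{-r}\asymp\Delta_i$, exploiting the $t$ allowed swaps), but the proposal has a genuine gap at exactly the point that makes this theorem hard. Your correctness claim in phase (ii) is that \emph{every} accepted arm satisfies $\theta_i\geq\theta_{K+1}-O(\epsilon)$ and \emph{every} rejected arm satisfies $\theta_i\leq\theta_K+O(\epsilon)$. A ``for every arm'' guarantee with overall confidence $1-\delta$ forces each per-arm test to succeed with probability $1-\delta/n$, which is precisely the union bound that reintroduces the $\log n$ factor you are trying to remove. The paper's resolution is different in kind, not just in bookkeeping: it deliberately allows individual classifications to fail with probability as large as $\Theta(\epsilon^2\delta)$, uses Markov's inequality to bound the \emph{number} of misclassified arms by $O(\epsilon^2 K)$ with probability $1-\delta$ (Lemmas~\ref{lem:elim}, \ref{lem:reverse-elim}, \ref{lem:misclassify}), and absorbs the resulting $O(\epsilon^2)$ aggregate regret into the $\epsilon$ budget. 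This ``allow a few mistakes and count them in expectation'' step is the essential idea behind trading $\log n$ for $\log\epsilon^{-1}$, and it is absent from your argument. (It also resolves the cardinality issue you do not address: the algorithm removes fixed-size batches rather than running $n$ independent threshold tests whose accept count is random.)

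Two further problems undermine the two-phase decoupling itself. First, your accounting for phase (i) does not close: running {\sc EstKthArm} on all $n$ arms down to precision $\Psi_t^{\epsilon}$ costs $\Omega\left(n(\Psi_t^{\epsilon})^{-2}\right)$, which can strictly exceed $H^{(t,\epsilon)}=\sum_i\min\{\Delta_i^{-2},(\Psi_t^{\epsilon})^{-2}\}$ --- for the $c$-spread instances of Lemma~\ref{lem:c-spread-bound} this is $\Theta(n/\epsilon)$ versus $O(n/\sqrt{\epsilon})$. The telescoping you invoke only works if large-gap arms are accepted or rejected \emph{before} the threshold is refined past their gaps, so the quantile estimation must be interleaved with the elimination (as in Algorithm~\ref{alg:improved}, where {\sc EstKthArm} is called only on the shrinking survivor set $S$); your own phase (i) description quietly does this, contradicting the claimed decoupling. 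Second, you assert the refinement ``stops at precision $\Psi_t^{\epsilon}$,'' but $t$ and $\Psi_t$ are unknown to the algorithm; deciding when to stop requires certifying lower bounds on the relevant gaps from data. The paper spends most of Algorithm~\ref{alg:improved} on exactly this: sandwiching $\theta_K$ between $\theta_K^+$ and $\theta_K^-$ (Lemma~\ref{lem:sandwich}), estimating the quantiles $\theta^{\pm}$ to verify the preconditions of {\sc Elim}/{\sc ReverseElim}, and escaping via {\sc EpsSplit} or {\sc OptMAI} when the tests fail. Without an explicit data-driven stopping rule and the Markov-based misclassification count, the proof does not go through.
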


Since $\Psi_t^\epsilon \geq \epsilon$ and $H^{(t, \epsilon)} \leq {n}\epsilon^{-2}$, the worst-case sample complexity by Theorem~\ref{thm:intro-improved} is $O \left(\frac{n}{\epsilon^2} \left(\log \eps^{-1} + \log  \delta^{-1} \right)  \right)$. While the asymptotically optimal instance-independent sample complexity is $\Theta\left(\frac{n}{\epsilon^2} \left(1+ \frac{\ln \delta^{-1}}{K}\right) \right)$, we show that the $\log \epsilon^{-1}$ factor in Theorem~\ref{thm:intro-improved} is necessary for instance-dependent algorithms using $H^{(t, \epsilon)}$ as a hardness parameter. In particular, we prove the following lower-bound result.

\begin{theorem}
\label{thm:intro-lb-eps}
For any $n, K$ such that $n = 2K$, and any $\epsilon = \Omega(n^{-1})$, there exists an instance on $n$ arms so that $H^{(t, \epsilon)} = \Theta(n)$ and it requires $\Omega(n \log \eps^{-1})$ pulls to identify a set of $\epsilon$-top-$K$ arms with probability at least $0.9$.
\end{theorem}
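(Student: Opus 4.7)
The plan is to exhibit a uniform-gap instance and reduce the aggregate-regret constraint to $K$ parallel two-arm best-arm identification subproblems, each of which should require $\Omega(\log \epsilon^{-1})$ pulls.

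\textbf{Construction.} I would take $n = 2K$ Bernoulli arms with $\theta_1 = \cdots = \theta_K = 3/4$ and $\theta_{K+1} = \cdots = \theta_{2K} = 1/4$, so that every gap equals $\Delta_i = 1/2$. Since $\epsilon = \Omega(n^{-1})$, $K\epsilon$ is bounded below by a positive constant, and unrolling the definition of $t(\epsilon,K)$ gives $t = \lfloor 2K\epsilon \rfloor$ (possibly $0$ when $\epsilon$ is at the low end, which is fine). In either case $\Psi_t = 1/2$ and $\Psi_t^\epsilon = \max(\epsilon, 1/2) = 1/2$, so $H^{(t,\epsilon)} = n/(1/2)^2 = 4n = \Theta(n)$, verifying the first half of the claim.

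\textbf{Reduction.} Pair the arms as $P_j = \{j, K+j\}$ for $j \in [K]$. Since all gaps equal $1/2$, one computes $\mathcal{R}_T = (1/2)\cdot|T \cap \{K+1,\ldots,2K\}|/K$, and hence $\mathcal{R}_T \leq \epsilon$ is equivalent to $T$ containing at most $m := \lfloor 2K\epsilon \rfloor = \Theta(K\epsilon)$ bottom arms. WLOG (by exchanging a doubly-covered pair with an uncovered one without increasing regret) each $P_j$ contributes exactly one arm to $T$, so success is equivalent to correctly identifying the top arm in at least $K - m$ of the $K$ pairs. I would then consider the family of $2^K$ swap-instances $\mu^S$ (obtained by flipping pair $j$ for each $j \in S$), put a uniform prior on $S$, and write $\delta_j$ for the marginal probability that the algorithm picks the wrong arm in pair $j$.

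\textbf{Per-pair lower bound and aggregation.} For each $j$, applying the Kaufmann--Cappé--Garivier change-of-measure inequality between $\mu^{\emptyset}$ and $\mu^{\{j\}}$ with event ``$j \in T$'' yields $E[N_j + N_{K+j}] \geq c \log(1/\delta_j)$, where $c$ depends only on $KL(3/4, 1/4)$. Symmetrizing the algorithm over pair-index permutations makes all $\delta_j$ equal to a single $\delta$. Because pair rewards are mutually independent, the number of wrong pairs is tightly concentrated around its mean $K\delta$; the success requirement $\Pr[\text{wrong} \leq m] \geq 0.9$ therefore forces $K\delta = O(m)$, i.e.\ $\delta = O(\epsilon)$. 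Summing the per-pair bound gives
\[
E[N] \;=\; \sum_{j=1}^K E[N_{P_j}] \;\geq\; cK \log(1/\delta) \;=\; \Omega\!\left(K \log \epsilon^{-1}\right) \;=\; \Omega\!\left(n \log \epsilon^{-1}\right),
\]
as claimed. (Equivalently, if one prefers not to symmetrize first, one can minimize $\sum_j \log(1/\delta_j)$ subject to $\sum_j \delta_j = O(K\epsilon)$ by Jensen's inequality, which is optimized at $\delta_j = \Theta(\epsilon)$ and gives the same bound.)

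\textbf{Main obstacle.} The delicate step is converting the aggregate success guarantee into the per-pair error bound $\delta = O(\epsilon)$: while pair rewards are independent across pairs, an adaptive algorithm's \emph{sampling schedule} can correlate the pair-wise error events. The cleanest resolution is to note that this correlation is introduced only through the shared schedule, not through the rewards themselves, and that (as is standard in the best-arm identification literature) adaptive sampling cannot beat the non-adaptive minimax complexity by more than a constant factor in the multi-task setting. An alternative route is to apply the change-of-measure inequality directly to a joint event such as ``$j \in T$ and $K+j \notin T$'' simultaneously for all $j$, and use the fact that $\sum_j \log(1/\delta_j)$ is minimized (under $\sum_j \delta_j \leq O(K\epsilon)$) when all $\delta_j$ are equal. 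Either route yields the same $\Omega(n \log \epsilon^{-1})$ bound.
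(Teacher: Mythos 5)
Your construction and the verification that $H^{(t,\epsilon)}=\Theta(n)$ are fine, and the per-pair change-of-measure bound $\mathbf{E}[N_{P_j}]\ge c\log(1/\delta_j)$ is correct. The genuine gap is the aggregation step, and it is more serious than the scheduling-correlation issue you flag. All a $0.9$-correct algorithm guarantees is $\Pr[W\le m]\ge 0.9$, where $W$ is the number of wrong pairs; on the complementary event of probability $0.1$ it may be wrong on \emph{every} pair. Hence even after symmetrization the marginal per-pair error is only bounded by $\delta \le 0.1 + m/K = 0.1+O(\epsilon)$, so $\log(1/\delta)=O(1)$ and your chain of inequalities yields only $\Omega(n)$, not $\Omega(n\log\epsilon^{-1})$. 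The claimed concentration of $W$ around $K\delta$ is simply not available: the $K$ error indicators are all functions of one shared transcript, and a legitimate algorithm can make them perfectly correlated (pull each arm $O(1)$ times and, with probability $0.1$ over its internal randomness, output the complement set, thereby failing on all pairs simultaneously while still being $0.9$-correct). Neither proposed fix addresses this: non-adaptive minimaxity is beside the point (the problem lies in the decision rule, not the sampling schedule), and the constraint $\sum_j\delta_j=O(K\epsilon)$ in your Jensen step is exactly the false statement.

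The paper circumvents this by embedding a \emph{single} unknown coin at a uniformly random position $I$ among $n-1$ coins whose values the reduction itself chose and therefore knows exactly. This lets the reduction \emph{verify} the returned set $S'$ (Claim~\ref{cla:verification}) and abstain when verification fails, so the induced single-coin subproblem is one in which abstention is allowed with probability up to $0.9$ but, conditioned on answering, the error must be $O(\epsilon)$ (Claim~\ref{cla:embed}). The $\log\epsilon^{-1}$ factor is then extracted from a lower bound for \emph{that} abstention problem (Theorem~\ref{thm:two-point}, proved via Feller-type anti-concentration; a change-of-measure estimate of the form $d(0.09,O(\epsilon))=\Omega(\log\epsilon^{-1})$ would also serve). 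To repair your all-pairs formulation you would need an analogous verification/abstention mechanism so that the \emph{conditional} (not marginal) per-pair error is $O(\epsilon)$.
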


Note that since $H^{(t, \eps)} = \Theta(n)$ in our lower bound instances, our Theorem~\ref{thm:intro-lb-eps} shows that the sample complexity has to be at least $\Omega(H^{(t, \eps)} \log \eps^{-1})$ in these instances. In other words, our lower bound result shows that for any instance-dependent algorithm, and any $\epsilon = \Omega(n^{-1})$, there exists an instance where sample complexity has to be $\Omega(H^{(t, \eps)} \log \eps^{-1})$. While Theorem~\ref{thm:intro-lb-eps} shows the necessity of the $\log \epsilon^{-1}$ factor in Theorem~\ref{thm:intro-improved}, it is not a lower bound for \emph{every} instance of the problem.

%

\subsection{Review of and Comparison with Related Works}
\label{sec:related}

The problem of identifying the single best arm (i.e. the top-K arms with $K=1$), has been studied extensively \citep{Even-Dar:02,Mannor:04,Audibert:10,Gabillon:11,Gabillon:12, Karnin:13,Jamieson:14,Kaufmann:16,Russo:16,ChenLiQiao:16}.  More specifically, in the special case when $K = 1$, our problem reduces to identifying an $\epsilon$-best arm, i.e. an arm whose expected reward is different from the best arm by an additive error of at most $\epsilon$, with probability at least $(1 - \delta)$. For this problem, \cite{Even-Dar:06} showed an algorithm with an instance-independent sample complexity $O\left(\frac{n}{\epsilon^2} \log \delta^{-1}\right)$ (and this was proved to be asymptotically optimal by \cite{Mannor:04}). An instance-dependent  algorithm for this problem was  given by \cite{Bubeck:13} and an improved algorithm was given by \cite{Karnin:13} with an instance-dependent sample complexity of $O\left(\sum_{i=2}^{n} \max\{\Delta_i, \epsilon\}^{-2} (\log \delta^{-1} + \log\log \max\{\Delta_i, \epsilon\}^{-1})\right) $. In the worst case, this bound becomes $O\left(\frac{n}{\epsilon^2} (\log \delta^{-1} + \log\log \epsilon^{-1}) \right)$, almost matching the instance-independent bound in \cite{Even-Dar:06}. When $K=1$, we have $t(\epsilon,K)=0$ and thus $H^{(t, \epsilon)} = H^{(0, \epsilon)} = \Theta\left(\sum_{i=2}^{n} \max\{\Delta_i, \epsilon\}^{-2}\right)$. Therefore, the sample complexity in our Theorem~\ref{thm:intro-improved} becomes $O( (\log \epsilon^{-1} + \log \delta^{-1}) H)= O\left(\frac{n}{\epsilon^2} (\log \epsilon^{-1} + \log \delta^{-1})\right)$ in the worst-case, almost matching the bound by \cite{Karnin:13}.

For the problem of identifying top-$K$ arms with $K > 1$, different notions of $\epsilon$-optimal solution have been proposed. One popular metric is the misidentification probability ({\sc MisProb}), i.e. $\Pr(T \neq \{1,\ldots, K\})$. In the PAC setting (i.e. controlling {\sc MisProb} less  than $\epsilon$ with probability at least $1-\delta$), many algorithms have been developed recently, e.g., \cite{Bubeck:13} in the fixed budget setting and \cite{Chen-Lin-King-Lyu-Chen-13} for both fixed confidence and fixed budget settings. \cite{Gabillon:16} further improved the sample complexity in \cite{Chen-Lin-King-Lyu-Chen-13}; however the current implementations of their algorithm have an exponential running time. As argued in \cite{Zhou:14}, the {\sc MisProb} requires to identify the exact top-$K$ arms, which might be too stringent for some applications (e.g., crowdsourcing). The {\sc MisProb} requires a certain gap between $\theta_K$ and $\theta_{K+1}$ to identify the top-$K$ arms, and this requirement is not unnecessary when using the aggregate regret. As shown in \cite{Zhou:14}, when the gap of any consecutive pair between $\theta_i$ and $\theta_{i+1}$ among the first $2K$ arms is $o(1/n)$, the sample complexity has to be huge ($\omega(n^2)$) to make the {\sc MisProb} less than $\epsilon$, while any $K$ arms among the first $2K$ form a desirably set of  $\epsilon$-top-$K$ arms in terms of aggregate regret. Therefore, we follow \cite{Zhou:14} and adopt the aggregate regret to define the approximate solution in this paper.

\cite{Kalyanakrishnan:12} proposed the so-called {\sc Explore-$K$} metric, which requires for each arm $i$ in the selected set $T$ to satisfy $\theta_i \geq \theta_K-\eps$ , where $\theta_K$ is the mean of the $K$-th best arm.  \cite{Cao:15} proposed a more restrictive notion of optimality---{\sc Elementwise-$\epsilon$-Optimal}, which requires the mean reward of the $i$-th best arm in the selected set $T$ be at least $\theta_i -\epsilon$ for $1 \leq i \leq K$. It is clear that the {\sc Elementwise-$\epsilon$-Optimal} is a stronger guarantee than    our $\epsilon$-top-$K$ in regret, while the latter is stronger than {\sc Explore-$K$}. \cite{Chen:16:COLT} further extended \cite{Cao:15} to pure exploration problems under matroid constraints. \cite{Audibert:10} and \cite{Bubeck:13} considered expected aggregate regret (i.e.  $\frac{1}{K} \left( \sum_{i=1}^K \theta_i - \bE\left(\sum_{i \in T} \theta_i \right) \right)$, where the expectation is taken over the randomness of the algorithm. Note that this notion of expected aggregate regret is a weaker objective than the aggregate regret.

Moreover, there are some other recent works studying the problem of best-arm identification in different setups, e.g., linear contextual bandit \citep{Soare:14}, batch arm pulls \citep{Jun:16}.

For our $\eps$-top-$K$ arm problem, the state-of-the-art instance-dependent sample complexity was given by \cite{Chen-Lin-King-Lyu-Chen-13} (see Section B.2 in Appendix of their paper). More specifically,  \cite{Chen-Lin-King-Lyu-Chen-13} proposed CLUCB-PAC algorithms that finds \arm\ with probability at least $(1-\delta)$ using $O\left( \left(\log \delta^{-1} + \log H^{(0, \epsilon)}\right) H^{(0, \epsilon)}\right)$ pulls.
Since we always have $H^{(0, \epsilon)} \geq H^{(t, \epsilon)} \geq \Omega(n)$ and $H^{(0, \epsilon)} \geq (\Psi_t^{\epsilon})^{-2}$, our Theorem~\ref{thm:intro-basic} is not worse than the bound in \cite{Chen-Lin-King-Lyu-Chen-13}. Indeed, in many common settings, $H^{(t, \epsilon)}$ can be much smaller than $H^{(0, \epsilon)}$ so that Theorem~\ref{thm:intro-basic} (and therefore Theorem~\ref{thm:intro-improved}) requires much less sample complexity. We explain this argument in more details as follows.

In many real-world applications, it is common to assume the arms $\theta_i$ are sampled from a prior distribution $\mathcal{D}$ over $[0, 1]$ with cumulative distribution function $F_\mathcal{D}(\theta)$. In fact, this is the most fundamental assumption in Bayesian multi-armed bandit literature (e.g., best-arm identification in Bayesian setup \cite{Russo:16}). In crowdsourcing applications, \cite{Chen15JMLR} and \cite{Abbasi15} also made this assumption for modeling workers' accuracy, which correspond to the expected rewards. Under this assumption, it is natural to let $\theta_i$ be the $(1-\frac{i}{n})$ quantile of the distribution $\mathcal{D}$, i.e. $F_{\mathcal{D}}^{-1}(1-\frac{i}{n})$. If the prior distribution $\mathcal{D}$'s probability density function $f_\mathcal{D} =  \frac{\mathrm{d} F_{\mathcal{D}} }{\mathrm{d} \theta}$ has bounded value (a few common examples include uniform distribution over $[0, 1]$, Beta distribution, or the truncated Gaussian distribution), the arms' mean rewards $\{\theta_i\}_{i=1}^n$ can be characterized by the following property with $c = O(1)$.

\begin{definition}
We call a set of $n$ arms $\theta_1 \geq \theta_2 \geq \dots \geq \theta_n$ \emph{$c$-spread} (for some $c \geq 1$) if for all $i, j \in [n]$ we have $|\theta_i - \theta_j| \in \left[\frac{|i-j|}{cn}, \frac{c |i-j|}{n} \right]$.
\end{definition}

The following lemma upper-bounds $H^{(t, \epsilon)}$ for $O(1)$-spread arms, and shows the improvement of our algorithms compared to \cite{Chen-Lin-King-Lyu-Chen-13} on $O(1)$-spread arms. 

\begin{lemma}
\label{lem:c-spread-bound}
Given a set of $n$ $c$-spread arms, let $K = \gamma n \leq \frac{n}{2}$. When $c = O(1)$ and $\gamma = \Omega(1)$, we have $H^{(t, \epsilon)} = O(n/\sqrt{\epsilon})$. In contrast, $H^{(0, \epsilon)} = \Omega(n/\epsilon)$ for $O(1)$-spread arms and every $K \in [n]$.
\end{lemma}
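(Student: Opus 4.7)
The proof plan is a direct computation starting from the $c$-spread bounds on the gaps. Under the $c$-spread assumption, for $i\leq K$ we have
\[
\Delta_i = \theta_i - \theta_{K+1} \in \left[\tfrac{K+1-i}{cn},\, \tfrac{c(K+1-i)}{n}\right],
\]
and symmetrically $\Delta_i \in [(i-K)/(cn),\, c(i-K)/n]$ for $i\geq K+1$. Writing $d_i$ for the distance of index $i$ to the boundary $\{K,K+1\}$, each gap satisfies $\Delta_i = \Theta(d_i/n)$ with constants depending only on $c$. The first step is to pin down $t=t(\epsilon,K)$: plugging $\Delta_{K-t}=\Theta((t+1)/n)$ into the defining inequality $\Delta_{K-t}\cdot t\leq K\epsilon=\gamma n\epsilon$ yields $t(t+1)=\Theta(n^2\epsilon)$, and the condition on $\Delta_{K+t+1}$ gives the same, so $t=\Theta(n\sqrt{\epsilon})$. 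I will tacitly assume $\epsilon$ is small enough that this $t$ does not exceed $K-1=\gamma n-1$ (otherwise $H^{(t,\epsilon)}$ only shrinks and the bound is trivial). Consequently $\Psi_t=\min(\Delta_{K-t},\Delta_{K+t+1})=\Theta(\sqrt{\epsilon})$, which dominates $\epsilon$, so $\Psi_t^{\epsilon}=\Theta(\sqrt{\epsilon})$ and $(\Psi_t^{\epsilon})^{-2}=\Theta(1/\epsilon)$.

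Next I would split the sum defining $H^{(t,\epsilon)}=\sum_i\min\{\Delta_i^{-2},(\Psi_t^\epsilon)^{-2}\}$ at the threshold $d_i\asymp n\sqrt{\epsilon}$. Near arms ($d_i\lesssim n\sqrt{\epsilon}$) number $O(n\sqrt{\epsilon})$ and each contributes at most $(\Psi_t^\epsilon)^{-2}=O(1/\epsilon)$, for a total of $O(n/\sqrt{\epsilon})$. For the far arms ($d_i\gtrsim n\sqrt{\epsilon}$) we use $\Delta_i^{-2}=O(n^2/d_i^2)$, and the tail sum $\sum_{d\gtrsim n\sqrt{\epsilon}} n^2/d^2 = O(n^2/(n\sqrt{\epsilon})) = O(n/\sqrt{\epsilon})$ matches. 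Adding the contributions from both sides of the boundary delivers the claimed $H^{(t,\epsilon)}=O(n/\sqrt{\epsilon})$.

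For the matching lower bound $H^{(0,\epsilon)}=\Omega(n/\epsilon)$, note that $t=0$ forces $\Psi_0^\epsilon=\epsilon$. By the $c$-spread upper bound $\Delta_i\leq c d_i/n$, every arm with $d_i\leq n\epsilon/c$ satisfies $\Delta_i\leq\epsilon$, and each such arm contributes $\min\{\Delta_i^{-2},\epsilon^{-2}\}=\epsilon^{-2}$. Whenever $\min(K,n-K)\geq n\epsilon/c$ (i.e., in any non-degenerate regime) there are $\Omega(n\epsilon)$ such arms, giving $H^{(0,\epsilon)}\geq \Omega(n\epsilon)\cdot\epsilon^{-2}=\Omega(n/\epsilon)$, as desired.

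I do not anticipate any substantive obstacle: once the two-sided $c$-spread estimate $\Delta_i=\Theta(d_i/n)$ is in hand, the rest is arithmetic. The only points requiring care are the boundary conditions—checking that $t=\Theta(n\sqrt{\epsilon})$ does not exceed $K-1$, and that there are enough arms on each side of the boundary to populate the $\Omega(n\epsilon)$ near-boundary regime—both of which are easily disposed of in the regime of interest (small $\epsilon$ and $\gamma=\Omega(1)$).
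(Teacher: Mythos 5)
Your proposal is correct and follows essentially the same route as the paper's proof: both use the two-sided $c$-spread bounds to get $\Delta_i = \Theta(d_i/n)$, deduce $t = \Theta(n\sqrt{\gamma\epsilon})$ and $\Psi_t^{\epsilon} = \Theta(\sqrt{\epsilon})$ from the defining inequality, split the sum for $H^{(t,\epsilon)}$ at the threshold $d_i \asymp n\sqrt{\epsilon}$ (yielding $O(n/\sqrt{\epsilon})$ from each part), and lower-bound $H^{(0,\epsilon)}$ by the $\Omega(n\epsilon)$ near-boundary arms each contributing $\epsilon^{-2}$. The only cosmetic difference is that you phrase the lower bound via $\min(K,n-K)$ where using whichever side of the boundary has $\Omega(n)$ arms suffices (as the paper does, implicitly for $K \le n/2$); both arguments share the same implicit assumption that $\epsilon = \Omega(1/n)$.
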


\begin{proof}
Given a set of $n$ $c$-spread arms, we have $\frac{t+1}{c n} \leq \Delta_{K-t}  \leq \frac{c(t+1)}{n}$ and  $\frac{t+1}{c n} \leq \Delta_{K+t+1}  \leq \frac{c(t+1)}{n}$. Therefore $t = t(\epsilon, K) \in [\sqrt{Kn\epsilon/c}, \sqrt{cKn\epsilon} - 1] =  [\sqrt{\gamma\epsilon/c} n, \sqrt{c\gamma\epsilon} n - 1]$, and $\Psi_t \geq \frac{t+1}{cn} \geq \sqrt{\gamma \epsilon/c^3}$. Therefore
\begin{align*}
H^{(t, \epsilon)} &\leq O(1) \sum_{i= 1}^{n} \min\left\{ \frac{c i}{n}, \Psi_t \right\}^{-2}  \leq O\left(t \cdot \Psi_t^{-2} + \sum_{i = t+1}^{n} \left(\frac{i}{cn}\right)^{-2}\right) \\
&= O\left(t \cdot \Psi_t^{-2} +c^2 n^2 / t\right)  = O(\sqrt{c\gamma\epsilon} n) \cdot \frac{c^3 }{\gamma \epsilon} + O\left(\frac{c^2n}{ \sqrt{\gamma \epsilon/c}}\right) = O(c^{3.5}\gamma^{-0.5}) \cdot \frac{n}{\sqrt{\epsilon}} .
\end{align*}
One the other hand, we have
\[
H^{(0, \eps)} \geq \sum_{i = 1}^{n - K} \min\{\Delta_{i+K}^{-2}, \eps^{-2}\} \geq \sum_{i = 1}^{n/2} \min\left\{\frac{n^2}{c^2 i^2}, \eps^{-2}\right\} = \sum_{i=1}^{[\epsilon n/c]} \eps^{-2} + \sum_{[\epsilon n/c] + 1}^{n/2} \frac{n^2}{c^2i^2} = \Omega\left(\frac{n}{c\epsilon}\right).
\]
\end{proof}

%



\section{An Instance Dependent Algorithm for $\epsilon$-top-$K$ Arms}
\label{sec:basic}

\begin{algorithm}[t]
\KwIn{$n$: number of arms; $K$ and $\eps$: parameters in $\eps$-top-$K$ arms; $\delta$: error probability}
\KwOut{\arm}
\DontPrintSemicolon
Let $r$ denote the current round, initialized to be $0$. Let $S_r \subseteq [n]$ denote the set of candidate arms at round $r$. $S_1$ is initialized to be $[n]$.
 Set $A, B \gets \emptyset$\;
$\gap \gets 2^{-r}$\; 
\While{$2 \cdot \gap \cdot (K - |A|) > \eps K$}{
  $r \gets r + 1$\; 
  Pull each arm in $S_r$ by $\Delta^{-2}\ln \frac{2 nr^2}{\delta}$ times, and let $\ttheta_i^r$ be the empirical-mean\;\label{line:delta-used}
  Define $\ttheta_a(S_r)$ and $\ttheta_{b}(S_r)$ be the $(K - |A| + 1)$th and $(K-|A|)$th largest empirical-means in $S_r$, and define
  \begin{equation}
    \label{eq:tDelta}
    \tDelta_i(S_r) = \max\left(\ttheta_i^r - \ttheta_{a}(S_r), \ttheta_{b}(S_r) - \ttheta_i^r\right)
  \end{equation}

  \While{$\max_{i\in S_r}\tDelta_i(S_r) > 2 \cdot \gap$}{ \label{line:while-start}
    $x \gets \arg\max_{i\in S_r} \tDelta_i(S_r)$\;
    \If{$\ttheta_{x}^r > \ttheta_a(S_r)$}{
      $A \gets A \cup \{x\}$\; \label{line:accept}
    }\Else{
      $B \gets B \cup \{x\}$\;
    }
    $S_r \gets S_r\backslash \{x\}$\;
  }
  $S_{r+1} \gets S_r$\;
  $\gap \gets 2^{-r}$\;
} \label{line:while-end}
Set $A'$ as the $(K - |A|)$ arms with the largest empirical-means in $S_{r+1}$\; \label{line:final}
\Return{$A\cup A'$}
\caption{{\sc AdaptiveTopK}$(n, \eps, K, \delta)$}
\label{alg:basic}
\end{algorithm}

In this section, we show Theorem~\ref{thm:intro-basic} by proving the following theorem.
\begin{theorem}
  \label{thm:basic}

Algorithm~\ref{alg:basic} computes \arm\ with probability at least $1 - \delta$, and pulls the arms at most
\[
O\left(\left(\log\log (\Delta_t^{\eps})^{-1} + \log n + \log\delta^{-1}\right)\sum_{i=1}^{n} \min\{(\Delta_i)^{-2}, (\Delta_t^\eps)^{-2}\}  \right)
\]
times, where $t\in\{0, 1, 2, \dots, K-1\}$ is the largest integer satisfying $\Delta_{K - t} \cdot t \leq K\eps$, and $\Delta_t^{\eps} = \max(\eps, \Delta_{K-t})$.
\end{theorem}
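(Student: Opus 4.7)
My plan is to follow the standard template of adaptive elimination analyses, but with extra care for the aggregate-regret notion of approximation (as opposed to the usual exact-identification or per-arm guarantee).

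\textbf{Step 1 (Good event).} Define $\mathcal{E}$ to be the event that for every round $r$ and every arm $i \in S_r$, the empirical mean $\tilde{\theta}_i^r$ satisfies $|\tilde{\theta}_i^r - \theta_i| \leq \Delta/4$ where $\Delta = 2^{-r}$. Since at round $r$ each arm in $S_r$ is pulled $\Delta^{-2}\ln(2nr^2/\delta)$ times, Hoeffding's inequality gives a per-arm, per-round failure probability bounded by $\delta/(nr^2)$, and summing over $i\in[n]$ and $r \geq 1$ yields $\Pr[\mathcal{E}^c] \leq \delta$ (using $\sum_r 1/r^2 \leq 2$). All subsequent reasoning conditions on $\mathcal{E}$.

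\textbf{Step 2 (Correctness).} The goal is to show $\mathcal{R}_{A\cup A'} \leq \epsilon$. I will maintain the following invariant across rounds: every $x \in A$ satisfies $\theta_x \geq \theta_{|A|} - O(\Delta_r)$ measured against the true top means still unassigned, and every $x \in B$ satisfies $\theta_x \leq \theta_{K+1} + O(\Delta_r)$; the proof is by induction over accept/reject events using the gap $\tilde{\Delta}_i(S_r) > 2\Delta$ that triggered them, combined with the $\Delta/4$ envelope from $\mathcal{E}$. Once the outer \textbf{while} terminates with $2\Delta(K-|A|) \leq \epsilon K$, the ``ambiguous'' remaining slots number $K - |A|$ and the remaining per-arm uncertainty is $2\Delta$; I will charge the regret from the final greedy fill $A'$ to these two quantities, which gives at most $2\Delta(K - |A|) \leq K\epsilon$ in aggregate. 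Subadditivity with the accumulated regret from accepts/rejects — which by the invariant is itself small because each discrepancy was accompanied by a corresponding ``surplus'' arm of matching quality — completes the bound $\mathcal{R}_{A\cup A'} \leq \epsilon$. The definition of $t$ enters here as a bookkeeping tool: any arm $i$ with $\Delta_i > \Delta_t^\epsilon$ can be shown to have exited $S_r$ before the loop could terminate, so the ``mistakes'' come only from arms whose true mean is within $\Delta_t^\epsilon$ of the boundary, and the termination condition bounds how many such mistakes we can afford.

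\textbf{Step 3 (Sample complexity).} For each arm $i$, bound the last round $r_i$ that $i \in S_r$ as $r_i = O(\log(1/\max(\Delta_i,\Delta_t^\epsilon)))$: under $\mathcal{E}$, once $\Delta_r \ll \Delta_i$ the empirical gap $\tilde{\Delta}_i(S_r)$ exceeds $2\Delta_r$ and $i$ is removed in the inner loop; and the outer loop terminates once $\Delta_r = O(\Delta_t^\epsilon)$ by the definition of $t$ and the invariant on $|A|$. The total pulls on arm $i$ telescope to $O(\Delta_{r_i}^{-2}\ln(nr_i^2/\delta))$, which is $O(\min\{\Delta_i^{-2},(\Delta_t^\epsilon)^{-2}\}\cdot(\log n + \log\log(\Delta_t^\epsilon)^{-1} + \log \delta^{-1}))$. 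Summing over $i$ gives the theorem.

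\textbf{Main obstacle.} The delicate part is Step 2 — specifically, translating the individual accept/reject triggers (which are phrased relative to the evolving rank statistics $\tilde{\theta}_a(S_r),\tilde{\theta}_b(S_r)$ inside the current $S_r$) into a global statement about the aggregate regret in the original ranking $\theta_1 \geq \cdots \geq \theta_n$. The empirical rank inside $S_r$ is not the true rank in $[n]$, so one must carefully swap in true means and use the $\Delta/4$ envelope to argue that each wrongly-accepted arm in $A$ is paired with a wrongly-rejected-or-remaining arm of comparable mean. The termination condition $2\Delta(K-|A|) \leq K\epsilon$ and the definition of $t$ are precisely tailored to make this pairing yield regret at most $\epsilon$ per unit $K$, and getting the constants to line up through the inner/outer loop interaction is where the bulk of the work lies.
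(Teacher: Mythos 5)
Your skeleton—the good event via Hoeffding plus a union bound over rounds, the per-arm exit round $r_i = O\left(\log \frac{1}{\max\{\Delta_i, \Delta_t^{\eps}\}}\right)$, the telescoping pull count $O(2^{2r_i}\log(nr_i^2/\delta))$, and the treatment of the final fill $A'$ by matching sorted empirical ranks to sorted true ranks and charging $2\cdot 2^{-r^*}(K-|A|)\leq \eps K$ to the termination condition—coincides with the paper's argument. The genuine gap is in your Step 2. You posit that arms in $A$ and $B$ may be misclassified, each by $O(2^{-r})$ at the round of its classification, and plan to absorb the resulting regret with a pairing argument (``each discrepancy was accompanied by a corresponding surplus arm of matching quality''). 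The paper's key observation, which your plan is missing, is that under the good event no slack is needed at all: the acceptance trigger $\tilde{\theta}_x^r - \tilde{\theta}_a(S_r) > 2\cdot 2^{-r}$, combined with the rank-preservation lemma (Lemma~\ref{lem:delta-order}) applied to the $(K-|A|+1)$-th empirical order statistic, shows that $\theta_x$ strictly exceeds the true means of at least $|S_r|-(K-|A|)$ arms of $S_r$; by induction exactly that many arms of $S_r$ lie outside $[K]$, so $x\in[K]$. Hence $A\subseteq[K]$ and, symmetrically, $B\cap[K]=\emptyset$ \emph{exactly} (Lemmas~\ref{lem:accept} and~\ref{lem:reject}), so $A$ and $B$ contribute zero regret and the entire $\eps K$ budget is spent on $A'$. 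Your version instead must bound a sum of per-arm errors $\sum_x O(2^{-r_x})$ over all accepted arms, which has no a priori relation to $\eps K$ (early acceptances carry constant-size slack), and the exchange argument that would rescue it is asserted rather than carried out. Relatedly, you place the definition of $t$ inside the correctness proof; in the paper $t$ plays no role in correctness and appears only in the complexity analysis (Lemma~\ref{lem:r-star}), where it certifies that once $2^{-r} < \Delta_{K-t}/4$ all arms in $[K-t]$ have been accepted and the outer loop must terminate.

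A smaller but concrete error: with the pull count $\Delta^{-2}\ln(2nr^2/\delta)$ as written in the algorithm, Hoeffding gives the envelope $|\tilde{\theta}_i^r-\theta_i| < 2^{-r}$ with per-(arm, round) failure probability $2(\delta/(2nr^2))^2$, which union-bounds to $\delta$. Your tighter envelope $2^{-r}/4$ yields only the exponent $\tfrac{1}{8}\ln(2nr^2/\delta)$, i.e.\ failure probability $2(\delta/(2nr^2))^{1/8}$, and the union bound does not close without inflating the pull count by a constant factor. This is fixable, but as stated your Step 1 does not follow from the algorithm's sampling schedule.
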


Note that Theorem~\ref{thm:basic} implies Theorem~\ref{thm:intro-basic} because of the following reasons: 1) $t$ defined in Theorem~\ref{thm:basic} is always at least $t(\epsilon, K)$ defined in \eqref{eq:def-t-eps-K}; and 2) $\Delta_t^{\epsilon} \geq \Psi_t^{\epsilon} \geq \epsilon$.

Algorithm~\ref{alg:basic} is similar to the accept-reject types of algorithms in e.g.\ \cite{Bubeck:13}. The algorithm goes by rounds for $r = 1, 2, 3, \dots$, and keeps at set of undecided arms $S_r \subseteq [n]$ at Round $r$. All other arms (in $[n] \setminus S_r$) are either accepted (in $A$) or rejected (in $B$). At each round, all undecided arms are pulled by equal number of times. This number is designed in a way such that the event $\mathcal{E}$, defined to be the empirical means of all arms within a small neighborhood of their true means, happens with probability $1-\delta$ (See Definition~\ref{def:all-happen} and Claim~\ref{cla:all-happen}). Note that $\mathcal{E}$ is defined for all rounds and the length of the neighborhood becomes smaller as the algorithm proceeds. We are able to prove that when $\mathcal{E}$ happens, the algorithm returns the desired set of \arm and has small query complexity.

To prove the correctness of the algorithm, we first show that when conditioning on $\mathcal{E}$, the algorithm always accepts a top-$K$ arm in $A$ (Lemma~\ref{lem:accept}) and rejects a non-top-$K$ arm in $B$ (Lemma~\ref{lem:reject}). The key observation here is that our algorithm never introduces any regret due to arms in $A$ and $B$. We then use the key Lemma~\ref{lem:tDelta} to upper bound the regret that may be introduced due to the remaining arms. Once this upper bound is not more than $\epsilon K$ (i.e. the total budget for regret), we can choose the remaining $(K-|A|)$ arms without further samplings. Details about this analysis can be found in Section~\ref{sec:correctness-basic}.

We analyze of the query complexity of our algorithm in Section~\ref{sec:complexity-basic}. We establish data-dependent bound by relating the number of pulls to each arms to both their $\Delta_i$'s and $\Delta_{K-t}$ (Lemma~\ref{lem:r-star} and Lemma~\ref{lem:r-i}).

\subsection{Correctness of Algorithm~\ref{alg:basic}}
\label{sec:correctness-basic}

We first define an event $\mathcal{E}$ which we will condition on in the rest of the analysis.
\begin{definition}
  \label{def:all-happen}
  Let $\mathcal{E}$ be the event that $|\ttheta_i^r - \theta_i| < 2^{-r}$ for all $r \geq 1$ and $i\in S_r$.
\end{definition}

\begin{claim}
  \label{cla:all-happen}
  $\Pr[\mathcal{E}] \geq 1 - \delta$.
\end{claim}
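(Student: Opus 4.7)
The plan is to prove this via a union bound over rounds and arms, combined with a Hoeffding-style tail inequality for each individual empirical mean. Since each arm's rewards are i.i.d.\ in $[0,1]$, the deviation of its empirical mean from its true mean concentrates exponentially in the number of samples.

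First I would fix a round $r \geq 1$ and an arm $i \in S_r$. At round $r$ the algorithm pulls arm $i$ a total of $N_r = \Delta^{-2} \ln(2nr^2/\delta)$ times, where $\Delta = 2^{-r}$ is the threshold used for that round's accept/reject decisions. By Hoeffding's inequality applied to the $N_r$ independent draws from $\mathcal{D}_i$, I would bound
\[
\Pr\!\left[\,|\ttheta_i^r - \theta_i| \geq 2^{-r}\,\right] \;\leq\; 2\exp\!\bigl(-2 N_r \cdot 2^{-2r}\bigr) \;\leq\; \frac{\delta}{c\,n r^2}
\]
for a sufficiently large constant $c$ (the exact constant follows by plugging $N_r = 4^r \ln(2nr^2/\delta)$ into the Hoeffding exponent). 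This is the core calculation and it is essentially mechanical once the pull-count schedule is written down correctly.

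Next I would take a union bound. For a fixed $r$, since $|S_r| \leq n$, the probability that some arm $i \in S_r$ has $|\ttheta_i^r - \theta_i| \geq 2^{-r}$ is at most $n \cdot \frac{\delta}{c n r^2} = \frac{\delta}{c r^2}$. Summing over $r \geq 1$ and using $\sum_{r \geq 1} r^{-2} = \pi^2/6$, the total failure probability is at most $\frac{\pi^2}{6c}\,\delta \leq \delta$ for an appropriate choice of $c$. Complementing gives $\Pr[\mathcal{E}] \geq 1 - \delta$.

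I do not anticipate any real obstacle here: the argument is a textbook Hoeffding-plus-union-bound, and the $r^2$ factor in the confidence parameter $\ln(2nr^2/\delta)$ is precisely what is needed to make the sum over rounds converge. The only subtlety worth flagging is that the event $\mathcal{E}$ concerns arms \emph{in} $S_r$, and $S_r$ is itself a random set; however, since we union-bound over all $n$ arms regardless of whether they remain active, this dependence causes no issue. It is also worth noting that the number of rounds is in principle unbounded in the analysis (the algorithm eventually terminates because of the outer while condition, but this is not needed for the concentration bound itself); the convergence of $\sum_r r^{-2}$ handles the tail uniformly.
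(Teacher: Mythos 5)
Your proof is correct and follows essentially the same route as the paper's: a Hoeffding bound for each fixed round $r$ and arm $i$ (using the pull count $2^{2r}\ln(2nr^2/\delta)$ to get a per-pair failure probability of order $\delta/(nr^2)$), followed by a union bound over the at most $n$ arms and all rounds, with the $r^{-2}$ factor making the sum over rounds converge. The only cosmetic difference is that you phrase the constant as "sufficiently large $c$" whereas the paper computes it explicitly as $2(\delta/(2nr^2))^2 \leq \delta/(2nr^2)$, which already sums to at most $\delta$.
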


\begin{proof}
By Hoeffding's inequality, we can show that for any fixed $r$ and $i$, $\Pr\left[|\ttheta_i^r - \theta_i| \geq 2^{-r}\right] \leq  2(\frac{\delta}{2nr^2})^2 \leq \frac{\delta}{2nr^2}$. By a union bound,
\[
    \Pr[\neg \mathcal{E}] \leq \sum_{r=1}^{\infty}\sum_{i\in S_r}\Pr \left[|\ttheta_i^r - \theta_i| \geq 2^{-r}\right] \leq     \sum_{r=1}^{\infty}\frac{\delta}{2r^2}
    \leq \delta.
\]
\end{proof}



The following lemma will be a very useful tool for our analysis.

\begin{lemma}
  \label{lem:delta-order}
Given $\mu_1 \geq \ldots \geq \mu_n$ and $\Delta > 0$, assuming that $|\tmu_i - \mu_i| \leq \Delta$ for all $i \in [n]$, and letting $y_1 \geq \ldots \geq y_n$ be the sorted version of $\tmu_1, \ldots, \tmu_n$, we have $|y_i - \mu_i| \leq \Delta~~\text{for all}~ i \in [n]$.
\end{lemma}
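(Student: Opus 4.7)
The plan is to prove the two inequalities $y_i \leq \mu_i + \Delta$ and $y_i \geq \mu_i - \Delta$ separately via a simple counting argument that exploits the monotonicity of $\mu_1 \geq \cdots \geq \mu_n$. The intuition is that a $\tmu_j$ can only land far above $\mu_i$ if $\mu_j$ itself was already above $\mu_i$, and the sortedness of $\mu$ forces such indices $j$ to lie to the left of $i$, hence there cannot be too many of them.

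For the upper bound, I would consider the set $U = \{ j : \tmu_j > \mu_i + \Delta \}$. For any $j \in U$, the hypothesis $|\tmu_j - \mu_j| \leq \Delta$ gives $\mu_j \geq \tmu_j - \Delta > \mu_i$; combined with $\mu_1 \geq \cdots \geq \mu_n$, this forces $j \leq i-1$, so $|U| \leq i-1$. Therefore strictly fewer than $i$ of the $\tmu_j$'s exceed $\mu_i + \Delta$, which by definition of the $i$-th largest element implies $y_i \leq \mu_i + \Delta$.

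For the lower bound, I would symmetrically consider $L = \{ j : \tmu_j < \mu_i - \Delta \}$. For any $j \in L$, $\mu_j \leq \tmu_j + \Delta < \mu_i$, which forces $j \geq i+1$ by monotonicity, so $|L| \leq n - i$. Consequently at least $i$ of the $\tmu_j$'s satisfy $\tmu_j \geq \mu_i - \Delta$, so the $i$-th largest $y_i$ is at least $\mu_i - \Delta$. Combining the two bounds yields $|y_i - \mu_i| \leq \Delta$.

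There is no real obstacle here; the only subtlety is being careful that the strict inequality in the definitions of $U$ and $L$ lines up correctly with the (possibly non-strict) ordering of the $\mu_j$'s, so that $\mu_j > \mu_i$ really does force $j \leq i-1$ and $\mu_j < \mu_i$ really does force $j \geq i+1$. Both of these are immediate consequences of the contrapositive of the sortedness assumption, so the full proof is a short two-paragraph counting argument.
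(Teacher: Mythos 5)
Your proof is correct and is essentially the same argument as the paper's: the paper also observes that only $\tmu_1,\ldots,\tmu_{i-1}$ can exceed $\mu_i+\Delta$ (your set $U$), merely phrasing the counting step as a proof by contradiction rather than directly. No substantive difference.
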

\begin{proof}
  Suppose $y_i > \mu_i + \Delta$  for any $i \in [n]$, we must have
  $y_1 \geq \ldots \geq y_i > \mu_i + \Delta$. On the other hand, there can not be more than $i-1$ numbers among $\tmu_1,\ldots, \tmu_n$ (the only candidates are $\tmu_1, \ldots, \tmu_{i-1}$) that are larger than $\mu_i + \Delta$. A contradition. We thus have $y_i \le \mu_i + \Delta$  for all $i \in [n]$.  Similarly, we can show that $y_i \geq \mu_i - \Delta$ for all $i \in [n]$.
\end{proof}

We now prove that conditioned on $\mathcal{E}$, the algorithm always accepts a desired arm in $A$.

\begin{lemma}
  \label{lem:accept}
Conditioned on $\mathcal{E}$, during the run of Algorithm~\ref{alg:basic}, $A \subseteq \{1, 2, \dots, K\}$, that is, all arms in $A$ are among the top-$K$ arms.
\end{lemma}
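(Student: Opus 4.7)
The plan is to prove the lemma jointly with Lemma~\ref{lem:reject} by induction on the sequence of arms added to $A \cup B$, maintaining the invariant that at every stage $A \subseteq \{1,\ldots,K\}$ and $B \subseteq \{K+1,\ldots,n\}$. It then suffices to verify the step at the moment an arm $x$ is appended to $A$ on line~\ref{line:accept} of round $r$, conditioned on $\mathcal{E}$.

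First I would unpack the ``clean'' inequality hidden in the acceptance condition. The branch guard $\ttheta_x^r > \ttheta_a(S_r)$ places $x$ strictly among the top $K-|A|$ arms of $S_r$ by empirical mean, so $\ttheta_x^r \geq \ttheta_b(S_r)$, which makes the second argument of the maximum in \eqref{eq:tDelta} non-positive. Combined with the inner-while guard $\tDelta_x(S_r) > 2\gap$, this collapses to the single clean inequality $\ttheta_x^r - \ttheta_a(S_r) > 2\gap$.

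Next I would transport this bound to the true means using $\mathcal{E}$ together with Lemma~\ref{lem:delta-order}. Conditioning on $\mathcal{E}$ yields $\ttheta_x^r < \theta_x + \gap$, and applying Lemma~\ref{lem:delta-order} to $\{\ttheta_i^r\}_{i\in S_r}$ (each within $\gap$ of $\theta_i$ by $\mathcal{E}$) gives $\ttheta_a(S_r) \geq \theta_a^{\ast} - \gap$, where $\theta_a^{\ast}$ denotes the $(K-|A|+1)$-th largest true mean in $S_r$. Chaining the three inequalities produces the key strict bound $\theta_x > \theta_a^{\ast}$.

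Finally I would identify $\theta_a^{\ast}$ using the inductive hypothesis. Since $A \subseteq \{1,\ldots,K\}$ and $B \subseteq \{K+1,\ldots,n\}$, the set $S_r = [n]\setminus(A\cup B)$ contains all $K-|A|$ top-$K$ arms not yet in $A$ together with some non-top-$K$ arms, and every mean in $\{1,\ldots,K\}\setminus A$ is at least $\theta_K\geq\theta_{K+1}$ and hence no smaller than any non-top-$K$ mean in $S_r$. Therefore the top $K-|A|$ true means of $S_r$ are exactly $\{1,\ldots,K\}\setminus A$, and $\theta_a^{\ast}$ is the largest true mean among the non-top-$K$ arms in $S_r$. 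The strict inequality $\theta_x > \theta_a^{\ast}$ then forces $x\in\{1,\ldots,K\}\setminus A$, closing the induction. The main subtlety to track is the alignment of $\gap$ with the window $2^{-r}$ in $\mathcal{E}$: the argument above only needs $|\ttheta_i^r-\theta_i|<\gap$ at the round $r$ of acceptance, which is built in by the pull count on line~\ref{line:delta-used}.
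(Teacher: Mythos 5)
Your proof is correct and follows essentially the same route as the paper's: induction over acceptances, reduction of the two guards to the single inequality $\ttheta_x^r - \ttheta_a(S_r) > 2\cdot 2^{-r}$, transfer to true means via $\mathcal{E}$ and Lemma~\ref{lem:delta-order}, and a counting argument showing the $(K-|A|+1)$-th largest true mean in $S_r$ belongs to a non-top-$K$ arm. Your explicit joint induction with Lemma~\ref{lem:reject} is in fact slightly more careful than the paper's write-up, whose final counting step (``$|S_r|-K+|A|$ arms in $S_r$ are not in $[K]$'') tacitly relies on $B\cap\{1,\dots,K\}=\emptyset$ as well.
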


\begin{proof}
We prove by induction on the round $r$.  The lemma holds trivially when $r = 0$ ($A = \emptyset$).  Now fix a round $r \ge 1$, and let $x$ be the arm that is added to $A$ at Line~\ref{line:accept} of Algorithm~\ref{alg:basic}.  By the induction hypothesis, assuming that before round $r$ all arms in $A$ are in $[K]$, our goal is to show $x \in [K]$.

By the  {\em inner while} condition we have
\begin{equation}
\label{eq:c-1}
\ttheta_{x}^r - \ttheta_a(S_r) > 2\cdot 2^{-r}.
\end{equation}
For any $m \in [K - |A| + 1, |S_r|]$, let $j$ be the arm of the $m$-th largest true-mean in $S_r$, and $j'$ be the arm of the $m$-th largest empirical-mean in $S_r$.  Since $m \geq K - |A| + 1$, we must have $j \not \in [K]$ and $\ttheta_{j'}^r \leq \ttheta_a(S_r)$. By Lemma~\ref{lem:delta-order} we also have $|\ttheta_{j'}^r - \theta_j| < 2^{-r}$. We thus have
  $$\theta_x > \ttheta_x^r - 2^{-r} \overset{\text{by } (\ref{eq:c-1})}{>} \ttheta_{a}(S_r) + 2^{-r} >  \ttheta_{j'}^r + 2^{-r} > \theta_j.$$
  That is,  \emph{at least} $|S_r| - K + |A|$ arms in $S_r$ have  true-means smaller than arm $x$. On the other hand,  $|S_r| - K + |A|$ arms in $S_r$ are not in $[K]$. We therefore conclude that $x$ must be in $[K]$.
\end{proof}

By symmetry, we also have the following lemma, stating that  when $\mathcal{E}$ happens, the algorithm always rejects a non-top-$K$ arm in $B$. We omit the proof because it is almost identical to the proof of Lemma \ref{lem:accept}.
\begin{lemma}
  \label{lem:reject}
Conditioning on $\mathcal{E}$, during the run of Algorithm~\ref{alg:basic}, $B \subseteq \{K+1, K+2,  \dots, n\}$.
\end{lemma}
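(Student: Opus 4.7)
The plan is to mirror the proof of Lemma~\ref{lem:accept} by a symmetric argument, inducting on the round index $r$. The base case $r=0$ is trivial since $B=\emptyset$. For the inductive step, I fix a round $r\geq 1$ and an arm $x$ that gets added to $B$ at that round, assume by induction that all arms previously placed into $A$ lie in $[K]$ (Lemma~\ref{lem:accept}) and all arms previously placed into $B$ lie in $\{K+1,\dots,n\}$, and aim to show $x\notin[K]$.

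Since $x$ is added to $B$ (rather than to $A$), the inner-while condition together with the test $\tilde\theta_x^r \leq \tilde\theta_a(S_r)$ forces $\tilde\theta_b(S_r) - \tilde\theta_x^r > 2\cdot 2^{-r}$, where $\tilde\theta_b(S_r)$ is the $(K-|A|)$-th largest empirical mean in $S_r$. The next step is to argue that this gap, combined with event $\mathcal{E}$, implies $\theta_x$ is strictly smaller than the true means of at least $K-|A|$ arms in $S_r$. To do this, for each $m\in[1,K-|A|]$ let $j$ be the arm with the $m$-th largest true mean in $S_r$ and $j'$ be the arm with the $m$-th largest empirical mean in $S_r$; by Lemma~\ref{lem:delta-order} applied to the means of arms in $S_r$ one gets $|\tilde\theta_{j'}^r - \theta_j| < 2^{-r}$, while by $\mathcal{E}$ one also has $|\tilde\theta_x^r - \theta_x| < 2^{-r}$. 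Chaining these with the inner-while inequality yields $\theta_j > \theta_x$.

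Hence at least $K-|A|$ distinct arms in $S_r$ have true mean strictly greater than $\theta_x$. Combined with the inductive hypothesis that all $|A|$ already-accepted arms are in $[K]$ (and are of course different from the arms in $S_r$), we conclude there are at least $K$ arms with true mean strictly greater than $\theta_x$, so $x\notin[K]$, i.e.\ $x\in\{K+1,\dots,n\}$, completing the induction.

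The only subtlety, and the one place where care is needed, is the correct application of Lemma~\ref{lem:delta-order}: it must be applied only to the arms in the current candidate set $S_r$ (not to all of $[n]$), because the empirical-mean concentration guaranteed by $\mathcal{E}$ at radius $2^{-r}$ only applies to arms still being pulled at round $r$. Once this is handled exactly as in the proof of Lemma~\ref{lem:accept}, the rest of the argument is a direct mirror image — swapping ``larger'' for ``smaller'', $\tilde\theta_a$ for $\tilde\theta_b$, and top-$K$ membership for non-top-$K$ membership — so no new technical ingredient is required beyond what was already developed for Lemma~\ref{lem:accept}.
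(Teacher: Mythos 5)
Your proposal follows exactly the route the paper intends (the paper omits this proof, declaring it symmetric to Lemma~\ref{lem:accept}), and the technical core is right: since $x$ falls into the \emph{else} branch we have $\tilde{\theta}_x^r \le \tilde{\theta}_a(S_r)$, so the inner-while condition indeed forces $\tilde{\theta}_b(S_r) - \tilde{\theta}_x^r > 2\cdot 2^{-r}$; and applying Lemma~\ref{lem:delta-order} to the arms of $S_r$ correctly shows that for each $m \le K-|A|$, the arm $j$ with the $m$-th largest true mean in $S_r$ satisfies $\theta_j > \tilde{\theta}_{j'}^r - 2^{-r} \ge \tilde{\theta}_b(S_r) - 2^{-r} > \tilde{\theta}_x^r + 2^{-r} > \theta_x$. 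So $K-|A|$ distinct arms of $S_r$ have true mean strictly above $\theta_x$.

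The final counting step, however, does not go through as written. You conclude by adding the $|A|$ already-accepted arms to get ``at least $K$ arms with true mean strictly greater than $\theta_x$.'' The inductive hypothesis gives only $A \subseteq [K]$; it does \emph{not} give $\theta_i > \theta_x$ for $i \in A$. For example $A$ may already contain arm $K$ while $x$ is (hypothetically) arm $1$ --- precisely the situation the lemma must exclude --- in which case $\theta_K < \theta_x$ and arm $K$ cannot be counted. As stated, the step tacitly assumes $x \notin [K]$, which is the conclusion being proved. The repair is short and uses only facts you already have: suppose for contradiction that $x \in [K]$, so $\theta_x \ge \theta_K$. Then each of the $K-|A|$ arms of $S_r$ with true mean strictly greater than $\theta_x \ge \theta_K$ must itself lie in $[K]$, so $S_r$ contains at least $K-|A|+1$ members of $[K]$ (those arms together with $x$). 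Since $A$ holds $|A|$ further members of $[K]$ disjoint from $S_r$, this accounts for $K+1$ distinct arms in $[K]$, a contradiction. With this correction (which, unlike the accept case, does not even need the inductive hypothesis on $B$), the argument is complete and matches the paper's intended symmetric proof.
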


\begin{lemma}
\label{lem:tDelta}
Conditioned on $\mathcal{E}$, for all rounds $r$ and $i \in S_r$, it holds that
$$\ttheta_i^r - \ttheta_a(S_r) >  \theta_i - \theta_{K+1} - 2\cdot 2^{-r}\ \ \text{and} \ \ \ \ttheta_b(S_r) - \ttheta_i^r >  \theta_K - \theta_i - 2\cdot 2^{-r}.$$
Consequently, we have $\tDelta_i(S_r) \geq \Delta_i - 2\cdot 2^{-r}$ for all rounds $r$ and $i \in S_r$.
\end{lemma}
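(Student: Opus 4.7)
The plan is to analyze the composition of $S_r$ via the previous lemmas, then use the sorting-preservation fact from Lemma~\ref{lem:delta-order} to sandwich $\ttheta_a(S_r)$ and $\ttheta_b(S_r)$ between the relevant true means.

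First, I would observe that by Lemma~\ref{lem:accept} and Lemma~\ref{lem:reject}, conditioned on $\mathcal{E}$ we have $A \subseteq [K]$ and $B \subseteq [n]\setminus[K]$, so $S_r$ contains all $K - |A|$ arms of $[K]\setminus A$ together with all non-top-$K$ arms not yet rejected. Every arm in $[K]\setminus A$ has true mean at least $\theta_K$, and every other arm in $S_r$ has true mean at most $\theta_{K+1}$. Consequently, if $\mu_1^* \geq \mu_2^* \geq \cdots$ denotes the true means of arms in $S_r$ sorted in decreasing order, then $\mu_{K-|A|}^* \geq \theta_K$ while $\mu_{K-|A|+1}^* \leq \theta_{K+1}$.

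Next, I would apply Lemma~\ref{lem:delta-order} to the empirical means of arms in $S_r$: since on $\mathcal{E}$ we have $|\ttheta_j^r - \theta_j| < 2^{-r}$ for all $j \in S_r$, the $m$-th largest empirical mean lies within $2^{-r}$ of $\mu_m^*$ for every $m$. In particular,
\[
\ttheta_b(S_r) > \mu_{K-|A|}^* - 2^{-r} \geq \theta_K - 2^{-r}, \qquad \ttheta_a(S_r) < \mu_{K-|A|+1}^* + 2^{-r} \leq \theta_{K+1} + 2^{-r}.
\]
Combining these with $\ttheta_i^r > \theta_i - 2^{-r}$ and $\ttheta_i^r < \theta_i + 2^{-r}$ (from $\mathcal{E}$) for $i \in S_r$ immediately gives
\[
\ttheta_i^r - \ttheta_a(S_r) > \theta_i - \theta_{K+1} - 2\cdot 2^{-r}, \qquad \ttheta_b(S_r) - \ttheta_i^r > \theta_K - \theta_i - 2\cdot 2^{-r},
\]
which are the two displayed inequalities.

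Finally, the bound on $\tDelta_i(S_r)$ follows by case analysis on $i$: if $i \leq K$ then $\Delta_i = \theta_i - \theta_{K+1}$, and the first inequality yields $\tDelta_i(S_r) \geq \ttheta_i^r - \ttheta_a(S_r) > \Delta_i - 2\cdot 2^{-r}$; if $i \geq K+1$ then $\Delta_i = \theta_K - \theta_i$, and the second inequality yields $\tDelta_i(S_r) \geq \ttheta_b(S_r) - \ttheta_i^r > \Delta_i - 2\cdot 2^{-r}$. Either way we obtain the claimed inequality. The only nontrivial step is identifying the true-mean ordering inside $S_r$ correctly in order to apply Lemma~\ref{lem:delta-order}; once that bookkeeping is pinned down via Lemmas~\ref{lem:accept} and~\ref{lem:reject}, the rest is a direct substitution.
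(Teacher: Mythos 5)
Your proposal is correct and follows essentially the same route as the paper: both use Lemmas~\ref{lem:accept} and~\ref{lem:reject} to conclude that the $(K-|A|)$-th and $(K-|A|+1)$-th largest true means in $S_r$ are at least $\theta_K$ and at most $\theta_{K+1}$ respectively, and then apply Lemma~\ref{lem:delta-order} to transfer this to $\ttheta_b(S_r)$ and $\ttheta_a(S_r)$. The only cosmetic difference is that you phrase the bookkeeping via the sorted true-mean sequence $\mu_m^*$ while the paper names the specific arm $j$ achieving the $(K-|A|+1)$-th largest true mean; the substance is identical.
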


\begin{proof}
We look at a particular round $r$. Let $j$ be the arm with $(K-|A|+1)$-th largest true-mean in $S_r$. Since by Lemma \ref{lem:accept} we have $A\subseteq [K]$ , it holds that $j \geq K + 1$. By Lemma \ref{lem:delta-order},  we also have $|\ttheta_a(S_r) - \theta_j| < 2^{-r}$. We therefore have for any $i \in S_r$
  \begin{equation}
    \label{eq:Delta-geq}
    \ttheta_i^r - \ttheta_a(S_r) > \theta_i - \theta_j - 2\cdot 2^{-r} \geq \theta_i - \theta_{K+1} - 2\cdot 2^{-r}.
  \end{equation}

  With a similar argument (by symmetry and using Lemma~\ref{lem:reject}), we can show that
    \begin{equation}
    \label{eq:Delta-leq}
    \ttheta_b(S_r) - \ttheta_i^r >  \theta_K - \theta_i - 2\cdot 2^{-r}.
  \end{equation}
Combining (\ref{eq:Delta-geq}), (\ref{eq:Delta-leq}) and the definitions of $\tDelta_i(S_r)$ and $\Delta_i$, the lemma follows.
\end{proof}

Now we are ready to prove the correctness of Theorem~\ref{thm:basic}.  By Lemma~\ref{lem:accept}, all the arms that we add into the set $A$ at Line~\ref{line:accept} are in $[K]$.  The rest of our job is to look at the arms in the set $A'$.

When the algorithm exits the {\em outer while} loop (at round $r = r^*$) and arrives at Line~\ref{line:final}, we have by the condition of the {\em outer while} loop that
\begin{equation}
\label{eq:d-1}
2\cdot 2^{-r^*} \cdot (K - |A|)\leq \eps K.
\end{equation}
Let $m = K - |A|$, and $C = [K]\backslash A = \{i_1, i_2, \ldots, i_m\}$ where $i_1 < i_2 < \ldots < i_m$.  Let $\ttheta_{j_1} \geq \ttheta_{j_2} \geq \ldots \geq \ttheta_{j_m}$ be the $(K-|A|)$ empirical-means of the arms that we pick at Line \ref{line:final}.  Note that it is {\em not} necessary that $j_1 < \ldots < j_m$.  By Lemma \ref{lem:delta-order} and $\mathcal{E}$, for any $s \in [K - |A|]$, we have $|\ttheta_{j_s} - \theta_{i_s}| \leq 2^{-r^*}$ and $|\ttheta_{j_s} - \theta_{j_s}| \leq 2^{-r^*}$.  By the triangle inequality, it holds that
\begin{equation}
\label{eq:d-2}
|\theta_{j_s} - \theta_{i_s}| \le 2 \cdot 2^{-r^*}.
\end{equation}
We thus can bound the error introduced by arms in $A'$ by
$$\sum_{i\in [K]} \theta_i - \sum_{i\in A\cup A'} \theta_i =\sum_{i\in C}\theta_i - \sum_{i\in A'}\theta_i  \overset{by~(\ref{eq:d-2})}{\le} 2\cdot 2^{-r^*} \cdot (K - |A|) \overset{by~(\ref{eq:d-1})}{\le} \eps K.$$

\subsection{Query Complexity of Algorithm~\ref{alg:basic}}
\label{sec:complexity-basic}

Recall (in the statement of Theorem~\ref{thm:basic}) that $t \in \{0, 1, 2, \dots, K - 1\}$ is the largest integer satisfying
\begin{align}
\Delta_{K - t} \cdot t \leq \eps K . \label{eq:e-1}
\end{align}

\begin{lemma}
\label{lem:r-star}
If the algorithm exits the {\em outer while} loop at round $r = r^*$, then we must have
\begin{equation}
  \label{eq:e-2}
8 \cdot 2^{-r^*} \ge \Delta_{K-t} .
\end{equation}
\end{lemma}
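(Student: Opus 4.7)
The plan is to argue by contradiction. Suppose $8 \cdot 2^{-r^*} < \Delta_{K-t}$; I will show that the outer while loop must already have terminated before round $r^*$, contradicting the premise that the algorithm exits at round $r^*$.

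The first step is to bound the true gaps of the arms remaining in $S_{r^*}$, the candidate set at the start of round $r^*$. Since $S_{r^*}$ equals the leftover $S_r$ when the inner while exited at round $r^*-1$, the inner while exit condition yields $\tDelta_i(S_{r^*}) \leq 2 \cdot 2^{-(r^*-1)}$ for every $i \in S_{r^*}$. Combining this with Lemma~\ref{lem:tDelta} applied at round $r^*-1$, which supplies $\tDelta_i(S_{r^*-1}) \geq \Delta_i - 2 \cdot 2^{-(r^*-1)}$ on the same surviving arms, I obtain $\Delta_i \leq 4 \cdot 2^{-(r^*-1)} = 8 \cdot 2^{-r^*} < \Delta_{K-t}$ for every $i \in S_{r^*}$.

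Next, I identify which arms must already have been accepted. Because $\Delta_i = \theta_i - \theta_{K+1}$ is non-increasing in $i$ on $[K]$, each arm $i \in \{1, 2, \dots, K-t\}$ satisfies $\Delta_i \geq \Delta_{K-t}$, so by the bound just derived none of them lies in $S_{r^*}$. Lemma~\ref{lem:reject} guarantees $B \subseteq \{K+1, \dots, n\}$, so these top arms cannot lie in $B$ and must therefore all belong to $A$ by the end of round $r^*-1$. Hence $|A_{r^*-1}| \geq K-t$, i.e.\ $K - |A_{r^*-1}| \leq t$.

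Finally, I combine this with the outer while condition and the definition of $t$. Since the algorithm entered round $r^*$, the outer while condition held after round $r^*-1$, giving $2 \cdot 2^{-(r^*-1)} \cdot (K - |A_{r^*-1}|) > \eps K$; with $K - |A_{r^*-1}| \leq t$ this reduces to $t \cdot 2^{-r^*} > \eps K/4$. On the other hand, multiplying the contradiction hypothesis $2^{-r^*} < \Delta_{K-t}/8$ by $t$ and using (\ref{eq:e-1}) yields $t \cdot 2^{-r^*} < t \cdot \Delta_{K-t}/8 \leq \eps K/8$, a contradiction. The boundary case $t=0$ with $\Delta_K>0$ is even easier: the same argument forces $|A_{r^*-1}| \geq K$, so the outer while condition degenerates to $0 > \eps K$ and fails, again contradicting the premise. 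The main obstacle is bookkeeping around the index conventions---identifying which round's $\gap$ governs which inner-while exit, and correctly transporting that bound through Lemma~\ref{lem:tDelta} to control the true gap $\Delta_i$ on the surviving set $S_{r^*}$.
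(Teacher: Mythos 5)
Your proof is correct and follows essentially the same route as the paper's: both arguments show that once $2^{-r}$ falls below $\Delta_{K-t}/4$, every arm in $\{1,\dots,K-t\}$ must have been accepted into $A$ by the end of that round's inner loop (you deduce this from the inner-loop exit condition plus Lemma~\ref{lem:tDelta} and rule out $B$ via Lemma~\ref{lem:reject}, while the paper argues directly that such arms trigger the inner loop and get accepted), and then $K-|A|\le t$ together with $\Delta_{K-t}\cdot t\le \eps K$ forces the outer loop to terminate. Your contrapositive framing at round $r^*-1$ versus the paper's direct statement at round $r$ is only a cosmetic difference.
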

\begin{proof}
  We show that once $2^{-r} < \Delta_{K-t} / 4$, the algorithm will exit the outer while loop after executing round $r$. So any valid round $r$ must satisfy $2^{-r} \geq \Delta_{K-t}/8$ and the lemma holds trivially.

To this end, assume now we are in round $r$ and $2^{-r} < \Delta_{K-t}/4$, we have that for any $i \in S_{r}$ and $i \leq K - t$,
\begin{align}
\tDelta_i(S_{r})  \geq \ttheta_i^{r} - \ttheta_{a}(S_{r}) &> \theta_i - \theta_{K+1} - 2\cdot 2^{-{r}} \quad (\text{Lemma \ref{lem:tDelta}}) \nonumber \\
&= \Delta_i - 2 \cdot 2^{-{r}} \nonumber \\
&\ge \Delta_{K-t} - 2\cdot 2^{-{r}} \quad \quad (\mbox{since}~i \leq K-t) \nonumber \\
&> 2\cdot 2^{-{r}}.  \nonumber
\end{align}
Thus the condition of the {\em inner while} loop is satisfied, which means that all arms $i$ with $i \le K - t$ will be added into $A$. Therefore we have $\abs{A} \ge K - t$ when the algorithm exits the inner while loop. We then have
\begin{equation*}
2 \cdot 2^{-r} \cdot (K - |A|) \leq 2 \cdot 2^{-r} \cdot t <\frac{1}{2}\Delta_{K-t} \cdot t \overset{by~(\ref{eq:e-1})}{\leq} \eps K/2 \leq \eps K,
\end{equation*}
so the algorithm exits the outter loop.
\end{proof}

\begin{lemma}
\label{lem:r-i}
For any arm $i$, let $r_i$ be the round where arm $i$ is removed from the candidate set if this ever happens; otherwise set $r_i = r^*$.  We must have
\begin{equation}
\label{eq:e-3}
8 \cdot 2^{-r_i} \ge \Delta_i.
\end{equation}
\end{lemma}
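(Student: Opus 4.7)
The plan is to exploit the exit condition of the inner while loop one round before arm $i$ leaves the candidate set, which forces an upper bound on $\tDelta_i$ that matches the lower bound already provided by Lemma~\ref{lem:tDelta}. The two together will pin down $\Delta_i$ in terms of $2^{-r_i}$ and immediately give the claim.

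More concretely, first I would handle the generic case $r_i \geq 2$. Whether arm $i$ is removed at round $r_i$ or is never removed (so that $r_i = r^*$), arm $i$ is still present in $S_{r_i - 1}$ at the moment the inner while loop of round $r_i - 1$ exits; otherwise it would have been removed strictly before round $r_i$. The exit condition of that inner while loop yields
\[
\tDelta_i(S_{r_i-1}) \leq 2 \cdot 2^{-(r_i - 1)} = 4 \cdot 2^{-r_i},
\]
while Lemma~\ref{lem:tDelta} applied at round $r_i - 1$ gives
\[
\tDelta_i(S_{r_i-1}) \geq \Delta_i - 2 \cdot 2^{-(r_i-1)} = \Delta_i - 4 \cdot 2^{-r_i}.
\]
Chaining the two inequalities immediately yields $\Delta_i \leq 8 \cdot 2^{-r_i}$, which is exactly \eqref{eq:e-3}.

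The only remaining case is $r_i = 1$, in which the claim reduces to $\Delta_i \leq 8 \cdot 2^{-1} = 4$; this is trivial since all reward distributions are supported on $[0,1]$ and hence $\Delta_i \leq 1$. I do not foresee a genuine obstacle here: the real content (the high-probability concentration event $\mathcal{E}$ and the translation between the empirical gap $\tDelta_i(S_r)$ and the true gap $\Delta_i$) is already packaged into Claim~\ref{cla:all-happen} and Lemma~\ref{lem:tDelta}, so this lemma just reads off the combinatorial consequence of the loop's exit condition at the last round in which arm $i$ survived.
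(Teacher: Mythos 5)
Your proof is correct and is essentially the paper's argument: the paper phrases it as a contradiction (if $8\cdot 2^{-r_i}<\Delta_i$ then Lemma~\ref{lem:tDelta} forces $\tDelta_i(S_{r_i-1})>2\cdot 2^{-(r_i-1)}$, so arm $i$ would have been removed by round $r_i-1$), while you chain the same two inequalities directly via the inner loop's exit condition. Your explicit treatment of the $r_i=1$ edge case is a minor point the paper leaves implicit, but the substance is identical.
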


\begin{proof}
Suppose for contradiction that $8 \cdot 2^{-r_i} < \Delta_i$. By Lemma~\ref{lem:tDelta}, we have
\[
\tDelta_i(S_{r_i-1})  \geq \Delta_i - 2 \cdot 2^{-(r_i - 1)} > 8 \cdot 2^{-r_i} - 2 \cdot 2^{-(r_i - 1)} = 2 \cdot 2^{-(r_i - 1)}.
\]
This means that arm $i$ would have been added either to $A$ or $B$ at or before round $(r_i - 1)$, which contradicts to the fact that $i \in S_{r_i}$.
\end{proof}

With Lemma~\ref{lem:r-star} and Lemma~\ref{lem:r-i}, we are ready to analyze the query complexity of the algorithm in Theorem~\ref{thm:basic}.  We can bound the number of pulls on each arm $i$ by at most
\begin{align}
  \sum_{j=1}^{r_i} 2^{2j} \cdot \log (2 n j^2/\delta) &\leq O\left(\log(r_i \cdot n \delta^{-1})\cdot 2^{2r_i}\right). \label{eq:e-4}
\end{align}
Now let us upper-bound the RHS of (\ref{eq:e-4}). First, if $i \in A$, then by (\ref{eq:e-3}) we know that $r_i \le \log_2 \Delta_i^{-1} + O(1)$. Second, by (\ref{eq:e-2}) we have $r_i \le r^* \le \log_2 \Delta_{K-t}^{-1} + O(1)$.
Third, since $2^{-r^*} \ge \eps/2$ (otherwise the algorithm will exit the {\em outer while} loop), we have $r_i \le r^* \le \log_2 \eps^{-1} + O(1)$.  To summarize, we have $r_i \leq \log_2 \min\{\Delta_i^{-1}, \Delta_{K-t}^{-1}, \epsilon^{-1}\} + O(1) = \log_2 \min\{\Delta_i^{-1}, (\Delta_t^{\epsilon})^{-1}\}$ (recall that $\Delta_t^\eps = \max\{\eps, \Delta_{K-t}\}$). We thus can upper-bound the RHS of (\ref{eq:e-4}) by
\begin{equation*}
O\left( (\log \log (\Delta_t^{\eps})^{-1} + \log n + \log \delta^{-1}) \cdot \min\{(\Delta_i)^{-2}, (\Delta_t^\eps)^{-2}\} \right).
\end{equation*}
The total cost is a summation over all $n$ arms.

\section{An Improved Algorithm for $\epsilon$-top-$K$ Arms}
\label{sec:improved}

In this section, we present the improved algorithm for identifying the \arm~and prove that the algorithm succeeds with probability $1-\delta$ with query complexity $O((\log \epsilon^{-1} + \log \delta^{-1})H^{(t, \epsilon)})$ (Theorem~\ref{thm:improved}). This algorithm reduces the $\log n$ factor in the query complexity of Algorithm~\ref{alg:basic} to $\log \epsilon^{-1}$ and is substantially more complex than Algorithm~\ref{alg:basic}.

The main procedure of the improved algorithm is described in Algorithm~\ref{alg:improved}. For this algorithm, we that assume $K\leq n/2$. For the case where $K > n/2$, we can apply the same algorithm to identify the $\eps$-bottom-$(n-K)$ arms and report the rest arms to be the \arm. Similarly to Algorithm~\ref{alg:basic}, the improved algorithm also goes by rounds and keeps a set $A$ of accepted arms, a set $B$ of rejected arms, and a set $S$ of undecided arms. However, we can no longer guarantee that all the arms accepted in $A$ and rejected in $B$ are correctly classified -- otherwise, we need to apply a union bound over all arms and this would incur an extra $\log n$ factor. To solve this problem, we have to allow a few number of mistakes. We now illustrate the high-level idea as follows.

Given a set of $n$ arms $\{\theta_1 \geq \theta_2 \geq \dots \geq \theta_n\}$, if we pull every arm $c \cdot \Delta_{.8n}^{-2} (\log \epsilon^{-1} + \log \delta^{-1})$ times for some large enough constant $c$, and discard the $.1 n$ arms with the lowest empirical means, it can be shown by standard probabilistic method that at most $\epsilon^2 K$ top-$K$ arms may be mistakenly discarded with probability $1-\delta$. Note that the constants $.8$ and $.1$ are arbitrary as long as $K/n < .8 < 1 - .1$. This procedure is described in Algorithm~\ref{algo:elim} and analyzed in Lemma~\ref{lem:elim}. Similarly, if $.2n < K$ and we pull every arm $c \cdot \Delta_{.2n}^{-2}(\log \epsilon^{-1} + \log \delta^{-1})$ times for some large enough constant $c$, and accept the $.1 n$ arms with the highest empirical means, with probability $1 - \delta$,  at most $\epsilon^2 K$ non-top-$K$ arms may be mistakenly accepted. This procedure is described in Algorithm~\ref{algo:reverse-elim} and analyzed in Lemma~\ref{lem:reverse-elim}. Algorithm~\ref{alg:improved} uses these two subroutines to repeatedly accept and reject arms, and makes sure that with high probability, the total number of mistakenly accepted or rejected arms is at most $O(\epsilon^2 K)$ (Lemma~\ref{lem:misclassify}). These mistakes lead to $O(\epsilon^2 K)$ total regret -- negligible when compared to our $\epsilon K$ budget. In this way, the improved algorithm keeps accepting and rejecting arms as Algorithm~\ref{alg:basic} does, while introducing negligible regret (while Algorithm~\ref{alg:basic} introduces none). The termination condition is also similar to Algorithm~\ref{alg:basic} in Line~\ref{line:b-4} of Algorithm~\ref{alg:improved} so that the query complexity is related to $H^{(t, \epsilon)}$ rather than $H^{(0, \epsilon)}$.

However, there is an extra termination condition and many extra efforts in the improved algorithm because of the few allowed mistakes. For our adaptive algorithm, in order to estimate $\Delta_{.8n}$ and $\Delta_{.2n}$ (and other gaps as the algorithm proceeds), we need to estimate $\theta_K$, $\theta_{.8n}$ and $\theta_{.2 n}$ with $O(\phi^{-2} (\log \epsilon^{-1} + \log \delta^{-1})$ pulls, where $\phi^{-2} = \Omega(\min\{\Delta_{.8n}, \Delta_{.2n}\})$. However, using these many pulls, we can only estimate the mean of an arm that is close to the target index, rather than with the exact index. This procedure is presented in Section~\ref{sec:top-k} and Algorithm~\ref{alg:k-th}. We use this subroutine to estimate $\theta_{.8n}$ as $\theta^+$, $\theta_{.2 n}$ as $\theta^-$ in Algorithm~\ref{alg:improved}, and use two estimations $\theta_K^+$ and $\theta_K^-$ to sandwich $\theta_K$. (The precise statement can be found in Lemma~\ref{lem:sandwich}.) When $\theta_K^+$ and $\theta_K^-$ are close to each other, we can use $\theta^- - \theta_K^{-}$ and $\theta_K^+ - \theta^+$ as estimations of $\Delta_{.2n}$ and $\Delta_{.8n}$; otherwise, it means that there is a big gap in the neighborhood of the $K$-th arm, and we can easily separate the top-$K$ arms from the rest using the subprocedure {\sc EpsSplit} described in Lemma~\ref{lem:gap} and quit the procedure (in Line~\ref{line:b-3} of the algorithm).

We now dive into the details of the improved algorithm. We start by introducing the useful subroutines.

\subsection{Estimating the $K$-th Largest Arm}  
\label{sec:top-k}

\begin{algorithm}[t]
\KwIn{$S$: set of arms; $K$: top-$K$; $\tau$: an relative error; $\delta$: error probability; $\phi$: an additive error}
\KwOut{an arm  whose true-mean is close to the $K$-th largest true-mean}
\DontPrintSemicolon
\caption{{\sc EstKthArm}$(S, K, \tau, \phi, \delta)$ \label{alg:k-th}}
set $R_1 \gets S, r \gets 1$\;
set $\tau_1 \gets \frac{\tau}{4}, \phi_1 \gets \frac{\phi}{4}, \delta_1 \gets \frac{\delta}{8}$\;
\While{$|R_r| > K$}{
  for each $i\in R_r$, pull $\frac{8}{\phi_r^2}\ln(\frac{1}{\tau_r\delta_r\delta})$ times; let $\ttheta_i^r$ be its empirical-mean \label{line:pull}\;
  let $R_{r+1}$ be the set of $\max\{K, \lceil {|R_r|}/{2} \rceil \}$ arms that have the largest empirical-means among $R_r$\;
  set $\tau_{r+1} \gets 3\tau_{r}/4, \phi_{r+1} \gets 3\phi_r/4, \delta_{r+1} \gets \delta_r/2$\;
  $r \gets r+1$\;
}
set $r^* \gets r$\;
set $\tp_1 \geq \tp_2 \geq \ldots \geq \tp_{|R_{r^*}|}$ be the sorted version of $\{\ttheta_i^{r^*} ~|~ i \in R_{r^*}\}$\;
uniformly sample an arm from $\{ i \in R_{r^*} ~|~  \ttheta_i^{r^*} \leq \tp_{(1-\tau/2)K}\}$ and output it \label{line:sample} \;
\end{algorithm}

In the subsection we present an algorithm that try to find an arm whose true-mean is close to the $K$-th largest true-mean, which will be used as a subroutine in our improved algorithm for \arm.

\begin{theorem}
  \label{thm:estK}
For a set of arms $S = \{\theta_1 \ge \ldots \ge \theta_{\abs{S}}\}$,
there is an algorithm, denoted by {\sc EstKthArm}$(S, K, \tau, \phi, \delta)$, that outputs an arm $i$ such that $\theta_i \in [\theta_K - \phi, \theta_{(1-\tau)K} + \phi]$ with probability at least $1 - \delta$, using $O\left(\frac{|S|}{\phi^2} \cdot (\log\tau^{-1}+ \log\delta^{-1}) \right)$ pulls in total.
\end{theorem}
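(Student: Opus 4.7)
The plan is a good-event argument for correctness together with a geometric summation bound on the query cost.

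First, define the good event $\G = \{|\ttheta_i^r - \theta_i| \le \phi_r \text{ for all rounds } r\le r^*\text{ and all } i \in R_r\}$. Applying Hoeffding's inequality to the $m_r = \frac{8}{\phi_r^2}\ln\frac{1}{\tau_r\delta_r\delta}$ pulls in round $r$ gives per-arm failure probability at most $2(\tau_r\delta_r\delta)^{16}$; a union bound over the at most $|S|$ arms per round, combined with the geometric decay $\delta_r = \delta/2^{r+2}$, yields $\Pr[\G] \ge 1 - \delta$ with room to spare.

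The heart of the proof is an inductive invariant maintained under $\G$. Writing $T = [K]$ for the indices of the top-$K$ arms of $S$, I aim to show that for each round $r$: (a) $R_r$ contains at least $\min(|R_r|,K) - \tau_r K/c$ of the arms in $T$, for an appropriate constant $c$; and (b) every arm $i \in R_r$ with true rank $>K$ has $\theta_i \ge \theta_K - \phi$. The base case $r=1$ is trivial for (a) since $R_1=S\supseteq T$, and (b) will be obtained by noting that once $\phi_r \le \phi/4$, any below-window arm with $\theta_i<\theta_K-\phi$ is deterministically beaten in empirical mean by any retained top-$K$ arm (by Lemma~\ref{lem:delta-order} and the good event), so the invariant becomes vacuously true after a constant number of early rounds. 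For the inductive step of (a), Lemma~\ref{lem:delta-order} applied inside $R_r$ shows that the empirical ordering agrees with the true ordering up to a slack of $2\phi_r$, so during the halving only top-$K$ arms whose true means lie within $2\phi_r$ of the empirical cutoff in $R_r$ are at risk of being dropped; the $3/4$-decay of $\tau_r$ against the $1/2$-halving of $|R_r|$ gives just enough slack to absorb these borderline cases.

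When the while-loop exits with $|R_{r^*}| = K$, invariant (b) guarantees that every arm remaining in $R_{r^*}$ already satisfies $\theta_i \ge \theta_K - \phi$. For the upper side $\theta_i \le \theta_{(1-\tau)K} + \phi$, invariant (a) (with $c$ chosen so that $\tau_{r^*} K/c \le \tau K/4$) forces $R_{r^*}$ to contain at least $(1-\tau/2)K$ arms of true rank $\le (1-\tau)K$, each with empirical mean at least $\theta_{(1-\tau)K} - \phi_{r^*} \ge \theta_{(1-\tau)K} - \phi$ under $\G$. Consequently, any arm whose empirical mean is at most $\tp_{(1-\tau/2)K}$ must have true mean at most $\theta_{(1-\tau)K}+\phi$. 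Combining with (b), the uniformly sampled output lies in $[\theta_K - \phi,\ \theta_{(1-\tau)K}+\phi]$.

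For query complexity, the cost in round $r$ is at most $|R_r|\cdot m_r \le \frac{|S|}{2^{r-1}}\cdot \frac{8}{\phi_r^2}\ln\frac{1}{\tau_r\delta_r\delta} = O\!\left(\frac{|S|}{\phi^2}\bigl(\tfrac{8}{9}\bigr)^{r-1}\!(r + \log\tau^{-1}+\log\delta^{-1})\right)$, which sums over $r$ to the claimed $O((|S|/\phi^2)(\log\tau^{-1}+\log\delta^{-1}))$. The main obstacle is invariant (a): the halving step can drop a top-$K$ arm whenever its true mean lies within $O(\phi_r)$ of the $|R_{r+1}|$-th largest true mean inside $R_r$, and I need to argue that the number of such borderline arms decays as $\tau_r K/c$ from one round to the next — despite knowing nothing about the local density of $\theta$'s near the cutoff. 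Threading this estimate through the coupled geometric schedules of $\tau_r$, $\phi_r$, $\delta_r$, and $|R_r|$, while keeping the constants in invariants (a) and (b) mutually consistent, is where the real work concentrates.
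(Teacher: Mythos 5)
There is a genuine gap, and it is located exactly where the paper's argument has to work hardest: your good event $\G$ cannot be established with the stated number of pulls. Each arm is pulled only $\frac{8}{\phi_r^2}\ln\frac{1}{\tau_r\delta_r\delta}$ times in round $r$, so Hoeffding gives a per-arm failure probability of order $(\tau_r\delta_r\delta)^{O(1)}$ — a quantity that does \emph{not} shrink with $\abs{S}$. A union bound over the up-to-$\abs{S}$ arms in $R_r$ therefore does not yield $\Pr[\G]\ge 1-\delta$ in general (take $\tau,\delta$ constant and $\abs{S}$ large); making it work would force an extra $\log\abs{S}$ factor into the per-arm pull count, which is precisely the $\log n$ factor this subroutine exists to avoid in {\sc ImprovedTopK}. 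Since both of your invariants (a) and (b), and hence the entire correctness argument, are derived under $\G$, the proof does not go through. Your closing admission that the decay estimate for invariant (a) is "where the real work concentrates" is a second, independent gap, but the failure of $\G$ is the more fundamental one.

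The paper's proof (Lemma~\ref{lem:few-remain} and Claims~\ref{cla:event-A}--\ref{cla:ub}) avoids any such union bound: it only controls the \emph{expected number} of arms whose empirical means deviate by more than $\phi_r/2$ (the indicators $X_i^r$), applies Markov's inequality to show that at most $O(\tau_r\delta K)$ arms are misranked in each round, and propagates a counting invariant ($\abs{H_{r-1}\cap R_r}\ge k_{r-1}$) across rounds. As a consequence the final pool $R_{r^*}$ is \emph{not} guaranteed to be free of arms with $\theta_i<\theta_K-\phi$; it may contain up to $\tau\delta K/4$ of them, and the uniform sampling in the last line from a set $Q$ of size at least $\tau K/2$ is essential to correctness — it dilutes the bad arms so the output is wrong with probability only $O(\delta)$. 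Your proposal instead treats the final sample as deterministically good under $\G$, which misreads the role of that step. To repair your argument you would need to replace the high-probability good event with a per-round expected-count/Markov argument of the paper's type and then reintroduce the randomized selection as part of the correctness analysis.
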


We described the algorithm in Algorithm~\ref{alg:k-th}.  In the high level, the algorithm works in rounds, and in each round it tries to find the top half arms in the current set, and discard the rest.  We continue until there are at most $K$ arms left, and then we choose the output arm randomly from those with the lowest empirical-means in the remaining arms.  We are going to prove the following theorem.

\subsubsection{Correctness of Algorithm~\ref{alg:k-th}}
The following lemma is the key to the proof of correctness.

\begin{lemma}
\label{lem:few-remain}

With probability at least $1-\delta/4$, we have that
  $$|\{i \in R_{r^*} ~|~ \theta_i < \theta_K - \phi\}| \le \tau \delta K/4.$$
\end{lemma}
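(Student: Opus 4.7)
Plan. The goal is to bound $\bE[X_{r^*}]$, where $X_r := |\{i \in R_r : \theta_i < \theta_K - \phi\}|$ counts the ``bad'' arms (with true mean below $\theta_K - \phi$) surviving to round $r$, and then invoke Markov's inequality to reach $\Pr[X_{r^*} > \tau\delta K/4] \le \delta/4$. Thus I aim for $\bE[X_{r^*}] \le \tau\delta^2 K/16$.

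\emph{Step 1 (Hoeffding).} By the choice $n_r = (8/\phi_r^2)\ln(1/(\tau_r\delta_r\delta))$, Hoeffding's inequality applied at scale $\phi_r/4$ gives, for every $i \in R_r$,
\[
\Pr\bigl[|\ttheta_i^r - \theta_i| > \phi_r/4\bigr] \le 2\exp(-n_r\phi_r^2/8) = 2\tau_r\delta_r\delta.
\]
I call such an arm \emph{$r$-noisy}. Since $\phi_r \le \phi/4$ throughout, every non-$r$-noisy bad arm $i$ satisfies $\ttheta_i^r < \theta_K - 15\phi/16$ while every non-$r$-noisy top-$K$ arm $j \in R_r$ satisfies $\ttheta_j^r > \theta_K - \phi/16$; hence in any round with no $r$-noise, every top-$K$ arm still present in $R_r$ strictly outranks every bad arm by empirical mean.

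\emph{Step 2 (No-noise recursion and charging).} First I verify that with no noise events at all, the halving rule deterministically drives $X_{r^*}$ to zero: under accurate estimates, the top $\max\{K,\lceil|R_r|/2\rceil\}$ empirical means in $R_r$ consist of all top-$K$ arms still present, followed by middle arms ($\theta_K-\phi \le \theta_i < \theta_K$) in natural-rank order, with bad arms occupying kept slots only when no non-bad arm is available; iterating the halving, all bad arms are culled by the time $|R_r|$ contracts to $K$. Next, I argue that each noise event ``costs'' at most one bad survivor in $R_{r^*}$, but with a refined charging: noise in an early round $r$ (when $|R_r| \gg K$) may promote a bad arm into $R_{r+1}$, yet this arm then faces additional halvings in rounds $r+1,\dots,r^*-1$, each of which discards it with high probability unless further noise propagates; hence effective noise contributions concentrate in the late rounds. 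I expect to establish $\bE[X_{r^*}] \le \sum_{r < r^*} \bE[M_r]\cdot w_r$, where $M_r$ is the total number of $r$-noisy arms in $R_r$ and $w_r$ is a geometrically decaying ``survival weight'' capturing the probability that a bad arm promoted in round $r$ ultimately reaches $R_{r^*}$.

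\emph{Step 3 (Markov).} Plugging in $\bE[M_r] \le 2|R_r|\tau_r\delta_r\delta$, the explicit values $\tau_r\delta_r = (\tau\delta/32)(3/8)^{r-1}$, the recursion $|R_r| = \Theta(\max\{K, |S|/2^{r-1}\})$, and the survival weights $w_r$ into the sum above, a routine geometric-series calculation gives $\bE[X_{r^*}] \le \tau\delta^2 K/16$. Markov's inequality on $X_{r^*}$ then yields the claim.

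\emph{Main obstacle.} The subtle point is the weighted charging in Step 2. The naive bound $X_{r^*} \le \sum_r M_r$ is too loose: since $M_r$ can be of order $|R_r|$ in early rounds, summing $\bE[M_r]$ produces a bound scaling with $|S|$ rather than $K$, which is insufficient. Moreover, middle arms and bad arms have true means that can be arbitrarily close (both lie below $\theta_K$), so even an accurate empirical estimate does not cleanly separate them, complicating the ``natural-rank fills the top slots before bad arms'' step. The proof must therefore show that bad arms promoted by early-round noise are almost surely culled by subsequent halvings, and that middle-bad swaps do not cascade --- making the effective noise contribution to $X_{r^*}$ concentrate on the late rounds where $|R_r| = \Theta(K)$. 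Making this precise requires either an inductive damping argument or a direct probabilistic coupling of noise events across rounds, which is the technical heart of the proof.
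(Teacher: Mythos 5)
There is a genuine gap, and it sits exactly where you place it: Step 2 is the entire content of the proof, and you defer it as ``the technical heart'' without supplying it. Worse, the specific mechanism you propose for closing it --- a charging scheme $\bE[X_{r^*}] \le \sum_r \bE[M_r]\, w_r$ with geometrically decaying ``survival weights'' for noise-promoted bad arms --- does not work. A bad arm whose true mean is just below $\theta_K - \phi$ does \emph{not} need noise to enter or remain in $R_r$ while $|R_r| \gg K$: the halving rule keeps the top $\max\{K, \lceil |R_r|/2\rceil\}$ arms, and such an arm comfortably outranks the many far-worse arms still present, so it survives each early halving with probability close to $1$ even under perfectly accurate estimates. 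Hence ``each of which discards it with high probability'' is false, the weights $w_r$ do not decay, and the effective noise contribution does not concentrate in late rounds. The quantity that actually controls $X_{r^*}$ is not the bad arms' own (non-)promotion but the \emph{displacement of good arms}: a bad arm can occupy one of the final $K$ slots only if one of the $\ge K$ arms with $\theta_i \ge \theta_K$ has been knocked out somewhere along the way, either by its own noise or by a cascade of near-misses against slightly worse arms. Your proposal bounds neither of these.

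The paper's proof tracks precisely the complementary invariant: with $H_r = \{i : \theta_i \ge \theta_K - \sum_{\ell \le r}\phi_\ell\}$ and $k_r = (1 - \sum_{\ell\le r}\tau_\ell\delta/4)K$, it shows inductively that $|H_{r-1}\cap R_r| \ge k_{r-1}$ fails at round $r+1$ (given success at round $r$) with probability at most $\delta_r$, and then union-bounds $\sum_r \delta_r \le \delta/4$. Two devices there have no analogue in your sketch and are exactly what you would need. First, the quality threshold defining $H_r$ degrades by $\phi_r$ per round (summing to at most $\phi$), which is what tames the ``middle--bad swaps cascade'' you flag: an accurate arm can only be empirically beaten by an accurate arm whose true mean is at most $\phi_r$ lower, so after all rounds the surviving ``high'' arms are still above $\theta_K - \phi$. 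Second, the key deterministic step (Claim~\ref{cla:good-next}) is a dichotomy: either every low arm surviving into $R_{r+1}$ is noisy (so there are few of them, by a Markov bound on the noise count restricted to the at most $k_{r-1} \le K$ relevant arms --- which is also how the paper avoids the $|S|$-versus-$K$ scaling problem you correctly identify), or some non-noisy low arm survives, in which case every non-noisy high arm empirically dominates it and therefore also survives. Without an invariant of this type your Markov-on-$\bE[X_{r^*}]$ endgame has nothing to apply to, so the proof as proposed does not go through.
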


\begin{proof}
We first define a few notations.
\begin{itemize}
\item $H_r = \{i \in S ~|~ \theta_i \geq \theta_K - \sum_{\ell \in [r]}\phi_\ell\}$.

\item $L_r = S \backslash H_r$.

\item $k_r =  (1 - \sum_{\ell \in [r]}\tau_\ell \delta/4)K$.

\item For any $i \in [n]$ and round $r$, let $X_i^r = \mathbf{1}\{|\ttheta_i^r - \theta_i| \ge \phi_r/2\}$ where $\ttheta_i^r$ is the empirical-mean of arm $i$ at round $r$ (after been pulled by $\frac{8}{\phi_r^2}\ln(\frac{1}{\tau_r\delta_r\delta})$ times at Line~\ref{line:pull}).

\item $R'_r \subseteq R_r$: the top $k_{r-1}$ arms in $R_r$ with the largest true-means.

\item $A_r = \{i \in R'_r\ |\ X_i = 0\}$.  $A_r \subseteq R'_r \subseteq R_r$.

\item $C_r = \{i \in L_r \cap R_r\ |\ X_i = 1\}$.  $C_r \subseteq R_r$.

\end{itemize}

We define the following event.  Intuitively, it tells that most of the arms we put in $R_r$ for the next round processing fall into the set $H_r$ of high true-means.
\begin{equation*}
\label{eq:f-1}
\mathcal{E}_r(r \ge 2):~ |H_{r-1}\cap R_r| \le k_{r-1},
\end{equation*}
and we define $\mathcal{E}_1$ to be an always true event.
We will prove by induction the following inequality.
 \begin{equation}
    \label{eq:f-2}
    \Pr[\mathcal{E}_{r+1}\ |\ \neg \mathcal{E}_r] \le \delta_r ~~\text{for each}~r \geq 1.
  \end{equation}

We focus a particular round $r \ge 1$. Define event $\mathcal{E}_A: \abs{A_r} \le k_r$, and event $\mathcal{E}_C: \abs{C_r} \le \abs{R_{r+1}} - k_r$.

\begin{claim}
\label{cla:event-A}

$\Pr[\mathcal{E}_A\ |\ \neg \mathcal{E}_r] \le \delta_r/4$.
\end{claim}

\begin{proof}
By a Hoeffding's inequality, we have for each $i \in R_r$, we have $\Pr[X_i^r = 1] \le \tau_r \delta_r \delta / 16$.  We bound the probability the $\mathcal{E}_A$ happens by a Markov's inequality.
\begin{align*}
    \Pr[\mathcal{E}_A\ |\ \neg \mathcal{E}_r] = \Pr[|A_r| \le k_r\ |\ \neg \mathcal{E}_r]
                  =& \Pr[|R_r'| - |A_r| \ge k_{r-1} - k_r\ |\ \neg \mathcal{E}_r]\\
                  \le& \frac{\bE\left[\sum_{i\in R_r'}X_i \ |\ \neg \mathcal{E}_r] \right]}{k_{r-1} - k_r} = \frac{\bE\left[\sum_{i\in R_r'}X_i \right]}{k_{r-1} - k_r}\\
                  \le& \frac{\tau_r\delta_r\delta k_{r-1}/16}{\tau_r\delta K/4} \le \frac{\tau_r\delta_r\delta K/16}{\tau_r\delta K/4}\\
                  =&  \delta_r/4.
\end{align*}
\end{proof}

\begin{claim}
\label{cla:event-C}
$\Pr[\mathcal{E}_C \ |\ \neg \mathcal{E}_A, \neg \mathcal{E}_r] \le 3\delta_r/4$.
\end{claim}

\begin{proof}
\begin{align}
\Pr[\mathcal{E}_C \ |\ \neg \mathcal{E}_A, \neg \mathcal{E}_r] =& \Pr[\abs{C_r} \le \abs{R_{r+1}} - k_r \ |\ \neg \mathcal{E}_A, \neg \mathcal{E}_r]  \nonumber \\
\le& \frac{\bE[\abs{C_r}\ |\ \neg \mathcal{E}_A, \neg \mathcal{E}_r]}{\abs{R_{r+1}} - k_r} \quad \quad \quad \text{(Markov's inequality)} \label{eq:g-1} \\
\le& \frac{(\abs{R_r} - k_r) \cdot \tau_r\delta \delta_r / 16}{\abs{R_{r+1}} - k_r} \label{eq:g-2} \\
\le& \frac{(2\abs{R_{r+1}} - k_r) \cdot \tau_r\delta \delta_r / 16}{\abs{R_{r+1}} - k_r} \label{eq:g-3} \\
\le& 3\delta_r/4,  \label{eq:g-4}
\end{align}
where (\ref{eq:g-1}) to (\ref{eq:g-2}) is due to the fact that conditioned on $\neg \mathcal{E}_r$, we have
\[
\abs{C_r} \le \abs{L_r \cap R_r} \le \abs{L_{r-1} \cap R_r} = \abs{R_r} - \abs{H_{r-1} \cap R_r} \overset{\neg \mathcal{E}_r}{\le} \abs{R_r} - k_{r-1} \le \abs{R_r} - k_r.
\]
And (\ref{eq:g-3}) to (\ref{eq:g-4}) is due to the following:  If  $\abs{R_{r+1}} \ge 2K$, then since $k_r \le K$ we have $ \frac{2\abs{R_{r+1}} - k_r}{\abs{R_{r+1}} - k_r} \le 3$.  Otherwise if $\abs{R_{r+1}} < 2K$, by the definition of $k_r$, we have
$$\abs{R_{r+1}} - k_r > K - k_r = \sum_{i \in [r-1]}\tau_i\delta/4 \ge \tau_r\delta K/3.$$
On the other hand, we have $2\abs{R_{r+1}} - k_r \le 4K$.  We thus have (\ref{eq:g-4}) $\le 3\delta_r/4$.
\end{proof}

\begin{claim}
\label{cla:good-next}
Conditioned on $\mathcal{E}_C, \neg \mathcal{E}_A, \neg \mathcal{E}_r$, we have $\abs{H_r \cap R_{r+1}} \ge k_r$, or, $\Pr[\mathcal{E}_{r+1}\ |\ \mathcal{E}_C, \neg \mathcal{E}_A, \neg \mathcal{E}_r] = 0$.
\end{claim}
\begin{proof}
First, conditioned on $\neg \mathcal{E}_r$, we have
\begin{equation}
\label{eq:f-3}
A_r \subseteq R'_r \subseteq H_{r-1}.
\end{equation}

We prove the claim by analyzing two cases.
\begin{enumerate}
\item $((L_r \cap R_r) \backslash C_r) \cap R_{r+1} = \emptyset$.  We thus have $(L_r \cap R_{r+1}) \subseteq C_r$, which implies
$$\abs{H_r \cap R_{r+1}} \ge \abs{R_{r+1}} - \abs{L_r \cap R_{r+1}} \ge \abs{R_{r+1}} - \abs{C_r} \overset{\mathcal{E}_C}{\ge} k_r.$$

\item $((L_r \cap R_r) \backslash C_r) \cap R_{r+1} \neq \emptyset$.  We can show that
\begin{equation}
\label{eq:g-6}
A_r \subseteq R_{r+1}
\end{equation}
Indeed, for any $j \in A_r (\subseteq H_{r-1}\ by\ (\ref{eq:f-3}))$ and any $i \in (L_r \cap R_r) \backslash C_r$, we have $\ttheta_j > \ttheta_i$, since $\theta_j \ge \theta_K - \sum_{\ell \in [r-1]} \phi_\ell$, while $\theta_i < \theta_K - \sum_{\ell \in [r]} \phi_\ell$ (by the definition of $L_r$).  Thus conditioned on $\neg \mathcal{E}_A$, (\ref{eq:f-3}) and (\ref{eq:g-6}) we have
$$\abs{H_r \cap R_{r+1}} \ge \abs{H_{r-1} \cap R_{r+1}} \ge \abs{A_r} \ge k_r.$$
\end{enumerate}
\end{proof}

Now we try to prove (\ref{eq:f-2}).
\begin{align*}
    \Pr[\mathcal{E}_{r+1}\ |\ \neg \mathcal{E}_r] =& \Pr[\mathcal{E}_A\ |\ \neg \mathcal{E}_r] + \Pr[\mathcal{E}_C\ |\ \neg \mathcal{E}_A, \neg \mathcal{E}_r] + \Pr[\mathcal{E}_{r+1}\ |\ \neg \mathcal{E}_A, \neg \mathcal{E}_r]  \\
    \le&  \delta/4 + 3\delta/4 + 0 \quad \quad \text{(Claim~\ref{cla:event-A}, Claim~\ref{cla:event-C} and Claim~\ref{cla:good-next})} \\
    =& \delta.
\end{align*}
Using (\ref{eq:f-2}) and summing over all $\ell \in [r-1]$, we have $\Pr[\mathcal{E}_r] \le \sum_{\ell \in [r-1]} \delta_i < \delta/4$ for any $r \ge 2$. In other words, with probability at least $1 - \delta/4$, we have for any $r \le r^*$, $\abs{H_{r-1} \cap R_r} \ge k_{r-1}$, or $\abs{L_{r-1} \cap R_r} \le K - k_{r-1} \le 1 - (1 - \tau\delta K /4) = \tau\delta K /4$, which gives the lemma.
\end{proof}

Now we are ready to prove the correctness of Theorem~\ref{thm:estK}.  Let
\begin{itemize}
\item $P = \{i \in R_{r^*} ~|~ \ttheta_i^{r^*} > \tp_{(1-\tau/2)K}\}$.

\item $Q = R_{r^*}\backslash P$.

\item $L^* = \{i \in R_{r^*} ~|~ \theta_i < \theta_K - \phi\}$.

\item For each $i \in R_{r^*}$, let $Y_i = \mathbf{1}\{|\ttheta_i^{r^*} - \theta_i| > \phi/2\}$.
\end{itemize}

\begin{claim}
\label{cla:lb}
$\Pr[\theta_a \ge \theta_K - \phi] \ge 1 - 3\delta/4$.
\end{claim}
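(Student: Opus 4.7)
The plan is to show that the arm $a$ sampled in Line~\ref{line:sample} of Algorithm~\ref{alg:k-th} falls outside the ``bad'' set $L^* = \{i \in R_{r^*} : \theta_i < \theta_K - \phi\}$ with probability at least $1 - 3\delta/4$. This will immediately yield the claim, since $a \in R_{r^*}$ and $a \notin L^*$ is exactly the statement $\theta_a \geq \theta_K - \phi$. Notice that for this direction of the bound I do not need to invoke the concentration indicators $Y_i$; those will be needed for the companion upper-bound claim on $\theta_a$.

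First, I would control the size of $L^*$ directly by applying Lemma~\ref{lem:few-remain}, which already tells us that $|L^*| \leq \tau\delta K/4$ with probability at least $1 - \delta/4$. Next, I would determine $|Q|$ from the structure of the algorithm: the outer \textbf{while} loop terminates as soon as $|R_r| \leq K$, and the halving rule $|R_{r+1}| = \max(K, \lceil |R_r|/2 \rceil)$ ensures that on termination $|R_{r^*}| = K$ exactly. Since $\tilde{p}_{(1-\tau/2)K}$ is by definition the $\lceil(1-\tau/2)K\rceil$-th largest empirical mean among the $K$ arms in $R_{r^*}$, the set $Q = \{i \in R_{r^*} : \tilde{\theta}_i^{r^*} \leq \tilde{p}_{(1-\tau/2)K}\}$ contains at least the bottom $\tau K/2$ arms (by empirical mean, with ties only helping). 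Hence $|Q| \geq \tau K/2$, deterministically.

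Finally, I would combine these two facts. Conditioned on the event $\{|L^*| \leq \tau\delta K/4\}$, uniform sampling from $Q$ yields
\[
\Pr[a \in L^* \mid |L^*| \leq \tau\delta K/4] \;\leq\; \frac{|L^*|}{|Q|} \;\leq\; \frac{\tau\delta K/4}{\tau K/2} \;=\; \delta/2.
\]
A union bound with the failure probability $\delta/4$ from Lemma~\ref{lem:few-remain} then gives $\Pr[\theta_a < \theta_K - \phi] \leq \delta/4 + \delta/2 = 3\delta/4$, which is the claim.

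The only mildly delicate step is verifying that $|R_{r^*}| = K$ and that $|Q| \geq \tau K/2$ holds unconditionally (rather than only in expectation); once these combinatorial facts are pinned down, the rest is a clean application of Lemma~\ref{lem:few-remain} and a conditional Markov/union-bound argument. No obstacle here requires new concentration machinery; the heavy lifting was already done in Lemma~\ref{lem:few-remain}.
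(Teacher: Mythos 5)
Your proposal is correct and follows essentially the same route as the paper: both bound $\Pr[a \in L^*]$ by splitting on the event $\{|L^*| < \tau\delta K/4\}$ from Lemma~\ref{lem:few-remain} and then using uniform sampling from $Q$ with $|Q| \ge \tau K/2$ to get $\delta/4 + \delta/2 = 3\delta/4$. Your explicit verification that $|R_{r^*}| = K$ and hence $|Q| \ge \tau K/2$ is a welcome addition the paper glosses over (its text even states $|Q| \ge \tau K$ while its computation uses $\tau K/2$).
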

\begin{proof}
At Line~\ref{line:sample} of Algorithm~\ref{alg:k-th}, we randomly sampled an arm $a$ from $Q$.  Since $\abs{Q} \ge \tau K$ and $\Pr[\abs{L^*} < \tau\delta K/4] > 1 - \delta/4$ (by Lemma \ref{lem:few-remain}),  we have
  \begin{equation}
    \label{eq:h-2}
    \Pr[a \in L^*] \le \Pr[\abs{L^*} \geq \tau\delta K/4] + \Pr[a \in L^* \ |\ \abs{L^*} < \tau\delta K/4] \leq \delta/4 + \frac{\tau\delta K/4}{\tau K/2} = 3\delta/4.
  \end{equation}
\end{proof}

\begin{claim}
\label{cla:ub}
$\Pr[\theta_a \le \theta_{(1-\tau)K} + \phi] \ge 1 - \delta/4$.
\end{claim}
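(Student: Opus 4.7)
The plan is to follow the template of Claim~\ref{cla:lb} and control the failure by a union of two simple bad events, each bounded by $\delta/8$ via Markov's inequality. Let $M = |\{i \in R_{r^*} : Y_i = 1\}|$. I will show that when both (E1) $M < \tau K/2$ and (E2) $Y_a = 0$ hold, we have $\theta_a \leq \theta_{(1-\tau)K} + \phi$.

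The deterministic heart of the argument is to bound $\tp_{(1-\tau/2)K}$ under event (E1). Any arm $i \in R_{r^*}$ with $\ttheta_i^{r^*} > \theta_{(1-\tau)K} + \phi/2$ must fall into one of two categories: either $\theta_i > \theta_{(1-\tau)K}$, and there are at most $(1-\tau)K - 1$ such arms in $R_{r^*}$, or $\theta_i \leq \theta_{(1-\tau)K}$ together with $Y_i = 1$, of which there are at most $M$. Conditioned on $M < \tau K/2$, this total is strictly less than $(1-\tau)K - 1 + \tau K/2 = (1-\tau/2)K - 1$, forcing $\tp_{(1-\tau/2)K} \leq \theta_{(1-\tau)K} + \phi/2$. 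Since $a \in Q$ yields $\ttheta_a^{r^*} \leq \tp_{(1-\tau/2)K}$ and (E2) yields $\theta_a \leq \ttheta_a^{r^*} + \phi/2$, chaining these gives $\theta_a \leq \theta_{(1-\tau)K} + \phi$.

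To bound the two bad events, first note that the loop exits with $|R_{r^*}| = K$, because $|R_{r+1}| = \max\{K, \lceil |R_r|/2 \rceil\} \geq K$ throughout the execution. At round $r^*$ each remaining arm is pulled $\frac{8}{\phi_{r^*}^2}\ln\frac{1}{\tau_{r^*}\delta_{r^*}\delta}$ times, and a Hoeffding bound (as in the proof of Claim~\ref{cla:event-A}) gives $\Pr[Y_i = 1] \leq \tau\delta/16$, using $\phi \geq \phi_{r^*}$ so the same concentration applies with slack $\phi/2$. Markov's inequality then yields $\Pr[M \geq \tau K/2] \leq K (\tau\delta/16)/(\tau K/2) = \delta/8$. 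For event (E2), observe that $a$ is sampled uniformly from $Q$, whose size is at least $K - ((1-\tau/2)K - 1) \geq \tau K/2$; averaging over the random draw of $a$ gives $\Pr[Y_a = 1] \leq \bE[M]/|Q| \leq \delta/8$. A union bound over the two events concludes the proof. The only real subtlety is choosing $\theta_{(1-\tau)K} + \phi/2$ as the intermediate threshold with a symmetric $\phi/2$ slack on both sides, so that a single round-$r^*$ Hoeffding bound suffices both to control the empirical quantile and to invert from $\ttheta_a^{r^*}$ back to $\theta_a$.
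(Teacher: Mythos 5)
Your proof is correct and follows essentially the same route as the paper's: condition on the two good events ($Y_a=0$ and fewer than $\tau K/2$ inaccurate empirical means), bound each by Hoeffding plus Markov, and finish with a deterministic counting argument on the order statistics. The only cosmetic difference is that you bound the empirical quantile $\tp_{(1-\tau/2)K}$ directly by counting, whereas the paper exhibits a witness arm $b\in P_0$ with $\theta_b\le\theta_{(1-\tau)K}$; your averaging bound $\Pr[Y_a=1]\le \bE[M]/|Q|\le\delta/8$ is in fact the cleaner way to handle the dependence of $a$ on the empirical means.
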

\begin{proof}
We first show that if $Y_a = 0$ and $\sum_{i \in P} Y_i \le \tau K/2$, then $\theta_a \le \theta_{(1-\tau)K} + \phi$.   Let $P_0 = \{i \in P \ |\ Y_i = 0\}$. By definition of $P$ and the assumption  that $\sum_{i \in P} Y_i \leq \tau K/2$, we have $\abs{P_0} \geq \abs{P} - \tau K/2 \geq (1 - \tau)K$.
Let $b$ be the arm in $P_0$ that has the minimum true-mean, then we must have
\begin{equation}
\label{eq:i-1}
\theta_b \le \theta_{\abs{P_0}} \le \theta_{(1-\tau)K}.
\end{equation}
  Since $a \in Q$ and $b \in P$, we have $\ttheta_a \le \ttheta_b$, which, together with the facts that $Y_a = Y_b = 0$ and (\ref{eq:i-1}), gives
$$\theta_a \le \theta_b + \phi \le \theta_{(1-\tau)K} + \phi.$$

We now bound the probabilities that the two conditions hold.  By a Hoeffding's inequality, we have $\Pr[Y_i = 1] \le \tau\delta/16$ for any $i \in R_{r^*}$.  By a Markov's inequality and the fact that $P \subseteq R_{r^*}$ (by definition), we have
\begin{equation*}
\Pr\left[\sum_{i\in P}Y_i > \tau K/2\right] \le \Pr\left[\sum_{i\in R_{r^*}}Y_i > \tau K/2\right] \le \frac{\bE \left[\sum_{i\in R_{r^*}}Y_i \right]}{\tau K/2} \le \frac{\tau\delta K/16}{\tau K/2} = \delta/8.
\end{equation*}
We thus have $\Pr\left[(Y_a = 0) \wedge (\sum_{i \in P} Y_i \le \tau K/2)\right] \ge 1 - \delta/16 - \delta/8 \ge 1 - \delta/4$.

The correctness of Theorem~\ref{thm:estK} immediately follows from Claim~\ref{cla:ub} and Claim~\ref{cla:lb}.
\end{proof}

\subsubsection{Complexity Algorithm~\ref{alg:k-th}}
We can bound the total number of pulls of Algorithm~\ref{alg:k-th} by simply summing up the number of pulls at each round.

\begin{align*}
    O\left(\sum_{r=1}^{\log{|S|}/{K}} \frac{|R_r|}{\phi_r^2} \log\frac{1}{\tau_r\delta_r\delta}\right) =& O\left(\sum_{r=1}^{\infty} \frac{2^{-r}|S|}{(3/4)^{2r}\phi^2}\cdot \left(\log \frac{1}{(3/4)^r\tau} + \log\frac{1}{(1/2)^r\delta} + \log\frac{1}{\delta}\right) \right)\\
=& O\left(\frac{|S|}{\phi^2}\sum_{r=1}^{\infty}\left(\frac{8}{9}\right)^r\cdot \left(r + \log\frac{1}{\tau} + \log\frac{1}{\delta}\right) \right)\\
=& O\left(\frac{|S|}{\phi^2} \cdot\left(\log \frac{1}{\tau} + \log\frac{1}{\delta}\right) \right).
\end{align*}
The last equality follows from the fact that $\sum_{r=1}^{\infty}(8/9)^r\cdot r = O(1)$.

Lemma~\ref{lem:few-remain} also implies the following lemma.

\begin{lemma}
\label{lem:gap}

For a set of arms $S = \{\theta_1 \ge \ldots \ge \theta_{\abs{S}} \}$ such that $\theta_{(1-\tau)K} - \theta_{(1+\tau)K +1} \ge \phi$,  there is an algorithm, denoted by {\sc EpsSplit}$(S, K, \tau, \phi, \delta)$, that computes $(2\tau)$-top-$K$ correctly with probability at least $1 - \delta$, using $O\left(\frac{|S|}{\phi^2} \cdot (\log\tau^{-1}+ \log\delta^{-1}) \right)$ pulls in total.
\end{lemma}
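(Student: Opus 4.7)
The plan is to implement EpsSplit as a variant of EstKthArm (Algorithm~\ref{alg:k-th}) that runs the same round-by-round halving procedure but returns the entire final candidate set $R_{r^*}$ of size exactly $K$, instead of uniformly sampling a single arm from its bottom. Since no additional pulls are introduced, the query complexity matches that of EstKthArm, namely $O(|S|\phi^{-2}(\log\tau^{-1}+\log\delta^{-1}))$, via the complexity analysis for Algorithm~\ref{alg:k-th}.

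For correctness, I would combine the gap condition $\theta_{(1-\tau)K}-\theta_{(1+\tau)K+1}\ge\phi$ with an analysis parallel to Lemma~\ref{lem:few-remain}. The target statement is that, with probability at least $1-\delta$, $R_{r^*}$ contains at most $\tau K$ arms of true rank strictly greater than $(1+\tau)K+1$. To obtain this, I would reformulate Lemma~\ref{lem:few-remain} using rank-based high and low sets $H=\{i:i\le(1-\tau)K\}$ and $L=\{i:i>(1+\tau)K+1\}$. By the gap condition, $\theta_i-\theta_j\ge\phi$ for every $i\in H$ and $j\in L$, which is precisely the mean separation that Lemma~\ref{lem:few-remain}'s recursive Hoeffding-plus-Markov argument exploits. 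The original proof's chain of inclusions such as $A_r\subseteq R'_r\subseteq H_{r-1}$ translates to this rank-based setting because arms in the middle band $[(1-\tau)K+1,(1+\tau)K+1]$ are allowed to fluctuate freely under empirical noise without ever letting an $L$-arm empirically outrank an $H$-arm at the shrinking per-round precision $\phi_r$.

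Converting the rank-based bound into a regret bound is brief. Since $|R_{r^*}|=K$ and at most $\tau K$ of its arms fall in $L$, at least $(1-\tau)K$ arms of $R_{r^*}$ have rank $\le(1+\tau)K+1$; of these, at most $\tau K+1$ are non-top-$K$ (namely positions in $[K+1,(1+\tau)K+1]$). Hence $R_{r^*}$ misses at most $2\tau K+1$ true top-$K$ arms, and since every $\theta_i\in[0,1]$ each missing-and-swapped arm contributes at most $1/K$ to the aggregate regret, for a total of at most $2\tau+1/K$, which we can tighten to $2\tau$ by adjusting constants in the Markov-based bound on $|R_{r^*}\cap L|$.

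The main obstacle is verifying that the inductive bookkeeping of Lemma~\ref{lem:few-remain} (in particular Claims~\ref{cla:event-A} and~\ref{cla:event-C}) ports cleanly to the rank-based invariants, especially in borderline scenarios where $\theta_K$ sits very close to the bottom of the $\phi$-gap and thus $\theta_K-\phi<\theta_{(1+\tau)K+1}$. I expect this portage to be largely mechanical: the per-round bad-event probabilities only require the $\phi$-separation between $H$ and $L$, which is exactly what the gap condition supplies, so the induction proceeds with the same $\delta_r$ and $\tau_r$ schedule as in the original lemma and terminates with the desired high-probability bound $|R_{r^*}\cap L|\le\tau K$.
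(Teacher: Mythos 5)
Your proposal is correct and follows essentially the same route as the paper: both run the {\sc EstKthArm} halving procedure, invoke a Lemma~\ref{lem:few-remain}-type bound to show that few arms in the final set $R_{r^*}$ have true mean below $\theta_{(1-\tau)K}-\phi$, use the gap hypothesis to translate this into ``few arms of rank beyond $(1+\tau)K+1$,'' and then charge each misplaced arm at most $1/K$ of regret. The paper's execution is slightly slicker --- it simply substitutes $(1-\tau)K$ for $K$ in Algorithm~\ref{alg:k-th} so that Lemma~\ref{lem:few-remain} applies verbatim to a final set of size $(1-\tau)K$, and then pads with $\tau K$ arbitrary arms --- whereas your rank-based re-derivation with $|R_{r^*}|=K$ needs one non-mechanical adjustment: since your high set $H=\{i: i\le(1-\tau)K\}$ has only $(1-\tau)K$ elements, the invariant $|H\cap R_r|\ge k_{r-1}$ is vacuously unachievable with the original schedule $k_r=(1-\sum_{\ell\in[r]}\tau_\ell\delta/4)K$, so the $k_r$'s must be re-anchored at $(1-\tau)K$, after which the final count $|L\cap R_{r^*}|\le K-k_{r^*-1}\approx\tau K$ and the rest of your accounting goes through (with the same harmless constant-factor slack the paper itself tolerates).
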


\begin{proof}
To prove the lemma, we just need to replace ``$K$'' in Algorithm~\ref{alg:k-th} to be ``$(1-\tau)K$''.  By Lemma~\ref{lem:few-remain} we have with probability $1 - \delta$ that
\begin{equation*}
|\{i \in R_{r^*} ~|~ \theta_i < \theta_{(1-\tau)K} - \phi\}| \le \tau \delta (1-\tau)K \le \tau \delta K.
\end{equation*}
Since $\theta_{(1-\tau)K} - \theta_{(1+\tau)K+1} \ge \phi$ we have
\begin{equation*}
|\{i \in R_{r^*} ~|~ \theta_i < \theta_{(1+\tau)K+1}\}| \le \tau \delta K.
\end{equation*}
Consequently,
\begin{equation}
\label{eq:j-1}
|\{i \in R_{r^*} ~|~ \theta_i < \theta_{K}\}| \le  \tau K + \tau \delta K \le 2\tau K.
\end{equation}
Therefore we can just choose all arms in $R_{r^*}$, together with $K - \abs{R_{r^*}}$ arbitrary arms in the rest $n - \abs{R_{r^*}}$ arms.  By (\ref{eq:j-1}) the total average error is bounded by $2\tau K / K = 2\tau$.
\end{proof}

\subsection{The Improved Algorithm}

In this section,  we introduce an improved algorithm that removes the $\log(n)$-factor in the sample complexity.

We first introduce a few more subroutines (Lemma~\ref{lem:optimal}, Lemma~\ref{lem:elim}, and Lemma~\ref{lem:reverse-elim}) that will be useful for our improved algorithm.

\begin{lemma}\citep{Zhou:14}
\label{lem:optimal}
For a set of arms $S = \{\theta_1 \ge \ldots \ge \theta_{\abs{S}}\}$, there is an algorithm, denoted by {\sc OptMAI}$(S, K, \eps, \delta)$, that computes \arm\ with probability $1 - \delta$, with $O\left(\frac{\abs{S}}{\eps^2} \log(1/\delta)\right)$ pulls.
\end{lemma}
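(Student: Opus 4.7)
The plan is to invoke Zhou et al.~(2014) essentially as a black box, since Lemma~\ref{lem:optimal} is a (slightly weakened) restatement of their main theorem for the OptMAI algorithm. Their result establishes a sample complexity of $\Theta\!\left(\frac{|S|}{\eps^2}\bigl(1 + \frac{\log \delta^{-1}}{K}\bigr)\right)$; since $\log \delta^{-1} \geq 1$ whenever $\delta \leq 1/e$, this quantity is at most $O\!\left(\frac{|S|}{\eps^2} \log \delta^{-1}\right)$ for any $K \geq 1$, yielding the claim directly.

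If a self-contained argument were required, I would outline the following strategy. The naive uniform-sampling scheme---pull each arm $m = \Theta(\eps^{-2}\log \delta^{-1})$ times and return the $K$ arms with the largest empirical means---runs into the problem that a direct Hoeffding-plus-union-bound over $|S|$ arms would introduce an unwanted $\log |S|$ factor in $m$. The key trick is to exploit the slack afforded by the aggregate-regret metric: the loss incurred by swapping arm $i$ with a true top-$K$ arm is only $|\theta_i - \theta_K|$ or $|\theta_{K+1} - \theta_i|$. Thus instead of asking every empirical mean to concentrate to within $\eps/2$ simultaneously, I would bucket arms by their gap $\Delta_i$, use Hoeffding to bound the \emph{expected} aggregate regret contribution of each bucket \emph{without} a union bound, and sum across buckets to get an expected aggregate regret of $O(\eps)$. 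A standard confidence-boosting argument (run $O(\log \delta^{-1})$ independent copies in parallel and select the output whose set has the highest empirical aggregate mean, estimated via a short validation phase) then upgrades the expectation guarantee to a high-probability guarantee at the cost of a multiplicative $\log \delta^{-1}$ factor, matching the stated bound.

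The main obstacle in such a reproof would be accounting for the ``boundary'' arms with $\Delta_i = O(\eps)$: these are precisely the arms most likely to be ranked incorrectly, but fortunately each misclassification in this regime costs only $O(\eps)$ in regret, so even a large number of swaps among boundary arms keeps the total aggregate regret within the $\eps K$ budget. Since Lemma~\ref{lem:optimal} is used only as an off-the-shelf subroutine in our improved algorithm, we simply invoke the stronger result of Zhou et al.~(2014) rather than reprove it here.
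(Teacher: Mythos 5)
Your proposal is correct and matches the paper exactly: the paper states Lemma~\ref{lem:optimal} as an imported result from \citep{Zhou:14} with no proof of its own, and your observation that their $\Theta\!\left(\frac{|S|}{\eps^2}\bigl(1+\frac{\log\delta^{-1}}{K}\bigr)\right)$ bound is dominated by $O\!\left(\frac{|S|}{\eps^2}\log\delta^{-1}\right)$ for $\delta \le 1/e$ is the right (and only needed) justification. The additional self-contained sketch is a reasonable bonus but is not part of, nor required by, the paper's treatment.
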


\begin{algorithm}[t]
\KwIn{$S$: set of arms; $K$: top-$K$; $\gamma$: fraction of arms; $\delta$: error probability; $\phi$: an additive error}
\KwOut{Set of arms $T$ with $|T|=\frac{|S|}{10}$ such that at most $\gamma K$ arms in $T$ are in the top-$K$ arms in $S$}
\DontPrintSemicolon
\caption{{\sc Elim}$(S, K, \gamma, \phi, \delta)$ \label{alg:elim}}

For each arm $i$ in $S$, pull $\frac{c}{\phi^2} \cdot (\log\gamma^{-1}+ \log\delta^{-1})$ for some large enough constant $c$. Let $\ttheta_i$ be the empirical-mean of the $i$-th arm.

\Return{$|T|$ arms in $S$ with the smallest empirical-means.}
\label{algo:elim}
\end{algorithm}

\begin{algorithm}[t]
\KwIn{$S$: set of arms; $K$: top-$K$; $\gamma$: fraction of arms; $\delta$: error probability; $\phi$: an additive error}
\KwOut{Set of arms $T$ with $|T|=\frac{|S|}{10}$ such that at most $\gamma K$ arms in $T$ are in the top-$(|S|-K)$ arms in $S$}
\DontPrintSemicolon
\caption{{\sc ReverseElim}$(S, K, \gamma, \phi, \delta)$}

For each arm $i$ in $S$, pull $\frac{c}{\phi^2} \cdot (\log\gamma^{-1}+ \log\delta^{-1})$ for some large enough constant $c$. Let $\ttheta_i$ be the empirical-mean of the $i$-th arm.

\Return{$|T|$ arms in $S$ with the largest empirical-means.}
\label{algo:reverse-elim}
\end{algorithm}

The following two lemmas show how to find a constant fraction of arms in the set of top-$K$ arms and a constant fraction of arms outside the set of top-$K$ arms respectively.

\begin{lemma}
\label{lem:elim}

For a set of arms $S = \{\theta_1 \ge \ldots \ge \theta_{\abs{S}}\}$ such that $\theta_K - \theta_{\frac{\abs{S}+K}{2}} \ge \phi$, $K \le \frac{2}{3}\abs{S}$, there is an algorithm, denoted by
{\sc Elim}$(S, K, \gamma, \phi, \delta)$ in Algorithm~\ref{algo:elim}, that computes $T \subseteq S, \abs{T} = \frac{\abs{S}}{10}$ successfully with probability $1 - \delta$ using $O\left(\frac{|S|}{\phi^2} \cdot (\log\gamma^{-1}+ \log\delta^{-1}) \right)$ pulls in total, such that at most $\gamma K$ arms in $T$ are in the top-$K$ arms in $S$.
\end{lemma}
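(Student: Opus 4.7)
\medskip
\noindent\textbf{Proof plan for Lemma on {\sc Elim}.}
My plan is to avoid the naive union bound over all $|S|$ arms (which would introduce an extra $\log|S|$ factor into $n_i$) and instead control the number of ``bad'' arms via Markov's inequality on two carefully chosen bad sets. Let $M = (|S|+K)/2$, so by hypothesis $\theta_K - \theta_M \ge \phi$. I choose the constant $c$ large enough so that a single Hoeffding bound gives, for every arm $i$,
\[
\Pr\!\left[|\ttheta_i - \theta_i| > \phi/2\right] \le \gamma\delta/C,
\]
for a sufficiently large absolute constant $C$. Note that this is a \emph{per-arm} bound depending only on $\gamma$ and $\delta$, not on $|S|$.

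Next I define the two bad sets that capture the only ways an incorrect outcome could occur:
\[
B_{\mathrm{top}} = \{ i \le K : \ttheta_i < \theta_K - \phi/2 \}, \qquad B_{\mathrm{bot}} = \{ j > M : \ttheta_j \ge \theta_K - \phi/2 \}.
\]
For $i \le K$ we have $\theta_i \ge \theta_K$ so $\{i \in B_{\mathrm{top}}\} \subseteq \{|\ttheta_i - \theta_i| > \phi/2\}$; similarly for $j > M$ we have $\theta_j \le \theta_M \le \theta_K - \phi$ so $\{j \in B_{\mathrm{bot}}\} \subseteq \{|\ttheta_j - \theta_j| > \phi/2\}$. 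Hence
\[
\bE[|B_{\mathrm{top}}|] \le K\gamma\delta/C, \qquad \bE[|B_{\mathrm{bot}}|] \le (|S|-M)\gamma\delta/C \le |S|\gamma\delta/C.
\]
Applying Markov's inequality to each, I get that with probability $\ge 1-\delta$, $|B_{\mathrm{top}}| \le \gamma K/2$ and $|B_{\mathrm{bot}}| \le |S|/15$, provided $C$ is a sufficiently large constant.

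Now I condition on this event. Using $K \le 2|S|/3$, the number of indices $j > M$ is $(|S|-K)/2 \ge |S|/6$, so the number of arms $j > M$ with $\ttheta_j < \theta_K - \phi/2$ is at least $|S|/6 - |S|/15 \ge |S|/10 + 1$ (for $|S|$ not too small; the trivial case can be absorbed into the constant). On the other hand, every top-$K$ arm $i \notin B_{\mathrm{top}}$ satisfies $\ttheta_i \ge \theta_K - \phi/2$, so strictly more than $|S|/10$ arms have empirical mean strictly smaller than $\ttheta_i$. Consequently $i$ cannot belong to the set $T$ of the $|S|/10$ smallest empirical means. Therefore every top-$K$ arm that lies in $T$ must lie in $B_{\mathrm{top}}$, giving $|T \cap [K]| \le |B_{\mathrm{top}}| \le \gamma K/2 \le \gamma K$, as required. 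The query complexity is immediate: each of the $|S|$ arms is pulled $O((\log\gamma^{-1} + \log\delta^{-1})/\phi^2)$ times.

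The main obstacle, and the reason for the specific constants $\tfrac{1}{10}$ and the condition $K \le \tfrac{2}{3}|S|$ in the statement, is to simultaneously (i)~keep $C$ as a universal constant so that per-arm concentration costs no $\log|S|$ factor, and (ii)~ensure that the gap $(|S|-K)/2 - |S|/10$ between the number of arms beyond $M$ and the size of $T$ is a constant fraction of $|S|$, so that Markov's bound on $|B_{\mathrm{bot}}|$ with failure probability $\delta/2$ is enough to force all well-estimated top-$K$ arms out of $T$. Once these two quantitative conditions are balanced, the rest of the argument is routine.
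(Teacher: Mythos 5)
Your proposal is correct and follows essentially the same route as the paper's proof: a per-arm Hoeffding bound with failure probability $O(\gamma\delta)$, Markov's inequality applied separately to the badly-estimated top-$K$ arms and to the arms below index $\frac{|S|+K}{2}$, and the counting argument via $K \le \frac{2}{3}|S|$ showing at least $\frac{|S|}{10}$ tail arms have empirical mean below $\theta_K - \phi/2$. The only cosmetic difference is that you phrase the second Markov step through the explicit bad set $B_{\mathrm{bot}}$ with the $|S|/15$ threshold, which the paper leaves as a ``similar argument.''
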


\begin{proof}
Let $\ttheta_i$ be the empirical-mean of arm $i$ after being pulled by $c \cdot \frac{1}{\phi^2} \log \frac{1}{\gamma \delta}$ times for a sufficiently large constant $c$. By a Hoeffding's inequality, we have
$\Pr\left[|\ttheta_i - \theta_i| \ge {\phi}/{2}\right] \le \gamma \delta/2$.  Let $X_i = \mathbf{1}\{|\ttheta_i - \theta_i| \ge {\phi}/{2}\}$, and thus $\bE[X_i] \le \gamma \delta/2$.  Let $X = \sum_{i \in [K]} X_i$; we  have $\bE[X] \le \gamma \delta K/2$. By a Markov's inequality, we have that with probability at least $1 - \delta/2$, $X \le \gamma K$.  Consequently, with probability at least $1 - \delta/2$, there are at most $\gamma K$ arms $i \in [K]$ with $\ttheta_i \le \theta_i - \phi/2 \le \theta_K - \phi/2$.

Let $L = \{\frac{\abs{S}+K}{2}+1, \ldots, \abs{S}\}$. Since $K \le \frac{2}{3} \abs{S}$, we have $\abs{L} \ge \frac{\abs{S}}{6}$. Using similar argument we can show that with probability $1 - \delta/2$, there are at least $\frac{\abs{S}}{10}$ arms $i \in L$ with $\ttheta_i \le \theta_i + \phi/2 \le \theta_{\frac{\abs{S}+K}{2}} + {\phi}/{2} \le \theta_K - \phi/2$ (since $\theta_K - \theta_{\frac{\abs{S}+K}{2}} \ge \phi$).

Therefore, if we choose $T$ to be the $\frac{\abs{S}}{10}$ arms with the smallest empirical-means, then with probability at least $1 - \delta$, at most $\gamma K$ arms in $T$ are in the top-$K$ arms in $S$.
\end{proof}

\begin{lemma}
\label{lem:reverse-elim}

For a set of arms $S = \{\theta_1 \ge \ldots \ge \theta_{\abs{S}}\}$ such that $\theta_{\frac{K}{2}} - \theta_K \ge \phi$, $K \ge \frac{\abs{S}}{3}$, there is an algorithm, denoted by {\sc ReverseElim}$(S, K, \gamma, \phi, \delta)$ in Algorithm~\ref{algo:reverse-elim}, that computes  $T \subseteq S, \abs{T} = \frac{\abs{S}}{10}$ successfully with probability $1 - \delta$ using $O\left(\frac{|S|}{\phi^2} \cdot (\log\gamma^{-1}+ \log\delta^{-1}) \right)$ pulls in total, such that at most $\gamma K$ arms in $T$ are in the bottom-$(\abs{S}-K)$ arms in $S$.
\end{lemma}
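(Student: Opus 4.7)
The plan is to mirror the proof of Lemma~\ref{lem:elim} by symmetry: instead of identifying many low-mean arms whose empirical means are provably small, I identify many high-mean arms whose empirical means provably exceed $\theta_K + \phi/2$, and bound the number of bottom arms that can spuriously cross this threshold. As in Algorithm~\ref{algo:reverse-elim}, each arm in $S$ is pulled $\Theta(\phi^{-2}(\log \gamma^{-1} + \log \delta^{-1}))$ times with a sufficiently large leading constant $c$; Hoeffding's inequality then gives $\Pr[|\ttheta_i - \theta_i| \geq \phi/2] \leq \gamma\delta/C$ for a large absolute constant $C$. Let $X_i$ denote this bad event.

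First I would control the bottom arms. For every $i \in \{K+1, \ldots, |S|\}$ we have $\theta_i \leq \theta_K$, so $\ttheta_i > \theta_K + \phi/2$ forces $X_i = 1$. Linearity of expectation bounds $\bE\bigl[\sum_{i > K} X_i\bigr] \leq \gamma\delta(|S|-K)/C$, and Markov's inequality then yields that, with probability at least $1 - \delta/2$, at most $\gamma K$ bottom arms satisfy $\ttheta_i > \theta_K + \phi/2$ (using $|S|-K \leq |S| \leq 3K$ by hypothesis $K \geq |S|/3$, so the extra factor is absorbed into $c$). Next I would show a witness set of high empirical means. Set $U = \{1, \ldots, \lceil K/2 \rceil\}$; by $K \geq |S|/3$ we have $|U| \geq |S|/6$, and $\theta_i \geq \theta_{K/2} \geq \theta_K + \phi$ by hypothesis. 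Whenever $X_i = 0$ for $i \in U$, we get $\ttheta_i \geq \theta_K + \phi/2$. A second Markov application shows that, with probability at least $1 - \delta/2$, at most $|U|/10$ indices in $U$ have $X_i = 1$, leaving at least $|U|(1 - 1/10) \geq |S|/10$ arms in $U$ with $\ttheta_i \geq \theta_K + \phi/2$.

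Conclude by union bound: with probability $1 - \delta$ there are at least $|S|/10 = |T|$ arms with empirical mean at least $\theta_K + \phi/2$, hence every arm in $T$ (the top $|S|/10$ by empirical mean) has $\ttheta \geq \theta_K + \phi/2$; in particular any bottom arm in $T$ must lie in the $\leq \gamma K$ bad bottom arms identified above, which is exactly the claimed bound. The sample complexity count is immediate from the per-arm budget and $|S|$ arms. The main obstacle is just tuning constants consistently: the constant $c$ in the pull count must be chosen large enough that both Markov steps succeed simultaneously — enough failure-probability slack so the bottom-arm count lands within $\gamma K$ (not $\gamma |S|$) and so the top witness set loses at most a $1/10$ fraction — but this is exactly the bookkeeping already done in Lemma~\ref{lem:elim}.
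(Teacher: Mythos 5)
Your proposal is correct and is precisely the symmetric mirror of the paper's proof of Lemma~\ref{lem:elim} (bad-event indicators from Hoeffding, one Markov bound to cap the number of bottom arms crossing the threshold $\theta_K + \phi/2$ at $\gamma K$, and a second Markov bound to exhibit a witness set of at least $\abs{S}/10$ top arms above that threshold), which is exactly what the paper does — it states that Lemma~\ref{lem:reverse-elim} follows "by symmetry" from Lemma~\ref{lem:elim} and gives no further details. Your explicit handling of the factor $\abs{S}-K \le 2K$ in the first Markov step is the only place the symmetry is not literal, and you address it correctly.
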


By symmetry, the proof to Lemma~\ref{lem:reverse-elim} is basically the same as that for Lemma~\ref{lem:elim},

\begin{algorithm}[t]
\KwIn{$n$: number of arms; $K$ and $\eps$: see the definition of \arm; $\delta$: error probability. Assume $K \leq n/2$}
\KwOut{\arm}
\DontPrintSemicolon
\caption{{\sc ImprovedTopK}$(n, K, \eps, \delta)$ \label{alg:improved}}
set $S \gets [n]$, $r \gets 1$, $r_\phi \gets 1$\;
set $A, B \gets \emptyset$\;
set $K_L \gets (1-\eps^2)K, K_R \gets (1+\eps^2)K + 1$\;
\While{$S \neq \emptyset$}{
	$r_\phi \gets r_\phi - 1$\;
	\Repeat{$(10(K - \abs{A}) \phi < K\eps)$\ \  or\ \
	$(\theta_K^- - \theta_K^+ > 3\phi)$\ \  or\ \
	$\left(K - \abs{A} \le \frac{\abs{S}}{2} \right) \wedge (\theta_K^+ - \theta^+ > 3\phi)$\ \  or\ \
	$\left(K - \abs{A} > \frac{\abs{S}}{2}\right) \wedge (\theta_K^+ - \theta^+ > 3\phi) \wedge (\theta^- - \theta_K^-  > 3\phi)$ \label{line:a-0}}
	{
		$r_\phi \gets r_\phi + 1$;
		$\phi \gets 2^{-r_\phi}$\;
		$\theta_K^+ \gets \text{\sc EstKthArm}\left(S, K_R - \abs{A}, \frac{\eps^2}{100 r^2}, \phi, \frac{\delta}{100 (r + r_\phi)^2}\right)$; $\theta_K^- \gets \text{\sc EstKthArm}\left(S, K_L - \abs{A}, \frac{\eps^2}{100 r^2}, \phi, \frac{\delta}{100 (r+r_\phi)^2}\right)$ \label{line:a-1} \;
		$\theta^+ \gets \text{\sc EstKthArm}\left(S, \frac{\abs{S} + K - \abs{A}}{2}, \frac{\eps^2}{100 r^2}, \phi, \frac{\delta}{100 (r+r_\phi)^2}\right)$; $\theta^- \gets \text{\sc EstKthArm}\left(S, \frac{K - \abs{A}}{2}, \frac{\eps^2}{100 r^2}, \phi, \frac{\delta}{100 (r+r_\phi)^2}\right)$ \label{line:a-2}
  	}
  	
  	\If{$(10(K - \abs{A}) \phi < K\eps)$ \label{line:b-1}}{\Return{{\sc OptMAI}$\left(S, K - \abs{A}, \phi, \frac{\delta}{100} \right) \cup A$}\label{line:b-4}}
  	\If{$(\theta_K^- - \theta_K^+ > 3\phi)$ \label{line:b-2}}{\Return{{\sc EpsSplit}$\left(S, K - \abs{A}, \frac{K_R - K_L}{K - \abs{A}}, \phi, \frac{\delta}{100} \right) \cup A$} \label{line:b-3}}
  	
  	$U \gets \text{\sc Elim}\left(S, K - \abs{A}, \frac{\eps^2}{100 r^2}, \phi, \frac{\delta}{100 r^2}\right)$ \label{line:c-1}\;
  	\If{$(K - \abs{A}) > \frac{\abs{S}}{2}$ \label{line:c-2}}{$V \gets \text{\sc ReverseElim}\left(S, K - \abs{A}, \frac{\eps^2}{100 r^2}, \phi, \frac{\delta}{100 r^2}\right)$\label{line:c-3}}
  	$r \gets r+1$; $S \gets S \backslash (U \cup V)$; $A \gets A \cup V$; $B \gets B \cup U$ \label{line:c-4}\;
}
\Return $A$.
\end{algorithm}

\medskip

Now we are ready to show our main result.

\begin{theorem}
\label{thm:improved}
Algorithm \ref{alg:improved} computes \arm\ with probability at least $1 - \delta$, and pulls the arms at most
\begin{equation}
O\left(\sum_{j \in [n]} \min\left\{(\Delta_j)^{-2}, (\Psi_t^\eps)^{-2}\right\} \left(\log \frac{1}{\eps} + \log  \frac{1}{\delta} \right) \right)
\end{equation}
times, where $t$ and $\Psi_t^{\eps}$ are defined in \eqref{eq:def-t-eps-K} and \eqref{eq:def-Psi-t-eps} respectively.
\end{theorem}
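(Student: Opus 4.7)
The plan is to handle correctness and sample complexity separately, each carried out under a single high-probability good event. First I would define $\mathcal{E}$ as the intersection of the success events of every call made by Algorithm~\ref{alg:improved} to \estk, \textsc{Elim}, \textsc{ReverseElim}, \textsc{OptMAI}, and \textsc{EpsSplit}, across all outer rounds $r$ and all inner values of $r_\phi$. Since each such call is invoked with failure probability of the form $\delta/(100(r+r_\phi)^2)$ or $\delta/(100r^2)$, a union bound over $r,r_\phi\ge 1$ yields $\Pr[\mathcal{E}]\ge 1-\delta$, and the rest of the analysis proceeds under $\mathcal{E}$.

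For correctness, I would bookkeep the aggregate regret by source. The accepted set $A$ grows only through \textsc{ReverseElim} (Lemma~\ref{lem:reverse-elim}) and the rejected set $B$ only through \textsc{Elim} (Lemma~\ref{lem:elim}); at round $r$ each subroutine misclassifies at most $\eps^2K/(100r^2)$ arms, so summing over $r$ gives $O(\eps^2 K)$ total misclassified arms, contributing $O(\eps^2)$ to the aggregate regret since each arm lies in $[0,1]$. If the algorithm exits through Line~\ref{line:b-1}, then $10(K-|A|)\phi<K\eps$ and the \textsc{OptMAI} call produces a $\phi$-top-$(K-|A|)$ set whose contribution to the $K$-aggregate is $\phi(K-|A|)/K<\eps/10$; if it exits through Line~\ref{line:b-2}, I would pass $\theta_K^--\theta_K^+>3\phi$ through the \estk\ error bars to produce an actual gap $\theta_{K_L-|A|}-\theta_{K_R-|A|}$ in $S$ large enough to feed into Lemma~\ref{lem:gap}, giving a $2\tau$-top-$(K-|A|)$ set with regret $O(\eps^2)$. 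Summing the three contributions bounds the total aggregate regret by $\eps$.

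For the sample complexity I would fix an arm $j\in[n]$ and track the total pulls it receives before being placed in $A\cup B$. In outer round $r$ all pulls on $j$ (across the four \estk\ calls plus the \textsc{Elim}/\textsc{ReverseElim} calls) total $O(\phi_r^{-2}(\log\eps^{-1}+\log\delta^{-1}))$, where $\phi_r=2^{-r_\phi}$ is the final value the repeat-until loop settles on. The crucial step is to lower-bound $\phi_r$. The last two termination conditions of the inner loop place $\phi_r$ on the order of the estimates of $\theta_K-\theta_{(|S|+K-|A|)/2}$ or $\theta_{(K-|A|)/2}-\theta_K$, which the \estk\ guarantees convert into either $\phi_r=\Omega(\Delta_j)$ (when $j$ is far from the boundary) or $\phi_r=\Omega(\Psi_t^\eps)$ (once $K-|A|$ has shrunk so that these gaps correspond to indices within the $\Psi_t$ window); the first termination condition together with $K-|A|=O(t)$ at the last round also gives $\phi_r=\Omega(\eps)=\Omega(\Psi_t^\eps)$. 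A geometric sum of the per-round pull counts dominated by the terminal one, followed by summing over $j$, produces the target bound $O\bigl((\log\eps^{-1}+\log\delta^{-1})\sum_j\min\{\Delta_j^{-2},(\Psi_t^\eps)^{-2}\}\bigr)$.

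The principal obstacle is the complexity argument. The algorithm's adaptive behavior is steered by the \estk\ outputs, which only approximate the mean of an arm \emph{near} the target rank rather than at the exact rank, and by $|A|,|S|$ that drift under the tolerated $O(\eps^2K)$ misclassifications. Converting each of the four inner-loop termination clauses into a clean lower bound on $\phi_r$ in terms of the true gaps $\Delta_i$ and $\Psi_t$ requires a careful chain of inequalities that simultaneously absorbs the $\pm\phi$ \estk\ slack, the $\tau$-relative-rank slack, and the small top-$K$ drift of $A$. Coordinating these slacks tightly enough that each arm's pull count is $\min\{\Delta_j^{-2},(\Psi_t^\eps)^{-2}\}$ rather than the worst-case $\eps^{-2}$ is the technical heart of the proof.
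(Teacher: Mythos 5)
Your correctness outline follows the paper's route (good event over all subroutine calls, $O(\eps^2 K)$ misclassification budget from \textsc{Elim}/\textsc{ReverseElim}, separate accounting for the \textsc{OptMAI} and \textsc{EpsSplit} exits), though you skip a piece the paper cannot: Lemmas~\ref{lem:elim} and \ref{lem:reverse-elim} only apply when their preconditions hold, namely the gap conditions $S[\K]-S[\frac{|S|+\K}{2}]\ge\phi$ and $S[\frac{\K}{2}]-S[\K]\ge\phi$ \emph{and} the balance conditions $\K\le\frac{2}{3}|S|$, $\K\ge\frac{|S|}{3}$. The paper devotes Lemmas~\ref{lem:sandwich}--\ref{lem:reverse-elim-condition} to this, and the balance condition in particular needs a separate argument about how calls to \textsc{Elim} and \textsc{ReverseElim} interleave; without it Lemma~\ref{lem:misclassify} has no foundation.

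The genuine gap is in the complexity argument. Your plan is a per-arm accounting: fix $j$, argue $\phi_r=\Omega(\Delta_j)$ while $j$ survives, and conclude each arm is pulled $O(\min\{\Delta_j^{-2},(\Psi_t^\eps)^{-2}\}(\log\eps^{-1}+\log\delta^{-1}))$ times. This per-arm bound is false for Algorithm~\ref{alg:improved}. The inner loop stops decreasing $\phi$ when $\phi$ reaches the order of $\theta_K-S[\frac{|S|+\K}{2}]$ (or $S[\frac{\K}{2}]-\theta_K$), i.e.\ $\phi$ tracks the gap at the \emph{median rank of the tail (or head)} of the current $S$, not the gap of each surviving arm. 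A deep-tail arm with large $\Delta_j$ is pulled $\Theta(\phi^{-2})$ times in a round where $\phi$ can be far smaller than $\Delta_j$ (and since each \textsc{Elim} call removes only a $\frac{1}{10}$-fraction, such arms can persist for several rounds). The theorem's bound holds only in aggregate, which is why the paper uses an amortized charging argument: the entire cost $O(|S|\phi^{-2}(\cdot))$ of a round is charged to the $\ge\frac{|S|}{10}$ ranks $j\in Q=[1.01\K,\frac{|S|+\K-\eps|S|}{2}]$, each of which provably satisfies $\Delta_{m(j)}\le 11\phi$ (so each absorbs $O(\Delta_{m(j)}^{-2})$), with repeated charges to the same arm controlled geometrically because $\phi$ halves between charges; a final truncation at $\Psi_t^\eps$ via Lemma~\ref{lem:bound-phi} and an application of Lemma~\ref{lem:sum} to absorb the $\log\lceil\log\frac{n}{|j-K+1/2|}\rceil$ factor finish the bound. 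Your Lemma-\ref{lem:bound-phi}-style lower bound $\phi=\Omega(\Psi_t^\eps)$ is the right ingredient for the truncation, but without the charging step the sum over $j$ of your per-arm bounds does not follow from the algorithm's behavior.
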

It is worthwhile to note that the proposed algorithm is mainly for the theoretical interest and is rather complicated in terms of implementation. Thus, we omit the empirical study of this algorithm in the experimental section. 

In the rest of this section we prove Theorem~\ref{thm:improved} by showing the correctness of Algorithm~\ref{alg:improved} and the analyzing its query complexity.

\subsubsection{Correctness of Algorithm~\ref{alg:improved}}
\label{sec:correctness-improved}

Define $\mathcal{E}_1$ to be the event that all calls to the subroutine {\sc EstKthArm} succeed.

\begin{claim}
\label{cla:selection}
$\Pr[\mathcal{E}_1] \ge 1 - \delta/10$.
\end{claim}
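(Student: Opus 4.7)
The plan is to bound $\Pr[\neg \mathcal{E}_1]$ by a union bound over all calls to {\sc EstKthArm} made during the execution of Algorithm~\ref{alg:improved}. Each inner iteration of the \textbf{repeat} loop makes exactly four such calls (two at Line~\ref{line:a-1} and two at Line~\ref{line:a-2}), each with a prescribed failure probability of $\frac{\delta}{100(r + r_\phi)^2}$. The heart of the argument is therefore to show that $\sum \frac{1}{(r + r_\phi)^2}$, taken over all inner iterations that actually occur, is bounded by an absolute constant (less than $\pi^2/6$); then a union bound and the factor $4/100$ yield the desired $\delta/10$ tail.

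To carry this out, I will first analyze how $r_\phi$ evolves. Observe that $r_\phi$ is initialized once (not reset), is decremented by $1$ at the start of each outer iteration, and incremented by $1$ at the start of each inner iteration. Let $m_r \geq 1$ denote the number of inner iterations performed during outer iteration $r$, and let $P_r = \sum_{i<r} m_i$. A short induction shows that at the $k$-th inner iteration of outer iteration $r$, the index $r_\phi$ equals $P_r - (r-1) + k$, so that
\[
r + r_\phi = k + 1 + P_r.
\]
In particular, the values of $r + r_\phi$ taken across the $k = 1, \ldots, m_r$ inner iterations of outer iteration $r$ are exactly the consecutive integers $P_r + 2, P_r + 3, \ldots, P_{r+1} + 1$. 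Ranging $r$ over all outer iterations, these intervals are disjoint and together form the consecutive integers $2, 3, \ldots, N+1$, where $N = \sum_r m_r$ is the total number of inner iterations.

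Given this, the key sum is at most
\[
\sum_{j=2}^{N+1} \frac{1}{j^2} \;\leq\; \sum_{j=2}^{\infty} \frac{1}{j^2} \;=\; \frac{\pi^2}{6} - 1 \;<\; 1 .
\]
A union bound over the (at most) $4N$ calls then gives
\[
\Pr[\neg \mathcal{E}_1] \;\leq\; 4 \sum_{r,k} \frac{\delta}{100(r + r_\phi)^2} \;\leq\; \frac{4\delta}{100} \cdot \left(\frac{\pi^2}{6} - 1\right) \;<\; \frac{\delta}{10},
\]
which proves the claim. The main obstacle — and really the only subtle point — is the bookkeeping of the non-resetting index $r_\phi$: one must verify that the pairs $(r, r_\phi)$ arising during the algorithm are distinct, that the values of $r + r_\phi$ they produce are exactly the consecutive integers starting from $2$, and therefore that the relevant sum collapses to a tail of $\sum 1/j^2$ rather than a divergent double sum over all positive $(r, r_\phi)$.
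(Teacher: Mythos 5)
Your proposal is correct and follows essentially the same route as the paper: the paper's proof simply asserts that $(r+r_\phi)$ increases with every group of four calls to {\sc EstKthArm} and then bounds the failure probability by $4\sum_{z\ge 1}\frac{\delta}{100z^2}\le\frac{\delta}{10}$, which is exactly the union-bound-over-a-convergent-series argument you give. Your explicit bookkeeping showing that the values of $r+r_\phi$ are the consecutive integers $2,3,\dots,N+1$ is a more careful verification of the paper's one-line monotonicity claim, not a different method.
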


\begin{proof}
Note that $(r+r_\phi)$ increases every time we call the four {\sc EstKthArm}'s at Line~\ref{line:a-1} and Line~\ref{line:a-2}. Therefore by Theorem~\ref{thm:estK} we can bound the error probability of all calls to {\sc EstKthArm} by
$ 4 \cdot \sum_{z=1}^\infty \frac{\delta}{100 z^2} \le \frac{\delta}{10}.
$
\end{proof}

Define $\mathcal{E}_2$ to be the event that all calls to the subroutines {\sc Elim} and {\sc ReverseElim} succeed.  Since $r$ increases every time we call the two subroutines, by similar arguments we have:

\begin{claim}
\label{cla:selection}
$\Pr[\mathcal{E}_2] \ge 1 - \delta/20$.
\end{claim}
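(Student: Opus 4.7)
The plan is to mimic the proof of the preceding claim about $\mathcal{E}_1$, but tracking the calls to {\sc Elim} (Line~\ref{line:c-1}) and {\sc ReverseElim} (Line~\ref{line:c-3}) instead of the four calls to {\sc EstKthArm}. Each of these two subroutines is invoked at most once per iteration of the outer {\bf while} loop, and the failure probability that is passed to each invocation is $\frac{\delta}{100 r^2}$, where $r$ is the current value of the outer-loop counter. Since $r$ is incremented at Line~\ref{line:c-4} at the end of every outer iteration, the values of $r$ used across all invocations of {\sc Elim} and {\sc ReverseElim} form a (prefix of a) strictly increasing sequence of positive integers $1, 2, 3, \dots$.

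The key step is a union bound over all iterations and over the at-most-two subroutine invocations per iteration. Conditioned on the algorithm reaching outer iteration $r$, the probability that the call to {\sc Elim} in that iteration fails is, by Lemma~\ref{lem:elim}, at most $\frac{\delta}{100 r^2}$, and the same holds for {\sc ReverseElim} by Lemma~\ref{lem:reverse-elim} whenever it is actually invoked. Therefore
\[
\Pr[\neg \mathcal{E}_2] \;\le\; \sum_{r=1}^{\infty} 2 \cdot \frac{\delta}{100 r^2} \;=\; \frac{\delta}{50}\sum_{r=1}^{\infty}\frac{1}{r^2} \;=\; \frac{\delta}{50}\cdot\frac{\pi^2}{6} \;\le\; \frac{\delta}{20},
\]
which yields $\Pr[\mathcal{E}_2] \ge 1 - \delta/20$ as required.

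There is no real obstacle here: the preconditions of Lemma~\ref{lem:elim} and Lemma~\ref{lem:reverse-elim} (regarding gaps such as $\theta_K - \theta_{(\abs{S}+K)/2} \ge \phi$ and $\theta_{K/2} - \theta_K \ge \phi$) are needed only for the \emph{guarantees} of those subroutines to be meaningful; the failure-probability bound of each call holds unconditionally as a property of the subroutine's sampling, so the above union bound goes through verbatim. The one point worth being careful about is that {\sc ReverseElim} is only executed when $K - |A| > |S|/2$ (Line~\ref{line:c-2}), so some iterations contribute only the $\delta/(100 r^2)$ from {\sc Elim}; this only makes the bound tighter and does not affect the argument.
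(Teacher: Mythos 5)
Your proof is correct and is essentially the paper's own argument: the paper simply states that ``by similar arguments'' to the $\mathcal{E}_1$ claim, a union bound over the at most two calls per round, each with failure probability $\frac{\delta}{100 r^2}$ for strictly increasing $r$, gives $\Pr[\neg\mathcal{E}_2] \le 2\sum_{r\ge 1}\frac{\delta}{100 r^2} \le \delta/20$. Your additional remark separating the failure-probability accounting from the gap preconditions of Lemma~\ref{lem:elim} and Lemma~\ref{lem:reverse-elim} is consistent with how the paper handles those preconditions elsewhere (in Lemmas~\ref{lem:elim-condition} and \ref{lem:reverse-elim-condition}).
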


Define $\mathcal{E} = \mathcal{E}_1 \cup \mathcal{E}_2$; we thus have $\Pr[\mathcal{E}] \ge 1 - \delta/5$.

\medskip

We next show that the misclassified arms are negligible during the run of the algorithm.

\begin{lemma}
\label{lem:misclassify}
Conditioned on $\mathcal{E}$, suppose that the conditions of Lemma~\ref{lem:elim} and Lemma~\ref{lem:reverse-elim} always hold during the run of the Algorithm~\ref{alg:improved}, then we always have
\begin{enumerate}
\item The number of non-top-$K$ arms in $A$, denoted by $\iota_A$, is no more than $\frac{\eps^2 K}{40}$.

\item The number of top-$K$ arms in $B$, denoted by $\iota_B$, is no more than  $\frac{\eps^2 K}{40}$.
\end{enumerate}
\end{lemma}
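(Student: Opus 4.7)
The plan is to prove both parts by induction on the round index $r$, tracking a single joint invariant
\[
\max\bigl(\iota_A^{(r)},\,\iota_B^{(r)}\bigr) \;\le\; \sum_{s=1}^{r}\gamma_s\bigl(K-|A^{(s-1)}|\bigr), \qquad \gamma_s := \frac{\eps^{2}}{100\,s^{2}},
\]
where $A^{(r-1)}, B^{(r-1)}, S^{(r-1)}$ denote the sets at the start of round $r$ and $\iota_A^{(r)}, \iota_B^{(r)}$ are the misclassification counts at the end of round $r$. Let $A^{*}=[K]$ be the true top-$K$ arms. The basic observation driving everything is that the top-$K$ arms of the original problem remain the largest-mean arms within any subset, so $A^{*}\cap S^{(r-1)}$ coincides with the set $T_{\alpha_r}$ of the $\alpha_r := K-|A^{(r-1)}|+\iota_A^{(r-1)}-\iota_B^{(r-1)}$ arms in $S^{(r-1)}$ with the largest true means.

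First I would translate the guarantees of Lemma~\ref{lem:elim} and Lemma~\ref{lem:reverse-elim} (available under $\mathcal{E}_2$ together with the assumed gap preconditions) into per-round misclassification bounds. Writing $\kappa_r := K-|A^{(r-1)}|$ and letting $T_k$ be the top-$k$ arms of $S^{(r-1)}$, the two subroutines give $|U\cap T_{\kappa_r}|\le \gamma_r\kappa_r$ and $|V\setminus T_{\kappa_r}|\le \gamma_r\kappa_r$. A simple case split on the sign of $\iota_A^{(r-1)}-\iota_B^{(r-1)}$, combined with the inclusions $T_{\min(\alpha_r,\kappa_r)}\subseteq T_{\max(\alpha_r,\kappa_r)}$, then yields
\[
|U\cap A^{*}|\;\le\;\gamma_r\kappa_r+\max\bigl(0,\iota_A^{(r-1)}-\iota_B^{(r-1)}\bigr),
\]
\[
|V\setminus A^{*}|\;\le\;\gamma_r\kappa_r+\max\bigl(0,\iota_B^{(r-1)}-\iota_A^{(r-1)}\bigr),
\]
since $|T_{\alpha_r}\,\triangle\,T_{\kappa_r}|=|\alpha_r-\kappa_r|=|\iota_A^{(r-1)}-\iota_B^{(r-1)}|$ is exactly the slack between the subroutine's target ranking and the original top-$K$ classification.

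Next I run the induction. Assume WLOG $\iota_A^{(r-1)}\ge \iota_B^{(r-1)}$ (the other case is symmetric). Then the correction for $V$ vanishes, so $\iota_A^{(r)}\le \iota_A^{(r-1)}+\gamma_r\kappa_r$; the correction for $U$ is exactly $\iota_A^{(r-1)}-\iota_B^{(r-1)}$, so
\[
\iota_B^{(r)}\;\le\;\iota_B^{(r-1)}+\gamma_r\kappa_r+\bigl(\iota_A^{(r-1)}-\iota_B^{(r-1)}\bigr)\;=\;\iota_A^{(r-1)}+\gamma_r\kappa_r.
\]
Both quantities are therefore bounded by $\max(\iota_A^{(r-1)},\iota_B^{(r-1)})+\gamma_r\kappa_r$, advancing the invariant. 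Telescoping and $\kappa_r\le K$ then give
\[
\max(\iota_A,\iota_B)\;\le\;K\sum_{r=1}^{\infty}\gamma_r\;=\;\frac{\eps^{2}K}{100}\cdot\frac{\pi^{2}}{6}\;<\;\frac{\eps^{2}K}{40},
\]
which proves both parts of the lemma.

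The main obstacle is the bookkeeping in the first step: relating $A^{*}\cap S^{(r-1)}$ to the subroutine's target set $T_{\kappa_r}$ so that the correction terms on the two sides are ``anti-symmetric'' in $\iota_A^{(r-1)}-\iota_B^{(r-1)}$. Once that anti-symmetry is in place, the induction is essentially self-healing—any excess of $\iota_A$ over $\iota_B$ (or vice versa) only forces the \emph{lagging} side to catch up rather than both sides growing, so the max increases by just $\gamma_r K$ per round and the harmonic sum $\sum_r r^{-2}$ safely fits inside the $\eps^{2}K/40$ budget.
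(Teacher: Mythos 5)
Your proof is correct, and its core is the same as the paper's: each round contributes at most $\gamma_r K = \frac{\eps^2}{100 r^2}K$ new misclassifications to each of $A$ and $B$, and $\sum_r r^{-2} = \pi^2/6 < 2.5$ gives the $\frac{\eps^2 K}{40}$ budget. The difference is that the paper's proof is a two-line appeal to Lemmas~\ref{lem:elim} and~\ref{lem:reverse-elim} that silently identifies the subroutines' target set (the top-$(K-|A|)$ arms of the \emph{current} $S$) with $[K]\cap S$, the surviving \emph{global} top-$K$ arms. These two sets differ by exactly $|\iota_A^{(r-1)}-\iota_B^{(r-1)}|$ elements once earlier rounds have misclassified anything, so the naive per-round bound of $\gamma_r\kappa_r$ on $|U\cap[K]|$ and $|V\setminus[K]|$ is not literally what the lemmas give. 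Your anti-symmetric correction terms and the induction on $\max(\iota_A,\iota_B)$ are precisely the bookkeeping needed to close this gap: the excess on one side only forces the lagging side to catch up, so the maximum still grows by only $\gamma_r\kappa_r$ per round and the same final constant survives. In short, you prove a slightly stronger per-round statement to reach the same telescoping sum; the paper's argument buys brevity at the cost of eliding the distinction your invariant handles. (Two cosmetic notes: the paper attributes $\iota_A$ to {\sc Elim} and $\iota_B$ to {\sc ReverseElim}, which is backwards relative to which set each subroutine feeds, and your identification of $A^*\cap S^{(r-1)}$ with $T_{\alpha_r}$ should be stated up to consistent tie-breaking at $\theta_K=\theta_{K+1}$; neither affects the bound.)
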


\begin{proof}
By Lemma~\ref{lem:elim} we have $\iota_A \le \sum_{r=1}^\infty \frac{\eps^2}{100 r^2} \cdot K \le \frac{\eps^2 K}{40}$. Similarly, by Lemma \ref{lem:reverse-elim} we have $\iota_B \le \sum_{r=1}^\infty \frac{\eps^2}{100 r^2} \cdot K \le \frac{\eps^2 K}{40}$.
\end{proof}

We now show that the conditions of Lemma~\ref{lem:elim} and Lemma~\ref{lem:reverse-elim} do hold.  We first introducing a lemma showing that $\theta_K$ is sandwiched by $\theta_K^+$ and $\theta_K^-$   during the run of Algorithm~\ref{alg:improved}.

\begin{lemma}
\label{lem:sandwich}
Conditioned on $\mathcal{E}$, at any point of the run of Algorithm~\ref{alg:improved}, we have
$$\theta_K^- + \phi \ge \theta_{K - \frac{\eps^2K}{10}} \ge \theta_K \ge \theta_{K+1} \ge \theta_{K + 1+\frac{\eps^2 K}{10}}  \ge \theta_K^+ - \phi.$$
\end{lemma}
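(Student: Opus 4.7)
The plan is to condition on $\mathcal{E}$ and invoke Lemma~\ref{lem:misclassify} so that, at any point in the run, writing $a_1 = |A \cap [K]|$, $a_2 = |A| - a_1$, $b_1 = |B \cap [K]|$, and $b_2 = |B| - b_1$, one has $a_2, b_1 \le \eps^2 K/40$. The middle three inequalities $\theta_{K - \eps^2 K/10} \ge \theta_K \ge \theta_{K+1} \ge \theta_{K + 1 + \eps^2 K/10}$ are immediate from the sorted order of the $\theta_i$'s, so the real work is to establish the two outer bounds.

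For the upper bound, I would apply Theorem~\ref{thm:estK} to the call {\sc EstKthArm}$(S, K_R - |A|, \tau, \phi, \cdot)$ at Line~\ref{line:a-1} with $\tau = \eps^2/(100 r^2)$. Writing $\theta^{(j)}_S$ for the true mean of the $j$-th largest arm in $S$, this yields $\theta_K^+ \le \theta^{(j_+)}_S + \phi$ with $j_+ = \lceil (1-\tau)(K_R - |A|)\rceil$. Since $K_R - |A| = (1+\eps^2)K + 1 - a_1 - a_2$ and $\tau \le \eps^2/100$, a direct calculation gives $j_+ \ge K + 0.98\,\eps^2 K + 1 - a_1 - a_2$. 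On the other hand, the number of arms of $S$ whose true rank (in $[n]$) is at most $K + \eps^2 K/10$ equals $(K + \eps^2 K/10) - |(A \cup B) \cap [K + \eps^2 K/10]|$, which is at most $(K + \eps^2 K/10) - a_1 - b_1$. Using $a_2 \le \eps^2 K/40$ and $b_1 \ge 0$, one checks that $j_+$ strictly exceeds this count, so the $j_+$-th largest arm in $S$ has true rank at least $K + 1 + \eps^2 K/10$. This gives $\theta^{(j_+)}_S \le \theta_{K+1+\eps^2 K/10}$ and hence $\theta_K^+ - \phi \le \theta_{K+1+\eps^2 K/10}$.

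The lower bound on $\theta_K^-$ is handled by the symmetric argument: Theorem~\ref{thm:estK} applied to the second call at Line~\ref{line:a-1} yields $\theta_K^- \ge \theta^{(K_L - |A|)}_S - \phi$, and an analogous counting step (using $K_L - |A| = (1-\eps^2)K - a_1 - a_2$ together with $b_1 \le \eps^2 K/40$) shows that the number of arms of $S$ with true rank at most $K - \eps^2 K/10$ is at least $K_L - |A|$. Thus the $(K_L - |A|)$-th largest arm in $S$ has true rank at most $K - \eps^2 K/10$, giving $\theta^{(K_L - |A|)}_S \ge \theta_{K - \eps^2 K/10}$.

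The main obstacle is the bookkeeping needed to pass between the rank of an arm within $S$ and its rank in $[n]$, and to verify quantitatively that the $\eps^2 K$ slack built into $K_R, K_L$ together with the $O(\tau K) = O(\eps^2 K/100)$ loss from the $\tau$-parameter of {\sc EstKthArm} are comfortably larger than the $\eps^2 K / 40$ misclassification loss provided by Lemma~\ref{lem:misclassify}, while still fitting inside the $\eps^2 K/10$ target window. Once this bookkeeping is carried out, the sandwich inequalities follow directly from the guarantees of Theorem~\ref{thm:estK}.
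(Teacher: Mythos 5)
Your proposal is correct and follows essentially the same route as the paper's proof: both bound $\theta_K^\pm$ via the rank guarantee of Theorem~\ref{thm:estK}, translate ranks within $S$ to global ranks using the $\eps^2 K/40$ misclassification bound of Lemma~\ref{lem:misclassify}, and check that the $\eps^2 K$ slack in $K_L, K_R$ absorbs both the $\tau$-loss and the misclassification loss while landing inside the $\eps^2 K/10$ window. Your explicit counting of arms of $S$ with a given global rank is just a more detailed write-up of the index-shift bound $\tau\cdot(K_R-|A|)+\max\{\iota_A,\iota_B\}$ that the paper states directly.
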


\begin{proof}
We consider a particular around $r$. The difference between $\theta_K^+$ and the fixed value $\theta_{K_R}$ is generated by calling of the subroutine {\sc EstKthArm} at Line~\ref{line:a-1}, which can be bounded by the the error introduced when selecting the $(K_R - \abs{A})$-th largest arm in $S$ plus $\max\{\iota_A, \iota_B\}$.  By Theorem~\ref{thm:estK}, Lemma~\ref{lem:misclassify} and $\mathcal{E}$, we have
$\theta_K^+ \le \theta_{K_R - \tau_R} + \eps_R$,
where $\eps_R \le \phi$,  and
$$\tau_R \le \tau \cdot (K_R - \abs{A}) + \max\{\iota_A, \iota_B\} \le \frac{\eps^2}{100 r^2} \cdot (1+\eps^2) K + \frac{\eps^2 K}{40} \le \frac{\eps^2 K}{20} \le K_R - K - \frac{\eps^2 K}{10}.$$
We thus have $\theta_K^+ \le \theta_{K+1+\frac{\eps^2}{10K}} + \phi$.

Similarly, by Theorem~\ref{thm:estK}, Lemma~\ref{lem:misclassify} and $\mathcal{E}$, we have
$\theta_K^- \ge \theta_{K_L + \tau_L} - \eps_L$,
where $\eps_L \le \phi$, and $\tau_L \le \max\{\iota_A, \iota_B\} \le \frac{\eps^2 K}{40} \le K - K_L - \frac{\eps^2 K}{10}$.  We thus have $\theta_K^- \ge \theta_{K-\frac{\eps^2 K}{10}} - \phi$.
\end{proof}

We have the following immediate corollary.
\begin{corollary}
\label{cor:sandwich}

If $\theta_K^- - \theta_K^+ \le 3 \phi$, then $\theta_K, \theta_{K+1} \in [\theta_K^+ - \phi, \theta_K^+ + 4\phi]$ and $\theta_K, \theta_{K+1} \in [\theta_K^- - 4\phi, \theta_K^- + \phi]$.
\end{corollary}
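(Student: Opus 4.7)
The plan is to simply chain the inequalities from Lemma~\ref{lem:sandwich} with the hypothesis $\theta_K^- - \theta_K^+ \le 3\phi$. The sandwich lemma already sandwiches both $\theta_K$ and $\theta_{K+1}$ between $\theta_K^+ - \phi$ and $\theta_K^- + \phi$, so the only remaining work is to translate the $\theta_K^-$ side of the bound into a $\theta_K^+$ bound (and vice versa) using the extra hypothesis. No new probabilistic reasoning is needed; this is a purely algebraic post-processing of the sandwich lemma.

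First I would extract from Lemma~\ref{lem:sandwich} the clean two-sided inclusion
\[
\theta_K^+ - \phi \;\le\; \theta_{K+1} \;\le\; \theta_K \;\le\; \theta_K^- + \phi,
\]
which follows from the chain $\theta_K^- + \phi \ge \theta_K \ge \theta_{K+1} \ge \theta_K^+ - \phi$ in the lemma. This already places both $\theta_K$ and $\theta_{K+1}$ inside the interval $[\theta_K^+ - \phi,\, \theta_K^- + \phi]$.

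Next I would use the hypothesis $\theta_K^- - \theta_K^+ \le 3\phi$ in two symmetric ways. For the first inclusion, $\theta_K^- + \phi \le \theta_K^+ + 4\phi$, so the interval $[\theta_K^+ - \phi,\, \theta_K^- + \phi]$ is contained in $[\theta_K^+ - \phi,\, \theta_K^+ + 4\phi]$, giving $\theta_K, \theta_{K+1} \in [\theta_K^+ - \phi, \theta_K^+ + 4\phi]$. For the second, $\theta_K^+ - \phi \ge \theta_K^- - 4\phi$, so the same interval is contained in $[\theta_K^- - 4\phi,\, \theta_K^- + \phi]$, giving $\theta_K, \theta_{K+1} \in [\theta_K^- - 4\phi, \theta_K^- + \phi]$.

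There is really no obstacle here: the entire content is already in Lemma~\ref{lem:sandwich}, and the corollary is just the observation that once $\theta_K^+$ and $\theta_K^-$ are within $3\phi$ of each other, they become interchangeable (up to the constant $4$) as anchors of a $O(\phi)$-window around $\theta_K$ and $\theta_{K+1}$. If anything needed care, it would only be to keep track of which direction of the $3\phi$ gap is used in which inclusion, but both directions follow immediately from a one-line rewrite.
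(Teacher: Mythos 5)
Your proof is correct and is exactly the argument the paper intends: the paper states this as an immediate corollary of Lemma~\ref{lem:sandwich} without writing out a proof, and the only content is the chain $\theta_K^+ - \phi \le \theta_{K+1} \le \theta_K \le \theta_K^- + \phi$ combined with the hypothesis $\theta_K^- - \theta_K^+ \le 3\phi$ to swap the two anchors at a cost of $3\phi$. Nothing is missing.
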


In the following, for convenience, we always use $S[i]$ to denote the true-mean of the $i$-th arm (sorted decreasingly) in the current set $S$ during the run of Algorithm~\ref{alg:improved}, and use $S[i..j]$ to denote the set of true-means of the $(i, i+1, \ldots, j)$-th arms in $S$. Let $\K \triangleq K - \abs{A}$.  We call $S[1 .. \K]$ the {\em head} of $S$, and $S[\K+1 .. \abs{S}]$ the {\em tail} of $S$. The following claim follows directly from Lemma~\ref{lem:misclassify} and Lemma~\ref{lem:sandwich}.
\begin{claim}
\label{cla:sandwich}
At any point during the run of Algorithm~\ref{alg:improved}, it holds that $\theta_{K+\frac{\eps^2 K}{40}} \le S[\K] \le \theta_{K-\frac{\eps^2 K}{40}}$, and consequently $\theta_K^- + \phi \ge S[\K]  \ge \theta_K^+ - \phi$.
\end{claim}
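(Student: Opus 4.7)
The plan is to reduce the claim to a counting identity on $|S \cap [K]|$ and then chain the resulting bound on $S[\K]$ with Lemma~\ref{lem:sandwich} to obtain the consequent. I would start from the decomposition
\[
|S \cap [K]| = K - |A \cap [K]| - |B \cap [K]| = K - (|A| - \iota_A) - \iota_B = \K + \iota_A - \iota_B,
\]
which, combined with the bounds $\iota_A, \iota_B \le \eps^2 K/40$ from Lemma~\ref{lem:misclassify}, immediately gives that $|S \cap [K]|$ differs from $\K$ by at most $\eps^2 K/40$.

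Next, I would argue that because {\sc ReverseElim} repeatedly adds to $A$ the arms with the largest empirical means in the current $S$, while {\sc Elim} symmetrically strips the smallest ones into $B$, conditioned on $\mathcal{E}$ the set $A$ consists (up to $\iota_A$ errors) of the globally smallest-indexed arms of $[n]$, and $B$ (up to $\iota_B$ errors) of the largest-indexed ones. This placement implies that $A \cap [K]$ is essentially a prefix of $[K]$, so $S \cap [K]$ is essentially the suffix $\{|A| - \iota_A + 1,\dots,K\}$ with at most $\iota_B$ holes removed (indices misplaced in $B$). A direct index computation on this suffix-with-holes shows that if the $\K$-th largest element of $S$ lies in $[K]$, then its global rank is in $[K - \iota_A, K]$; in the opposite case ($\iota_B > \iota_A$), the $\K$-th largest lies in $S \setminus [K]$ and, because $A \setminus [K]$ consists of the $\iota_A$ smallest-indexed non-top-$K$ arms, has rank exactly $K + \iota_B$. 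Either way the global rank of $S[\K]$ falls in $[K - \iota_A, K + \iota_B] \subseteq [K - \eps^2 K/40, K + \eps^2 K/40]$, which is the desired main inequality $\theta_{K + \eps^2 K/40} \le S[\K] \le \theta_{K - \eps^2 K/40}$.

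The consequent follows by chaining with Lemma~\ref{lem:sandwich}: on the upper side, $S[\K] \le \theta_{K - \eps^2 K/40} \le \theta_{K - \eps^2 K/10} \le \theta_K^- + \phi$; on the lower side, $S[\K] \ge \theta_{K + \eps^2 K/40} \ge \theta_{K + 1 + \eps^2 K/10} \ge \theta_K^+ - \phi$. The main obstacle is the structural statement in the second paragraph: establishing that $A$ really does coincide with a prefix of the globally sorted arms (up to $\iota_A, \iota_B$ swaps) requires an inductive argument across rounds, tracking how {\sc ReverseElim} and {\sc Elim} preserve this ordering under the empirical-mean fluctuations permitted by $\mathcal{E}$. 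Simply invoking the aggregate mistake counts of Lemma~\ref{lem:misclassify} without this structural refinement gives only a bound weaker by an additive $\iota_A$ (resp.\ $\iota_B$) on each side, so the extra invariant is essential to land the stated constant $\eps^2 K/40$.
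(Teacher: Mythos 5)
The paper offers no real proof here: it asserts that the claim ``follows directly from Lemma~\ref{lem:misclassify} and Lemma~\ref{lem:sandwich},'' and the intended content is precisely your counting identity $|S\cap[K]| = \K + \iota_A - \iota_B$ plus the chaining through Lemma~\ref{lem:sandwich}. Both of those pieces of your proposal are correct, and your derivation of the consequent $\theta_K^- + \phi \ge S[\K] \ge \theta_K^+ - \phi$ from the rank window matches what the paper intends.

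The genuine gap is the one you flag yourself: the structural invariant of your second paragraph --- that $A$ is, up to $\iota_A$ swaps, a prefix of the globally sorted arms and $B$ a suffix --- is asserted rather than proved, and it is the entire content of the claim beyond the trivial counting. You are right that the counting alone is insufficient (if $A$ were $\{K-|A|+1,\dots,K\}$ and $B$ the globally worst arms, then $\iota_A=\iota_B=0$ yet $S[\K]=\theta_{K-|A|}$, violating the upper bound once $|A|>\eps^2K/40$; note the loss without structure is up to $|A|$, not merely an additive $\iota_A$ as you state). The deeper problem is that the invariant as you phrase it is not obtainable from the ingredients you cite: the event $\mathcal{E}$ and Lemmas~\ref{lem:elim} and~\ref{lem:reverse-elim} control only the \emph{number} of accepted or rejected arms that land on the wrong side of the rank-$K$ boundary, and say nothing about where inside $[K]$ (resp.\ $[n]\setminus[K]$) the accepted (resp.\ rejected) arms sit. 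Since {\sc Elim} and {\sc ReverseElim} select by empirical means accurate only to $\pm\phi/2$, arms whose true means differ by less than $\phi$ can be interleaved arbitrarily, so the strongest ordering statement an induction over rounds could deliver is ``prefix up to $\iota$ swaps \emph{and} up to $O(\phi)$ in value,'' which yields $S[\K]\le\theta_{K-\eps^2K/40}+O(\phi)$ rather than the stated inequality. Your proposal therefore does not close the claim as written; one must either prove a suitably weakened ordering invariant by induction, or restate the claim with an additive $O(\phi)$ slack, which is what the downstream uses in Lemmas~\ref{lem:elim-condition}, \ref{lem:reverse-elim-condition} and~\ref{lem:bound-phi} actually require, at the cost of adjusting constants.
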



\begin{lemma}
\label{lem:elim-condition}

Conditioned on $\mathcal{E}$, the conditions of Lemma~\ref{lem:elim} always hold during the run of Algorithm~\ref{alg:improved}, that is, we have
\begin{equation}
\label{eq:y-0}
\textstyle S[\K] - S\left[\frac{\abs{S} + \K}{2}\right] \ge \phi \ \ \text{and} \ \ \K \le \frac{2}{3}\abs{S}.
\end{equation}
\end{lemma}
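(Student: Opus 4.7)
The plan is to verify the two claims of (\ref{eq:y-0}) separately. For the gap inequality $S[\K] - S[(|S|+\K)/2] \ge \phi$, I would first observe that whenever execution reaches Line~\ref{line:c-1} (the call to {\sc Elim}), the repeat-until loop must have exited via condition (c) or (d) of Line~\ref{line:a-0} — otherwise the algorithm would have returned at Line~\ref{line:b-1} or Line~\ref{line:b-3}. In either case, the inequality $\theta_K^+ - \theta^+ > 3\phi$ is guaranteed. Combining this with Claim~\ref{cla:sandwich}, which gives $S[\K] \ge \theta_K^+ - \phi$, and with the guarantee from Theorem~\ref{thm:estK} applied to the successful call on event $\mathcal{E}$ producing $\theta^+$, which gives $S[(|S|+\K)/2] \le \theta^+ + \phi$, yields $S[\K] - S[(|S|+\K)/2] \ge (\theta_K^+ - \theta^+) - 2\phi > \phi$.

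For the size inequality $\K \le 2|S|/3$, I would proceed by induction on the iteration index. The base case follows from the initial condition $K \le n/2$, so $\K/|S| = K/n \le 1/2 \le 2/3$. For the inductive step, I would split on which branch of Line~\ref{line:a-0} caused the exit. If exit was via (c), then $\K \le |S|/2 \le 2|S|/3$ trivially, and after the iteration $|S'| = 9|S|/10$, $\K' = \K$, so $\K'/|S'| \le 5/9$. If exit was via (d), then $\K > |S|/2$ and both {\sc Elim} and {\sc ReverseElim} are invoked, giving $|S'| = 4|S|/5$ and $\K' = \K - |S|/10$, up to a cumulative misclassification term bounded via Lemma~\ref{lem:misclassify}.

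The main obstacle is controlling case (d) through the inductive step: the naive recursion $\K'/|S'| = (10(\K/|S|) - 1)/8$ has $1/2$ as an unstable fixed point with derivative $10/8 > 1$, so once $\K/|S|$ exceeds $1/2$, a straightforward induction does not keep the ratio below $2/3$. To close this gap, I would exploit the stronger pair of guarantees present in case (d) — namely $\theta_K^+ - \theta^+ > 3\phi$ together with $\theta^- - \theta_K^- > 3\phi$ — where the latter yields by a sandwich argument dual to part 1 that $S[\K/2] - S[\K] \ge \phi$. This means {\sc ReverseElim} genuinely removes $(|S|/10)(1-o(1))$ top-$K$ arms from $S$ (with only negligible misclassification by Lemma~\ref{lem:misclassify}); combining this with the update rule and tracking $\K$ and $|S|$ jointly across consecutive iterations should give a sharper invariant, of the form $\K/|S| \le 2/3$, that is preserved throughout the execution.
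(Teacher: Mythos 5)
Your treatment of the first inequality of \eqref{eq:y-0} is correct and is essentially the paper's own argument: reaching Line~\ref{line:c-1} means the repeat-until exited through the third or fourth disjunct of Line~\ref{line:a-0}, both of which contain $\theta_K^+ - \theta^+ > 3\phi$, and combining this with $S[\K] \ge \theta_K^+ - \phi$ (Claim~\ref{cla:sandwich}) and the Theorem~\ref{thm:estK} guarantee $S\left[\frac{\abs{S}+\K}{2}\right] \le \theta^+ + \phi$ yields the required gap of at least $\phi$.

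For the second inequality there is a genuine gap, and your own diagnosis of it is the sharpest part of the proposal: as you observe, the exact arithmetic $\K' = \K - \abs{S}/10$, $\abs{S}' = 0.8\abs{S}$ gives $\K'/\abs{S}' = (\K/\abs{S} - 0.1)/0.8$, whose fixed point $1/2$ is repelling, so once the ratio crosses $1/2$ a straightforward induction lets it drift upward toward (and past) $2/3$. The repair you then sketch cannot close this, because it is aimed at the wrong quantity: whether {\sc ReverseElim} removes genuine head arms or misclassified tail arms has no effect on the ratio $\K/\abs{S}$ at all, since $\K = K - \abs{A}$ decreases by exactly $\abs{V} = \abs{S}/10$ regardless of which arms populate $V$. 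The dual gap guarantee $S[\K/2] - S[\K] \ge \phi$ controls misclassification (which is what Lemma~\ref{lem:misclassify} needs), not the head-to-total ratio, so the ``sharper invariant'' you invoke is never exhibited and the step that actually needs proving is left open. The paper argues this point differently: it tracks the tail size $\abs{S} - \K$ across the interleaved sequence of {\sc Elim}-only and {\sc Elim}-plus-{\sc ReverseElim} rounds, uses a worst-case accounting of how many head versus tail arms each call can remove, and invokes the bound of $O(\log \eps^{-1})$ on the number of rounds to conclude $\frac{\abs{S}-\K}{\abs{S}} \ge \left(\frac{1}{2} - 0.1\right)\prod_{r}\left(1 - \frac{\eps^2}{100 r^2}\right) \ge \frac{1}{3}$. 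That accounting is itself quite terse, and your instability observation is a legitimate pressure point for it; but it is at least a complete scheme, whereas your induction stops exactly where the naive invariant fails.
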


\begin{proof}
For the first item of (\ref{eq:y-0}), by Theorem~\ref{thm:estK} we have
\begin{equation}
\label{eq:y-4}
\textstyle S\left[\frac{\abs{S} + \K}{2}\right] - \phi \le  \theta^+,
\end{equation}
which together with $\theta_K^+ - \theta^+ > 3\phi$ (testing condition at Line~\ref{line:a-0}) and $S[\K] \ge \theta_K^+ - \phi$ (Claim~\ref{cla:sandwich}) give $S[\K] - S\left[\frac{\abs{S} + \K}{2}\right] \ge \phi$ (by two triangle inequalities).

For the second item of (\ref{eq:y-0}), note that if $\K \le \frac{\abs{S}}{2}$,  then $\K \le \frac{2}{3}\abs{S}$ holds directly.  We thus consider the case $\K > \frac{\abs{S}}{2}$. The observation is that at the beginning, before the first call to {\sc ReverseElim}, we must have called {\sc Elim} a number of times; each time we remove $\frac{\abs{S}}{10}$ arms, most of which are from the tail of $S$.  After the first time when $\K > \frac{\abs{S}}{2}$, we call both {\sc Elim} and {\sc ReverseElim}, with the intention of removing $\frac{\abs{S}}{10}$ arms from the tail and the  head respectively. It may happen that after calling both {\sc Elim} and {\sc ReverseElim} a few times, we again have $\K \le \frac{\abs{S}}{2}$, at which point we will again only call {\sc Elim} until the point that we are back to the case that $\K > \frac{\abs{S}}{2}$ and then we will call both {\sc Elim} and {\sc ReverseElim}.  Basically, the two patterns `call {\sc Elim} only' and `call both {\sc Elim} and {\sc ReverseElim}' interleave, and we only need to consider one run of this interleaved sequence.

By Lemma~\ref{lem:elim} and Lemma~\ref{lem:reverse-elim} we know that at most $\frac{\eps^2}{100 r^2} \K$ arms in the head of $S$ will be removed when calling {\sc Elim}, and at most $\frac{\eps^2}{100 r^2} \K$ arms in the tail of $S$ will be removed when calling {\sc ReverseElim}.  Therefore, the worst case for causing the imbalance between $\K$ and $(\abs{S} - \K)$ is that each call of {\sc Elim} removes $\frac{\abs{S}}{10}$ from the tail of $S$, and each call of {\sc ReverseElim} removes $\left(\frac{\abs{S}}{10} - \frac{\eps^2}{100 r^2} \K\right)$ arms from the head of $S$ and $\frac{\eps^2}{100 r^2} \K$ arms from the tail of $S$.  Note that the number of calls of {\sc Elim} and {\sc ReverseElim} is bounded by $O(\log \frac{1}{\eps})$ since when reaching Line~\ref{line:c-1} we always have $\abs{S} \ge \Omega(\phi) \ge \Omega(\eps)$.  We thus have
\begin{equation*}
\frac{\abs{S} - \K}{\abs{S}} \ge  \left(\frac{1}{2} - 0.1\right) \cdot \prod_{r=1}^{O(\log 1/\eps)} \left(1 - \frac{\eps^2}{100 r^2} \right) \ge \frac{1}{3},
\end{equation*}
which implies $\K \le \frac{2}{3}\abs{S}$.
\end{proof}

\begin{lemma}
\label{lem:reverse-elim-condition}

Conditioned on $\mathcal{E}$, the conditions of Lemma~\ref{lem:reverse-elim} always hold during the run of Algorithm~\ref{alg:improved}, that is, we have
\begin{equation}
\label{eq:y-5}
\textstyle S\left[\frac{\K}{2}\right] - S[\K] \ge \phi \ \ \text{and} \ \ \K \ge \frac{\abs{S}}{3}.
\end{equation}
\end{lemma}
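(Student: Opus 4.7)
The plan is to mirror the proof of Lemma~\ref{lem:elim-condition}, swapping the ``tail''-side objects $\theta^+$, $\theta_K^+$, and position $(|S|+\K)/2$ for their ``head''-side analogs $\theta^-$, $\theta_K^-$, and position $\K/2$.

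First I would handle the cardinality bound $\K \geq |S|/3$. This is essentially a code-level observation: Algorithm~\ref{alg:improved} invokes {\sc ReverseElim} at Line~\ref{line:c-3} only inside the branch guarded by $\K > |S|/2$ at Line~\ref{line:c-2}. Thus, whenever the lemma is applied, the stronger bound $\K > |S|/2$ already holds, which implies $\K \geq |S|/3$.

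Next I would extract a head-side empirical gap from the termination logic of the \texttt{repeat-until} at Line~\ref{line:a-0}. Since we reach Line~\ref{line:c-3} instead of returning at Line~\ref{line:b-4} or Line~\ref{line:b-3}, the exit cannot have been triggered by the conditions $10\K\phi < K\eps$ or $\theta_K^- - \theta_K^+ > 3\phi$. The third exit condition requires $\K \leq |S|/2$, which fails in this branch, so the exit must have been via the fourth condition; in particular this yields
\[
\theta^- - \theta_K^- > 3\phi .
\]
Applying Theorem~\ref{thm:estK} to the call at Line~\ref{line:a-2} that produced $\theta^-$, the event $\mathcal{E}$ gives $\theta^- - \phi \leq S[(1-\tau)\K/2]$ with $\tau = \eps^2/(100 r^2)$, and combining with the ``left half'' of Claim~\ref{cla:sandwich}, namely $\theta_K^- + \phi \geq S[\K]$, two triangle inequalities chain together exactly as in Lemma~\ref{lem:elim-condition} to produce a gap of at least $\phi$ between the head-side estimand and $S[\K]$.

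The main obstacle I expect is a position-bookkeeping subtlety: Theorem~\ref{thm:estK} controls $\theta^-$ only in the range $[S[\K/2] - \phi,\; S[(1-\tau)\K/2] + \phi]$, so the useful lower bound on a true mean sits at position $(1-\tau)\K/2$, slightly to the ``left'' of $\K/2$ rather than exactly at it. Since $\tau$ is chosen as $\eps^2/(100 r^2)$ and the ``$K/2$'' in the hypothesis of Lemma~\ref{lem:reverse-elim} is really just a constant-fraction splitter whose exact value is immaterial (any $c\K$ with $c$ bounded away from $0$ and $1$ suffices, by rescaling the constants in the proof of Lemma~\ref{lem:reverse-elim}), I would resolve this by absorbing the $(1-\tau)$-slack into the constants of the invocation of Lemma~\ref{lem:reverse-elim} at this round, rather than trying to squeeze out an exact $S[\K/2]$.
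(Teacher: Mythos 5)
Your proposal is correct and follows essentially the same route as the paper's proof: it extracts $\theta^- - \theta_K^- > 3\phi$ and $\K > \abs{S}/2$ from the control flow around Lines~\ref{line:a-0}--\ref{line:c-3}, then chains Theorem~\ref{thm:estK}'s upper bound on $\theta^-$ with Claim~\ref{cla:sandwich} via two triangle inequalities. In fact you are slightly more careful than the paper on one point: the paper writes $S[\K/2] + \phi \ge \theta^-$ directly, whereas Theorem~\ref{thm:estK} only yields $\theta^- \le S[(1-\tau)\K/2] + \phi$, and your observation that the $(1-\tau)$-slack can be absorbed into the constant-fraction splitter of Lemma~\ref{lem:reverse-elim} is the right way to close that small gap.
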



\begin{proof}
In Algorithm~\ref{alg:improved}, when calling {\sc ReverseElim}, we always have
\begin{equation}
\label{eq:x-1}
\theta^- - \theta_K^-  > 3\phi  \ \ \text{and} \ \ \K  > \frac{\abs{S}}{2}.
\end{equation}
Thus $\K \ge \frac{\abs{S}}{3}$ follows directly.  By Theorem~\ref{thm:estK} we have
\begin{equation}
\label{eq:y-4}
\textstyle S\left[\frac{\K}{2}\right] + \phi \ge  \theta^-,
\end{equation}
which together with $\theta^- - \theta_K^- > 3\phi $ (first item of (\ref{eq:x-1})) and $S[\K] \le \theta_K^- + \phi$ (Claim~\ref{cla:sandwich}), gives $S\left[\frac{\K}{2}\right] - S[\K] \ge \phi$ (by two triangle inequalities).
\end{proof}

Lemma~\ref{lem:elim-condition}, Lemma~\ref{lem:reverse-elim-condition} and Lemma~\ref{lem:misclassify} give the following corollary.

\begin{corollary}
\label{cor:misclassify}
Conditioned on $\mathcal{E}$, during the run of Algorithm~\ref{alg:improved} we always have (1) the number of non-top-$K$ arms in $A$ is no more than $\iota_A = \frac{\eps^2 K}{40}$, and (2) the number of top-$K$ arms in $B$ is no more than $\iota_B = \frac{\eps^2 K}{40}$.
\end{corollary}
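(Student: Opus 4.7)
The plan is to derive Corollary~\ref{cor:misclassify} as an immediate consequence of the three preceding lemmas (Lemma~\ref{lem:misclassify}, Lemma~\ref{lem:elim-condition}, and Lemma~\ref{lem:reverse-elim-condition}) by chaining them together. Lemma~\ref{lem:misclassify} already delivers the desired bounds $\iota_A, \iota_B \le \eps^2 K/40$, but under the conditional hypothesis that the preconditions of Lemma~\ref{lem:elim} and Lemma~\ref{lem:reverse-elim} are satisfied every time those subroutines are invoked during the run of Algorithm~\ref{alg:improved}. So the entire task is to discharge this hypothesis under $\mathcal{E}$.

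First I would observe that Lemma~\ref{lem:elim-condition} is precisely the statement that, conditioned on $\mathcal{E}$, the two preconditions of Lemma~\ref{lem:elim}, namely $S[\K] - S[(\abs{S}+\K)/2] \ge \phi$ and $\K \le \tfrac{2}{3}\abs{S}$, hold at every invocation of {\sc Elim} inside the algorithm. Symmetrically, Lemma~\ref{lem:reverse-elim-condition} guarantees, conditioned on $\mathcal{E}$, that $S[\K/2] - S[\K] \ge \phi$ and $\K \ge \abs{S}/3$ hold whenever {\sc ReverseElim} is called. Thus under $\mathcal{E}$ the hypothesis of Lemma~\ref{lem:misclassify} is verified throughout the entire execution.

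Applying Lemma~\ref{lem:misclassify} now yields $\iota_A \le \eps^2 K/40$ and $\iota_B \le \eps^2 K/40$ directly, which is the statement of the corollary. There is essentially no obstacle here: the corollary is a routine combination of the three previous lemmas, and the only point worth noting is that Lemma~\ref{lem:elim-condition} and Lemma~\ref{lem:reverse-elim-condition} are asserted to hold at \emph{every} call of the respective subroutines (rather than just at some fixed round), which is exactly what is needed to invoke Lemma~\ref{lem:misclassify} in its full, algorithm-wide form.
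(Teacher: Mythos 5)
Your proposal is correct and follows exactly the paper's route: the paper likewise obtains the corollary by using Lemma~\ref{lem:elim-condition} and Lemma~\ref{lem:reverse-elim-condition} to discharge the conditional hypothesis of Lemma~\ref{lem:misclassify} under $\mathcal{E}$, and then reading off the bounds $\iota_A, \iota_B \le \eps^2 K/40$. Nothing further is needed.
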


We now consider the boundary cases.
At Line \ref{line:b-1} when the condition is met, we have $\phi < \frac{\eps K}{10(K - \abs{A})}$, and thus with probability $(1 - \frac{\delta}{100})$ the total error introduced by subroutine {\sc OptMAI} at Line~\ref{line:b-4} is bounded by $(K - \abs{A}) \phi \le \frac{\eps K}{10}$ (Lemma~\ref{lem:optimal}).
At Line \ref{line:b-3}, with probability $(1 - \frac{\delta}{100})$ the error introduced by subroutine {\sc EpsSplit} is bounded by $2 \cdot \frac{K_R - K_L}{K - \abs{A}} \cdot (K - \abs{A}) \le 2 \cdot 2\eps^2 K = 4 \eps^2 K$. (Lemma~\ref{lem:gap}).

By $\mathcal{E}$, Corollary~\ref{cor:misclassify}, and the errors introduced by boundary cases, we have that with probability $1 - \frac{\delta}{5} - 2 \cdot \frac{\delta}{100} \ge 1 - \delta$, the total error introduced in our top-$K$ estimation is at most  $\iota_A + \iota_B + \frac{\eps K}{10} + 4 \eps^2 K \le \eps K$.

\subsubsection{Complexity of Algorithm~\ref{alg:improved}}
\label{sec:complexity-improved}

In the whole analysis we assume that $\mathcal{E}$ holds. Recall that by definition $\Delta_i = \max(\theta_i - \theta_{K+1}, \theta_K - \theta_i)$, and $t\in[K]$ is the largest integer such that $\Delta_{K - t} \cdot t \leq K\eps$ and $\Delta_{K+1+t} \cdot t \leq K\eps$.  Recall that $\Psi_t = \min\{\Delta_{K - t}, \Delta_{K+1+t}\}$, and $\Psi_t^{\eps} = \max(\eps, \Psi_t)$.

By Theorem~\ref{thm:estK}, Lemma~\ref{lem:elim}, Lemma~\ref{lem:reverse-elim}, Lemma~\ref{lem:optimal} and Lemma~\ref{lem:gap} we have: every call to {\sc EstKthArm} costs $O\left(\frac{\abs{S}}{\phi^2} \log \frac{r r_\phi}{\eps \delta}\right)$ pulls; every call to {\sc Elim} and {\sc ReverseElim} costs $O\left(\frac{\abs{S}}{\phi^2} \log \frac{r}{\eps \delta}\right)$; the call to {\sc OptMAI} costs $O\left(\frac{\abs{S}}{\phi^2} \log \frac{1}{ \delta}\right)$; and the call to {\sc EpsSplit} costs $O\left(\frac{\abs{S}}{\phi^2} \log \frac{\K}{\eps \delta K}\right) = O\left(\frac{\abs{S}}{\phi^2} \log \frac{1}{\eps \delta}\right)$.  So our task is to lower bound the value of $\phi$ when these subroutines are called, and the maximum values of $r_\phi$ and $r$.
\begin{lemma}
\label{lem:bound-phi}

Conditioned on $\mathcal{E}$, at any point of the run of Algorithm~\ref{alg:improved}, we have $\phi = \Omega(\Psi_t^\eps)$, and $r_\phi = O(\log \frac{1}{\eps}), r = O(\log\frac{n}{\abs{S}})$.
\end{lemma}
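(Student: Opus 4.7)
The plan is to prove each of the three bounds in turn, with the bound on $\phi$ being the main task. The bound on $r$ is straightforward: each outer while-loop iteration (Line~\ref{line:c-4}) removes at least $|S|/10$ arms via the set $U$ returned by \textsc{Elim}, so $|S|$ shrinks by a factor of at least $9/10$ per iteration, giving $|S| \le n(9/10)^r$ and hence $r = O(\log(n/|S|))$. The bound on $r_\phi$ will follow quickly from the $\phi$ bound: since $\phi = 2^{-r_\phi}$ and $\phi = \Omega(\Psi_t^\eps) \ge \Omega(\eps)$, we have $r_\phi = \log_2(1/\phi) + O(1) = O(\log(1/\eps))$.

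For the $\phi$ bound I argue by contradiction. Suppose the algorithm reaches a state with $\phi < \Psi_t^\eps/C$ for a sufficiently large absolute constant $C$, and consider the previous value $\phi' = 2\phi$ at which the repeat-until loop failed all four exit conditions E1--E4 at Line~\ref{line:a-0}. NOT E1 gives $\phi' \ge K\eps/(10\K) \ge \eps/10$, which immediately handles the case $\Psi_t^\eps = \eps$; so we may assume $\Psi_t^\eps = \Psi_t > \eps$. NOT E2 combined with Lemma~\ref{lem:sandwich} (which gives $\theta_K^- \ge \theta_K - \phi'$ and $\theta_K^+ \le \theta_{K+1} + \phi'$) yields $\Delta_K \le 5\phi' < \Psi_t/2$; I will use this fact throughout the case analysis that follows.

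Now I split on $\K$ versus $|S|/2$. When $\K \le |S|/2$ (Case A), NOT E3 combined with Theorem~\ref{thm:estK} and Claim~\ref{cla:sandwich} yields $S[(|S|+\K)/2] \ge \theta_K - O(\phi') > \theta_K - \Psi_t$. By the definition of $\Psi_t$ and $t$, only arms at original positions $\le K + t$ can have mean exceeding $\theta_K - \Psi_t$; counting the arms in $S$ with this property and accounting for the at most $O(\eps^2 K)$ misclassifications in $A \cup B$ (Corollary~\ref{cor:misclassify}) yields $(|S|+\K)/2 \le \K + t + O(\eps^2 K)$. Combined with $|S| \ge 2\K$, this forces $\K = O(t)$ (where we use $\eps^2 K \le \eps t$, a consequence of $\Psi_t t \le K\eps$ and $\Psi_t > \eps$). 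Then NOT E1 together with $\Psi_t t \le K\eps$ gives $\phi' \ge K\eps/(10\K) = \Omega(\Psi_t)$, contradicting $\phi' < 2\Psi_t/C$ for $C$ large. Case B ($\K > |S|/2$) is handled analogously via NOT E4, which forces at least one of the tail-side or head-side gap disjuncts to fail. The tail-side disjunct reduces to the argument of Case A, while the head-side disjunct uses the symmetric fact that arms with mean $\ge \theta_K + \Psi_t/2$ lie at original positions $\le K - t$ (a consequence of $\Delta_K < \Psi_t/2$ and $\Delta_{K-t} \ge \Psi_t$), again yielding $\K = O(t)$ and the same contradiction.

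The main obstacle is the Case B sub-case in which only the tail-compressed disjunct of NOT E4 holds while the head-side condition is absent. Nailing down $\K = O(t)$ in this regime requires combining the compressed-tail size bound $|S| - \K = O(t)$ with $\K > |S|/2$ and carefully arguing that the repeat-until loop cannot remain in this state as $\phi$ shrinks without eventually triggering E4 at $\phi = \Omega(\Psi_t)$. Throughout the proof, the constants must be chosen so that the $O(\eps^2 K)$ misclassification slack from Corollary~\ref{cor:misclassify} remains dominated by $t$, which is ensured by $\eps^2 K = O(\eps t)$ whenever $\Psi_t \ge \eps$.
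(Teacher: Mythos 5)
Your overall strategy is the same as the paper's: use the failure of the first exit condition to get $\phi=\Omega(\eps)$ and $\phi\cdot\K=\Omega(K\eps)$, use the failure of the gap conditions together with the positional guarantees of {\sc EstKthArm} and the misclassification bound to show that either $\phi=\Omega(\Psi_t)$ directly or $\K=O(t)$, and in the latter case combine $\phi'\ge K\eps/(10\K)$ with $\Psi_t\cdot t\le K\eps$. Your contradiction-style counting (``the arm at position $(1-\tau)\frac{\abs{S}+\K}{2}$ must have original index $\le K+t$'') is just the contrapositive of the paper's dichotomy on whether the offset $\eta$ of $\theta^+$ exceeds $t$, so that part is fine.

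However, the sub-case you flag as ``the main obstacle'' is a genuine unresolved gap, and the direction you hint at for closing it is not the one that works. When $\K>\abs{S}/2$ and the until-condition fails only through $\theta_K^+-\theta^+\le 3\phi'$, your tail-side count gives $\frac{\abs{S}-\K}{2}\lesssim t+O(\eps^2K)$, and you correctly observe that this alone does not bound $\K$ by $O(t)$, since a priori $\abs{S}-\K$ could be tiny compared to $\K$. The resolution is not a dynamic argument about how the repeat-until loop evolves as $\phi$ shrinks; it is the \emph{static invariant} $\K\le\frac{2}{3}\abs{S}$ established in Lemma~\ref{lem:elim-condition}, which holds at every point of the run (this is precisely why the algorithm invokes {\sc ReverseElim} when $\K>\abs{S}/2$). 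That invariant gives $\abs{S}-\K\ge\frac{\abs{S}}{3}\ge\frac{\K}{2}$, so the tail-side count already forces $\K=O(t+\eps^2K)=O(t)$ in this sub-case, with no need for the head-side disjunct at all. Without citing this invariant your Case B is incomplete.

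Two smaller points. First, your justification of $\eps^2K\le\eps t$ is wrong: $\Psi_t\cdot t\le K\eps$ together with $\Psi_t>\eps$ gives the \emph{upper} bound $t<K$, not a lower bound on $t$. The correct reason $t\ge \lfloor K\eps\rfloor$ is that all gaps satisfy $\Delta_i\le 1$, so $t'=\lfloor K\eps\rfloor$ always satisfies both inequalities in \eqref{eq:def-t-eps-K}. Second, your head-side fact is stated backwards: what holds (given $\Delta_K<\Psi_t/2$ and $\Delta_{K-t}\ge\Psi_t$) is that arms at original positions $\le K-t$ have mean $\ge\theta_K+\Psi_t/2$, not that arms with mean $\ge\theta_K+\Psi_t/2$ lie at positions $\le K-t$; you need the former (equivalently, its contrapositive) to conclude that $S[\K/2]\le\theta_K+O(\phi')$ forces the original index of $S[\K/2]$ to exceed $K-t$.
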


\begin{proof}
First, by the testing condition $10(K - \abs{A})\phi < K\eps$ at Line~\ref{line:a-0}, together with the boundary cases at Line~\ref{line:b-1}-\ref{line:b-4} and the fact that $\phi \gets \phi/2$ at every update, it holds that
\begin{equation}
\label{eq:w-0}
10(K - \abs{A}) \cdot 2\phi \ge K\eps,
\end{equation}
which implies $\phi = \Omega(\eps)$ when we call all the subroutines.

We now show $\phi = \Omega(\Psi_t)$.
From the proof of Lemma~\ref{lem:elim-condition} we know that during the run of the Algorithm we always have $\K \le \frac{2\abs{S}}{3}$. We focus on an arbitrary but fixed point during the run of the algorithm.  By Theorem~\ref{thm:estK} we have
\begin{equation}
\label{eq:w-8}
\textstyle S\left[\frac{\abs{S}+\K}{2}\right] - \phi \le \theta^+ \le S\left[\left(1  - \frac{\eps^2}{100r^2}\right)\frac{\abs{S}+\K}{2}\right] + \phi,
\end{equation}
which, combined with the fact that $\K \le \frac{2\abs{S}}{3}$, gives
\begin{equation}
\label{eq:w-9}
\textstyle S\left[\frac{\abs{S}+\K}{2}\right] - \phi \le \theta^+ \le S\left[\frac{\abs{S}+\K - \eps\abs{S}}{2}\right] + \phi.
\end{equation}
By (\ref{eq:w-0}) we have
\begin{equation}
\label{eq:w-6}
\textstyle \frac{\eps^2 K}{40} \le \frac{\eps}{40} \cdot 10 \K \cdot 2\phi \le \frac{\eps \K}{2}.
\end{equation}
Applying Claim~\ref{cla:sandwich} on both sides of (\ref{eq:w-8}), together with (\ref{eq:w-6}) and $\K \le \frac{2\abs{S}}{3}$ we have
\begin{eqnarray}
&&\theta_{K + \frac{\abs{S}-\K}{2} + \frac{\eps^2 K}{40}} - \phi \le \theta^+ \le \theta_{K + \frac{\abs{S}-\K - \eps \abs{S}}{2} - \frac{\eps^2 K}{40}} + \phi \nonumber \\
&\Rightarrow& \theta_{K + \frac{\abs{S}-\K}{2} + \frac{\eps \K}{2}} - \phi \le \theta^+ \le \theta_{K + \frac{\abs{S}-\K - \eps \abs{S}}{2} -  \frac{\eps \K}{2}} + \phi \nonumber \\
&\Rightarrow& \textstyle \theta^+ \in \theta_{K + \eta} \pm \phi \ \ \text{for an} \ \ \eta \ge \frac{\abs{S} - \K}{2} - \eps \abs{S} \ge 0.16\abs{S}. \label{eq:w-1}
\end{eqnarray}

By symmetry, using a similar argument we can show that for a sufficiently small constant $c_{\eta'}$, we have
\begin{equation}
\label{eq:ww-1}
\theta^- \in \theta_{K - \eta'} \pm \phi \ \ \text{for an}\ \ \eta' \ge c_{\eta'} \abs{S},
\end{equation}

We first consider the case where $K - |A| \leq \frac{|S|}{2}$.  We analyze the following two sub-cases.
\begin{enumerate}
\item[1a)] The case when $\eta > t$.  We have:
\begin{align*}
\label{eq:w-2}
\phi \ge& \frac{\theta_K^+ - \theta^+}{6} \quad  (\text{by the testing condition at Line~\ref{line:a-0} and $\phi \gets \phi/2$ at each update)} \\
\ge& \frac{(\theta_K - 4 \phi) - (\theta_{K + \eta} + \phi)}{6} \quad (\text{by Corollary~\ref{cor:sandwich} and (\ref{eq:w-1})}) \\
\ge& \frac{\theta_K - 4 \phi - (\theta_{K + t+1} + \phi)}{6} \\
\ge& \frac{\Psi_t - 5\phi}{6} \quad (\text{by the definition of $\Psi_t$}).
\end{align*}

\item[1b)] The case when $\eta \le t$.  We prove by contradiction.  Suppose that $\phi \le \Psi_t/c_t$ for a sufficiently large constant $c_t$, then
\begin{equation*}
\label{eq:w-3}
\Psi_t \cdot t \ge c_t\phi \cdot \eta  \overset{(\ref{eq:w-1})}{\ge}  c_t\phi \cdot 0.16 \abs{S} \overset{\K \le \frac{2\abs{S}}{3}}{\ge} c_t\phi \cdot  0.16 \cdot 1.5 \K \overset{(\ref{eq:w-0})}{>} K\eps,
\end{equation*}
A contradition to the definition of $t$.
\end{enumerate}

We then consider the case where  $K - |A| > \frac{|S|}{2}$. Now by the testing condition at Line~\ref{line:a-0} and $\phi \gets \phi/2$ at each update, we know that at least one of the following inequality holds:  $\phi \ge \frac{\theta_K^+ - \theta^+}{6}$; or  $\phi \ge \frac{\theta^- - \theta_K^-}{6}$. If the first inequality holds, the case-analysis above suffices. Otherwise, we know that the second inequality holds, we analyze the following two sub-cases in a similar fashion.
\begin{enumerate}
\item[2a)] The case when $\eta' > t$.
\begin{align*}
\phi \ge& \frac{\theta^- - \theta_K^-}{6} \\
\ge& \frac{ (\theta_{K - \eta'} - \phi) - (\theta_{K+1} + 4 \phi)}{6} \quad (\text{by Corollary~\ref{cor:sandwich} and (\ref{eq:ww-1})}) \\
\ge& \frac{ (\theta_{K - t} - \phi) - (\theta_{K+1} + 4 \phi)}{6} \\
\ge& \frac{\Psi_t - 5\phi}{6} \quad (\text{by the definition of $\Psi_t$}).
\end{align*}

\item[2b)] The case when $\eta' \leq t$. This case is symmetric to Case 1b), and we omit here.
\end{enumerate}

Since $\phi = 2^{-r_\phi}$, we immediately have $r_\phi =  O(\log\frac{1}{\Psi_t^\eps}) = O(\log \frac{1}{\eps})$.  By the testing condition at Line~\ref{line:b-2}, and the fact that every time we call {\sc Elim} and {\sc ReverseElim} we remove a constant fraction of arms from $S$, we thus have $r = O(\log \frac{n}{\abs{S}})$.
\end{proof}

We now look at a particular call to {\sc Elim} which removes $\frac{1}{10}$-fraction of arms in $S$, and the tail of $S$.  From the testing condition at Line~\ref{line:a-0} we know that
\begin{equation}
\label{eq:u-1}
\theta_K^+ - \theta^+ \le 3 \cdot 2\phi.
\end{equation}
From Corollary~\ref{cor:sandwich} we have that
\begin{equation}
\label{eq:u-2}
\theta_K \le \theta_K^+ + 4\phi.
\end{equation}
From (\ref{eq:u-1}), (\ref{eq:u-2}) and the second inequality of (\ref{eq:w-9}), by applying two triangle inequalities we have
\begin{equation}
\label{eq:u-3}
\textstyle \theta_K - S\left[\frac{\abs{S} + \K - \eps\abs{S}}{2}\right] \le 11\phi,
\end{equation}
which implies that for all
\begin{equation}
\label{eq:u-31}
\textstyle j \in Q = \left[1.01 \K, \frac{\abs{S} + \K - \eps\abs{S}}{2}\right] (\abs{Q} \ge \frac{\abs{S}}{10}\ \ \text{since} \ \ \K \le \frac{2\abs{S}}{3}),
\end{equation}
letting $m(j) \in [n]$ such that $\theta_{m(j)} = S[j]$, we have
\begin{equation}
\Delta_{m(j)} = \theta_K - S\left[ j \right] \le 11\phi.
\end{equation}
We thus can charge all the previous cost spent on the $\frac{1}{10}$-fraction of arms in $S$ that are removed by {\sc Elim}, which is bounded by
$O\left(\frac{\abs{S}}{\phi^2} \left(\log \lceil \log\frac{n}{\abs{S}} \rceil + \log \frac{1}{\eps} + \log \frac{1}{\delta}\right)\right)$,
to
\begin{equation}
\label{eq:u-4}
O\left(\sum_{j \in Q} \frac{1}{\Delta_{m(j)}^2} \left(\log \left\lceil \log \frac{n}{m(j) - K} \right\rceil + \log \frac{1}{\eps} + \log  \frac{1}{\delta} \right)\right),
\end{equation}
where we have used the fact that
\begin{align*}
\abs{S} ~=~& \Omega(j - \tilde{K})  \quad \text{(by (\ref{eq:u-31}))} \\ ~\ge~& \Omega(m(j) - K). \quad \text{(by (\ref{eq:u-31}), Corollary~\ref{cor:misclassify} and (\ref{eq:w-6}))}
\end{align*}
Note that it is possible that in multiple calls to {\sc Elim} with parameters $(S_1, \cdot, \cdot, \phi_1, \cdot), \ldots, (S_{\kappa}, \cdot, \cdot, \phi_{\kappa}, \cdot)$ where $\phi_1 \ge \ldots \ge \phi_{\kappa}$, we charge the same item $j \in Q_1 \cap \ldots \cap Q_{\kappa}$ multiple times. However, since $\phi_{i+1} \le \phi_i/2$ for all $i \in [\kappa-1]$, the total charge on $j$ is at most twice of that of the last charge (i.e., the one with parameter $\phi_\kappa$).

By symmetry, we can use the same arguments for {\sc ReverseElim} and the head of $S$, and get a same bound as (\ref{eq:u-4}) except that we need to replace $m(j) - K$ with $K - m(j)$.
We thus conclude that the total number of pulls can be bounded by
\begin{equation}
\label{eq:t-1}
O\left(\sum_{j \in [n]} \frac{1}{\Delta_j^2} \left(\log \left\lceil \log \frac{n}{\abs{j - K + \frac{1}{2}}}\right\rceil + \log \frac{1}{\eps} + \log  \frac{1}{\delta} \right) \right) .
\end{equation}
We know from Lemma~\ref{lem:bound-phi} that we always have $\phi = \Omega(\Psi_t^\eps)$, we can thus ``truncate'' Expression~(\ref{eq:t-1}) and bound the total cost by
\begin{equation}
\label{eq:t-2}
O\left(\sum_{j \in [n]} \frac{1}{\max\{\Delta_j^2, (\Psi_t^\eps)^2\}} \left(\log \left\lceil \log \frac{n}{\abs{j - K + \frac{1}{2}}} \right\rceil + \log \frac{1}{\eps} + \log  \frac{1}{\delta} \right) \right) .
\end{equation}
Now we introduce the following lemma (the proof of which is deferred to the Appendix).
\begin{lemma}
\label{lem:sum}
If  $M > a_1 \ge \ldots \ge a_n \ge 1$, then $\sum_{i \in [n]} a_i \log (n/i) \le O(\lceil \log M \rceil) \sum_{i \in [n]} a_i$.
\end{lemma}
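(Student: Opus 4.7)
My plan is to recast the discrete sum as an integral against the prefix sums of $\{a_i\}$ and exploit two complementary bounds on those prefix sums. Define $f(u) = \sum_{i \le u} a_i$ and $S = \sum_{i=1}^n a_i$. Using the identity $\log_2(n/i) = \int_i^n \frac{du}{u \ln 2}$ for $i \ge 1$ and swapping the order of summation and integration, I would rewrite
\[
\sum_{i \in [n]} a_i \log(n/i) \;=\; \frac{1}{\ln 2}\int_1^n \frac{f(u)}{u}\, du.
\]
The non-increasing assumption plays no role past this identity; instead, the two bounds $f(u) \le M u$ (since every $a_i \le M$) and $f(u) \le S$ (by definition) will control the integrand in complementary ranges.

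Next I would split the integral at the cross-over point $u^\ast := S/M$. On $[1, u^\ast]$ I apply $f(u)/u \le M$, so this piece contributes at most $M u^\ast / \ln 2 = S/\ln 2$. On $[u^\ast, n]$ I apply $f(u)/u \le S/u$, so this piece contributes at most $\frac{S}{\ln 2}\ln(n/u^\ast) = S\log_2(nM/S)$. Adding them gives a bound of the form $O(S) + S\log_2(nM/S)$.

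The key step — and the only place the hypothesis $a_i \ge 1$ enters — is the observation $S \ge n$, which forces $nM/S \le M$ and hence $\log_2(nM/S) \le \log_2 M$. This is what collapses a naive $(\log M)^2$ bound (which one obtains from bounding $a_i \le S/i$ and integrating $\log(n/u)/u$) down to a single $\log M$ factor. Combining with the first piece yields total at most $S(1 + \log_2 M)/\ln 2 = O(\lceil \log M \rceil)\sum_i a_i$, where the ceiling absorbs the additive constant in the regime where $M$ is close to $1$.

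The main obstacle is handling two boundary regimes cleanly. If $u^\ast > n$ nothing needs to be done, since $S \le nM$ forces $u^\ast \le n$ anyway. If $u^\ast < 1$ (which requires $S < M$), the first integral is empty and I would bound $\int_1^n f(u)/u\,du \le S \log_2 n$; here $n \le S < M$ again yields $\log_2 n \le \log_2 M$, and the desired inequality survives. I expect these corner cases to be the only subtlety; the core argument is a one-line interchange followed by the $S \ge n$ observation.
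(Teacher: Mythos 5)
Your proof is correct, but it takes a genuinely different route from the paper's. The paper buckets the $a_i$ by value into $\lceil \log_2 M\rceil$ dyadic groups $G_j = \{i : a_i \in [2^{j-1}, 2^j)\}$, bounds each group's contribution via $\sum_{i \in G_j} a_i\log(n/i) \le b_j\sum_{i \le |G_j|}\log(n/i)$ together with Stirling's approximation (yielding a term $\delta_j\log\delta_j^{-1}$, where $\delta_j$ is the group's share of the total mass $S$), and then controls the resulting entropy-like sum with Jensen's inequality. You instead write the sum exactly as $\frac{1}{\ln 2}\int_1^n f(u)u^{-1}\,du$ for the prefix sums $f(u) = \sum_{i\le u}a_i$, and play the two bounds $f(u)\le Mu$ and $f(u)\le S$ off each other across the crossover point $u^* = S/M$, with the hypothesis $a_i\ge 1$ entering only through $S\ge n$, which gives $\log(nM/S)\le\log M$. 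Both arguments are sound and both isolate $S \geq n$ as the crucial consequence of the hypothesis; yours is shorter, avoids Stirling and Jensen entirely, and makes explicit that the monotonicity of the $a_i$ is never needed beyond guaranteeing $a_i < M$ for every $i$ (the monotone arrangement is simply the worst case for the left-hand side, so nothing is lost). Your corner cases are handled correctly: $u^*\le n$ always holds since $S\le nM$, and when $u^* < 1$ the bound $S\log n\le S\log M$ follows from $n\le S < M$.
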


With Lemma~\ref{lem:sum} we can further simplify (\ref{eq:t-2}) to
\begin{align*}
&O\left(\sum_{j \in [n]} \frac{1}{\max\{\Delta_j^2, (\Psi_t^\eps)^2\}} \left(\log\max\left\{ \frac{1}{\max\{\Delta_j^2, (\Psi_t^\eps)^2\}}\ |\ j \in [n]\right\} + \log \frac{1}{\eps} + \log  \frac{1}{\delta} \right) \right) \\
= \  &O\left(\sum_{j \in [n]} \frac{1}{\max\{\Delta_j^2, (\Psi_t^\eps)^2\}} \left(\log \frac{1}{\eps} + \log  \frac{1}{\delta} \right) \right) .
\end{align*}

\section{A Lower Bound}

In this section we prove Theorem~\ref{thm:intro-lb-eps}. In Section~\ref{sec:two-point}, we introduce a lower-bound to a coin-tossing problem. In Section~\ref{sec:reduction}, we reduce the proof of Theorem~\ref{thm:intro-lb-eps} to the coin-tossing problem.

\subsection{The Coin-Tossing Problem}
\label{sec:two-point}

We say a coin is {\em $p$-biased} if the probability that a toss turns {\em head} is $p$, and we call $p$ is the {\em value} of the coin.  Set $\eta = 10^{-4}$.
\begin{definition}[Coin-Tossing]
\label{def:two-point}
In this problem, given a coin that may be $(0.5+\eta)$-biased or $(0.5-\eta)$-biased, we want to know its exact value by tosses, and we are allowed to give up and output `unknown' with probability at most $0.9$.
\end{definition}
  We have the following theorem.

\begin{theorem}
\label{thm:two-point}
Any algorithm that solves the \two\ problem correctly with probability $(1 - \eps)$ needs $\Omega(\log 1/\eps)$ tosses.
\end{theorem}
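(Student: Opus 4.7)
The plan is to prove the lower bound via a standard two-hypothesis likelihood-ratio (KL-divergence) argument that is robust to adaptive stopping. Let $P_+$ and $P_-$ denote the two hypothesis distributions on the observed toss sequence, let $\tau$ be the (possibly adaptive) number of tosses, and let $A_+, A_-, A_?$ denote the three possible output events (``head-biased'', ``tail-biased'', ``unknown''). The correctness requirement gives $P_+(A_-) \le \epsilon$ and $P_-(A_+) \le \epsilon$, while the give-up constraint gives $P_+(A_?), P_-(A_?) \le 0.9$. Combining these, $P_+(A_+) \ge 0.1 - \epsilon$ while $P_-(A_+) \le \epsilon$, so $A_+$ has constant probability under $P_+$ but only $\epsilon$-probability under $P_-$; this gap is the only thing the proof will exploit.

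Next I would coarsen the transcript via the data-processing inequality to the binary indicator $\mathbf{1}_{A_+}$. Writing $p := P_+(A_+) \ge 0.1 - \epsilon$ and $q := P_-(A_+) \le \epsilon$, the binary KL satisfies
\[
D_{\mathrm{KL}}(p \,\|\, q) \;\ge\; p\log\tfrac{p}{q} \;-\; \log 2 \;=\; \Omega\bigl(\log(1/\epsilon)\bigr),
\]
once $\epsilon$ is below an absolute constant (using $p = \Omega(1)$ and $q \le 1/2$, which make the $(1-p)\log\tfrac{1-p}{1-q}$ term at least $-\log 2$). On the other hand, by a chain-rule / Wald-type identity for the KL of a sequential sampling procedure,
\[
D_{\mathrm{KL}}(P_+ \,\|\, P_-) \;=\; \mathbf{E}_+[\tau]\cdot D_{\mathrm{KL}}\bigl(\mathrm{Bern}(0.5+\eta) \,\|\, \mathrm{Bern}(0.5-\eta)\bigr) \;=\; \Theta(\eta^2)\cdot \mathbf{E}_+[\tau].
\]
Combining the two displays and using that $\eta = 10^{-4}$ is a fixed constant yields $\mathbf{E}_+[\tau] = \Omega(\log(1/\epsilon))$, and since the worst-case number of tosses is at least the expectation, the theorem follows.

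The main obstacle is the chain-rule / Wald-identity step for adaptive $\tau$: the transcript is not a fixed-length i.i.d.\ sequence, so one must justify that the KL of the stopped transcript distribution equals $\mathbf{E}_+[\tau]$ times the per-toss KL. This follows from the standard computation that, conditional on the history up to time $t-1$ and on $\{\tau \ge t\}$, the next observation is an independent coin flip whose bias depends only on the hypothesis, so the conditional KL increment at step $t$ equals $D_{\mathrm{KL}}(\mathrm{Bern}(0.5+\eta)\,\|\,\mathrm{Bern}(0.5-\eta))$; summing over $t$ and interchanging sum and expectation gives the identity. A minor side issue is the degenerate case $q = 0$, but it only strengthens the bound and can be handled by a negligible smoothing of the KL expression.
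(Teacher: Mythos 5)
Your proof is correct, but it takes a genuinely different route from the paper's. You argue information-theoretically: data-process the stopped transcript down to the indicator of the event ``output $0.5+\eta$'', lower-bound the resulting binary KL by $\Omega(\log \eps^{-1})$ using $P_+(A_+)\ge 0.1-\eps$ versus $P_-(A_+)\le\eps$, and cap the transcript KL by $\bE_+[\tau]\cdot\Theta(\eta^2)$ via the divergence-decomposition (Wald-type) identity. The paper instead argues combinatorially for a fixed toss budget $m$: by exchangeability the optimal deterministic rule is a threshold on the number of heads $\abs{\vB}$, the ``unknown'' constraint forces the threshold to satisfy $t\le \eta m + O(\sqrt{m})$ by Hoeffding, and Feller's anti-concentration bound then forces the error probability to be at least $e^{-\Omega(m)}$, whence $m=\Omega(\log\eps^{-1})$. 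The trade-offs: the paper's argument is elementary (no KL machinery beyond Hoeffding and one anti-concentration fact) but leans on characterizing the optimal strategy for a \emph{fixed-length} i.i.d.\ transcript, so it implicitly treats $m$ as a worst-case budget (harmless after padding, but worth noting); your argument handles adaptive stopping and the algorithm's internal randomness natively and yields the stronger conclusion $\bE_+[\tau]=\Omega(\log\eps^{-1})$ on the \emph{expected} number of tosses. The only points you should make explicit in a full write-up are (i) the restriction to $\eps$ below a small constant so that $p=0.1-\eps=\Omega(1)$ (for larger $\eps$ the claimed bound is $\Omega(1)$ and trivial), and (ii) that $P_+\ll P_-$ on finite transcripts (both biases lie in $(0,1)$), which rules out the $q=0$ degeneracy you flag; both are minor.
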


\begin{proof}
Since the input is distributional we only need to focus on deterministic algorithms.
Let $m$ be the total number of tosses of the coin, and let $\vB = (B_1, \ldots, B_m) \in \{0,1\}^m$ be the sequence of outcomes.   Let $\D_\beta$ be the distribution of $\vB$ where each $B_i$ is the outcome of tossing a $\beta$-biased coin.
For $\mathbf{v} \in \{0,1\}^m$, let $\abs{\mathbf{v}}$ be the number of $1$-coordinates in $\mathbf{v}$.

Our first observation is  that for any $\vb_1, \vb_2 \in \{0,1\}^m$, if  $\abs{\vb_1} = \abs{\vb_2}$, then $\Pr[\vB = \vb_1] = \Pr[\vB = \vb_2]$.  Therefore, the final output should only depend on the value $\abs{\vB}$  but {\em not} the ordering of the $0/1$ sequence.  In other words, we can view the output of the algorithm as a function
\begin{equation*}
f : \{0, 1, \ldots, m\} \to \{0.5-\eta, 0.5+\eta, \perp\},
\end{equation*}
where $\{0, 1, \ldots, m\}$ stand for possible values of $\abs{\vB}$, and `$\perp$' represents `unknown'.  Recall that the algorithm can give up and output `unknown' with probability at most $0.9$.  By observing that $\vB \sim \D_{0.5+\eta}$ and $\vB \sim \D_{0.5-\eta}$ are symmetric, the best strategy must set $f(x) = \perp$ for $x \in [0.5m - t, 0.5m + t]$, where $t \in \mathbb{N}$ is the maximum value such that
\begin{equation}
\label{eq:b-1}
\Pr_{\vB \sim \D_{0.5-\eta}} [0.5m - t \le \abs{\vB} \le 0.5m + t] \le 0.9.
\end{equation}
Intuitively, $[0.5m - t, 0.5m + t]$ is the range where conditioned on  $\abs{\vB} \in [0.5m - t, 0.5m + t]$ the value of the coin is the most uncertain (so that the algorithm simply outputs `$\perp$'). We set $f(x) = 0.5-\eta$ if $x \in [0, 0.5m - t)$, and $f(x) = 0.5+\eta$ if $x \in (0.5m + t, m]$.  The error probability of this strategy is
\begin{equation}
\label{eq:b-2}
\Pr_{\vB \sim \D_{0.5-\eta}} [\abs{\vB} > 0.5m + t].
\end{equation}
We now try to upper bound $t$.  First, it is easy to see that $0.5m - t \le (0.5-\eta) m$, or $t \ge \eta m$, since otherwise LHS of (\ref{eq:b-1}) is at most $1/2$, violating the choice of $t$. By a Hoeffding's inequality we have
\begin{equation*}
\label{eq:b-3}
\Pr_{\vB \sim \D_{0.5-\eta}} [\abs{\vB} \le 0.5m - t] = \Pr_{\vB \sim \D_{0.5-\eta}} [\abs{\vB} \le \bE[\abs{\vB}] - (t - \eta m)] \le e^{-m(t/m - \eta)^2/2}.
\end{equation*}
We thus have $e^{-m(t/m - \eta)^2/2} \ge (1-0.9)/2$, and consequently
\begin{equation}
\label{eq:b-4}
t \le \eta m + c_t \sqrt{m}
\end{equation}
for some large enough constant $c_t$.

We now lower bound the expression (\ref{eq:b-2}).  We will need the following anti-concentration result which is an easy consequence of Feller~\cite{feller:43} (cf. \cite{Matousek:08}).

\begin{fact}(\cite{Matousek:08})
\label{lem:feller}
Let $Y$ be a sum of independent random variables, each attaining values in $[0,1]$, and let $\sigma = \sqrt{\var[Y]} \ge 200$. Then for all $t \in [0, \sigma^2/100]$, we have
$$\Pr[Y \ge \bE[Y] + t] \ge c\cdot e^{-t^2/(3\sigma^2)}$$
for a universal constant $c > 0$.
\end{fact}
In our case, since $\abs{\vB}$ can be seen as a sum of Bernoulli variables with $p = 0.5-\eta$, $\var{\abs{\vB}} = m \cdot (0.5-\eta) \cdot (0.5+\eta) \ge 0.24m$.  By (\ref{eq:b-4}) we have $\eta m + t \le 2\eta m + c_t \sqrt{m} \le \var{\abs{\vB}}/100$ by our choice of $\eta$.  Thus by applying Fact~\ref{lem:feller}
we have
\begin{eqnarray*}
\Pr_{\vB \sim \D_{0.5-\eta}} [\abs{\vB} > 0.5m + t]
&=& \Pr_{\vB \sim \D_{0.5-\eta}} [\abs{\vB} > \bE[\abs{\vB}] + (\eta m + t)]  \\
&\ge& c \cdot e^{-(\eta m+t)^2/(3 \cdot 0.24m)} \\
&\ge& e^{-\Omega(m)}.
\end{eqnarray*}
To make the best strategy succeeds with probability at least $1-\eps$, we have to make $e^{-\Omega(m)} \le \eps$, which means we have to set $m \ge \Omega(\log 1/\eps)$.
\end{proof}

\subsection{The Reduction}
\label{sec:reduction}

We show a reduction from the \two\ problem to the \arm\ problem.  For technical convenience we set $K = n/2$, and assume that $\eps K \ge c_K$ for a large enough constant $c_K$.

\begin{lemma}
\label{lem:reduction}
If there is an algorithm for \arm\ that succeeds with probability $0.9$ using $C \le f(n, K) / \text{poly}(\eps)$ tosses, then there is an algorithm for \two\ that succeeds with $(1 - \eps)$ using $O(C/n)$ tosses. Moreover, the instances fed into the \arm\ algorithm have the property that $H^{(t, \epsilon)} = \Theta(n \eta^{-2}) = \Theta(n)$ for $\epsilon \geq c_K / K$.
\end{lemma}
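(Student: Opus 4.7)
\medskip\noindent\textbf{Proof plan.} I would instantiate the hard instance with $n$ arms having bias $\tfrac{1}{2}+\eta$ or $\tfrac{1}{2}-\eta$, with exactly $K=n/2$ of each. Every gap is then $\Delta_i=2\eta=\Theta(1)$, so $\Psi_t^\epsilon=\Theta(1)$ and $H^{(t,\epsilon)}=\sum_i(2\eta)^{-2}=\Theta(n\eta^{-2})=\Theta(n)$, establishing the hardness-parameter claim. Any $\epsilon$-top-$K$ output $S$ must satisfy $|T\triangle S|\le \epsilon K/\eta=O(\epsilon K)$, where $T$ is the true top-$K$, since each mis-swap costs $2\eta/K$ in regret.

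The reduction embeds the mystery coin $M$ as a single random arm. I sample $J\in[n]$ uniformly and $\sigma\subseteq[n]\setminus\{J\}$ of size $K-1$ uniformly, and run $\mathcal{A}$ on the instance in which arm $J$ is simulated by tossing $M$, arms in $\sigma$ are simulated as $(\tfrac{1}{2}+\eta)$-coins, and the remaining arms as $(\tfrac{1}{2}-\eta)$-coins. To enforce the budget I cap the number of pulls on arm $J$ at $10C/n$, outputting \emph{unknown} if the cap is ever exceeded. When $\mathcal{A}$ returns a set $S$, I apply a \emph{consistency check}: output \emph{unknown} if $|S\cap\sigma|<K-1-c_1K\epsilon$ for an appropriate constant $c_1$; otherwise output ``$+$'' iff $J\in S$.

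For the cost, by the symmetry of the joint distribution over $(J,\sigma)$ (which makes $T=\sigma\cup\{J\}$ uniform over $\binom{[n]}{K}$ when $M=+$, and $\sigma$ uniform over $\binom{[n]}{K-1}$ when $M=-$), the constraint $\sum_i T_i\le C$ together with the uniform marginals yields $\bE[T_J]=O(C/n)$, so Markov bounds the cap-triggering probability by $1/10$. For correctness, conditioned on the check passing we have $|S\cap\sigma|\ge K-1-O(K\epsilon)$, so at most $1+O(K\epsilon)$ ``$-$''-arms lie in $S$. When $M=+$ the true top-$K$ is $T=\sigma\cup\{J\}$ and $J$ is uniform over $T$ given the instance, whence $\Pr[J\notin S\mid\text{pass check},\,M=+]\le (1+O(K\epsilon))/K=O(\epsilon)$ using $\epsilon K\ge c_K$. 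A symmetric argument for $M=-$ uses that $J$ is uniform over the $n-K+1=\Theta(n)$ ``$-$''-arms and at most $1+O(K\epsilon)$ of them lie in $S$, again giving $O(\epsilon)$. Thus the total error is $O(\epsilon)$, tunable to $\epsilon$ by choice of $c_1$, while $\Pr[\text{unknown}]\le 1/10+\Pr[\mathcal{A}\text{ fails}]\le 0.2\le 0.9$.

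The main obstacle is boosting $\mathcal{A}$'s $0.9$ success guarantee into the $(1-\epsilon)$ error bound required so that Theorem~\ref{thm:two-point} can be applied: a rule that simply outputs ``$+$'' iff $J\in S$ incurs error $0.1+O(\epsilon)$, which is too large for $\epsilon\ll 0.1$. The consistency check is what resolves this, since it detects when $\mathcal{A}$ strays off the ``almost-correct'' manifold, so any misclassification of $J$ that survives the check must come from a symmetric event of probability $O(\epsilon)$ by the aggregate-regret budget. The delicate point is the $M=-$ case, where the true top-$K$ is ambiguous and the check only pins down the $+$-arms' membership; the uniformity of $J$ over \emph{all} ``$-$''-arms (an $\Theta(n)$ set) is what makes $\Pr[J\in S]$ likewise $O(\epsilon)$ and completes the reduction.
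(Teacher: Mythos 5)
Your proposal is correct and follows essentially the same route as the paper's proof: plant the mystery coin at a uniformly random position, cap the pulls on that arm and apply Markov plus symmetry, use the simulator's knowledge of the planted biases to run a verification/consistency check that upgrades the $0.9$ success guarantee into conditional $(1-O(\eps))$ correctness, and finish with the symmetry argument bounding the probability that the mystery arm is misclassified. The only differences are cosmetic (you plant $K-1$ coins plus the mystery rather than overwriting one of $K$ planted coins, and your check counts $|S\cap\sigma|$ directly instead of thresholding the average $\frac{1}{|S'\setminus I|}\sum_{i\in S'\setminus I}X_i$), and your $H^{(t,\eps)}$ computation is stated for the balanced instance exactly as the paper's is.
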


We prove Lemma~\ref{lem:reduction} in two steps.   We first perform an input reduction, and then show that we can construct an efficient algorithm for \two\ using an algorithm for \arm.

\paragraph{Input reduction}
Given an input $X$ for \two, we construct an input $Y = (X_1, \ldots, X_n)$ for \arm\ as follows: we randomly pick a set $S \subseteq [n]$ with $\abs{S} = K$, and set $X_i\ (i \in S)$ to be $(0.5+\eta)$-biased coins (denoted by $X_i = 0.5+\eta$ for convenience), and $X_i\ (i \neq [n] \backslash S)$ to be $(0.5-\eta)$-biased coins.  We then pick a random index $I \in [n]$, and reset $X_I = X$. Since in our input $Y$, the number of $(0.5+\eta)$-biased coins is either $K-1$, $K$, or $K+1$, while the rest are $(0.5-\eta)$-biased coins, it can be checked that $H^{(t, \epsilon)} = \Theta(n \eta^{-2})$ for $\epsilon \in [c_K/K, \eta]$.

\begin{claim}
\label{cla:embed}

If $S'$ is a set of \armG\ ($\gamma \ge 2\eta/K$) on $Y$, then with probability at least $1 -  2\gamma/\eta$ we can correctly determine the value of $X$ by checking whether $I \in S'$.
\end{claim}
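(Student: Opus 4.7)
The plan is to adopt the natural decision rule ``output $X=0.5+\eta$ iff $I\in S'$'' and bound its failure probability using (i) a symmetry property of the embedding and (ii) the $\gamma$-regret guarantee on $S'$. First I would let $T=\{i\in[n]:X_i=0.5+\eta\}$ be the set of high-bias arms of the constructed input $Y$; a quick case check on whether $I\in S$ and on the value of $X$ shows $|T|\in\{K-1,K,K+1\}$ and, crucially, $I\in T$ iff $X=0.5+\eta$. Hence the rule is correct precisely when the predicates ``$I\in S'$'' and ``$I\in T$'' agree.

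The key step is a symmetry claim: conditional on $Y$ (equivalently on $T$) and on $X$, the random index $I$ is uniformly distributed over $T$ when $X=0.5+\eta$ and uniformly over $[n]\setminus T$ when $X=0.5-\eta$. I would verify this by enumerating the valid $(S,I)$ pairs producing a given $T$; for instance, when $|T|=K+1$ (only possible if $X=0.5+\eta$), the only valid pairs are $(S,I)=(T\setminus\{i\},i)$ for some $i\in T$, so $I$ is uniform on $T$, and the other three combinations of $|T|$ and $X$ are analogous. Since $S'$ depends on $(S,I,X)$ only through $Y$ and the algorithm's internal randomness, $I$ is independent of $S'$ given $(Y,X)$, so the conditional failure probability equals $|T\setminus S'|/|T|$ when $X=0.5+\eta$ and $|S'\setminus T|/|[n]\setminus T|$ when $X=0.5-\eta$.

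To finish, I would convert the $\gamma$-regret hypothesis into a combinatorial bound. A direct computation of $\mathcal{R}_{S'}$ in each of the three cases for $|T|$ gives $\mathcal{R}_{S'}=(2\eta/K)\min(|T\setminus S'|,|S'\setminus T|)$, so the hypothesis yields $\max(|T\setminus S'|,|S'\setminus T|)\le \gamma K/(2\eta)+1$, the additive $1$ handling the cases $|T|\ne K$. Since $n=2K$ forces the relevant denominator ($|T|$ or $|[n]\setminus T|$) to be at least $K$ in every case that can arise, both conditional failure probabilities are bounded by $\gamma/(2\eta)+1/K$, and the hypothesis $\gamma\ge 2\eta/K$ absorbs $1/K$ into $\gamma/(2\eta)$, yielding the claimed $\gamma/\eta\le 2\gamma/\eta$. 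The main obstacle is just the bookkeeping across the three possible values of $|T|$ and the two branches for $X$; conceptually everything flows from the symmetry of $I$ given $Y$.
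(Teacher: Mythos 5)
Your proposal is correct and follows essentially the same route as the paper: both arguments hinge on the exchangeability of $I$ among the coins of equal bias (so that the conditional failure probability is a ratio of misplaced arms to $|T|$ or $|[n]\setminus T|$), and both convert the $\gamma$-regret guarantee into the bound that at most roughly $\gamma K/(2\eta)$ arms of $S'$ are on the wrong side. Your version is somewhat more systematic in writing the regret exactly as $(2\eta/K)\min(|T\setminus S'|,|S'\setminus T|)$ and tracking the three cases $|T|\in\{K-1,K,K+1\}$ with a single $+1$ correction, where the paper handles the off-by-one via the explicit $2\eta/K$ slack in the second case, but the underlying argument is the same.
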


\begin{proof}
If $X = X_I = 0.5+\eta$, then to compute \arm\ correctly we need to output a set $S'$ such that
\begin{equation*}
 \frac{1}{K} \cdot \sum_{i \in S'} X_i \ge  \frac{1}{K} \cdot \sum_{i \in S} X_i - \gamma \ge (0.5+\eta) - \gamma.
\end{equation*}
By simple calculation we must have $\abs{\{i \in S'\ |\ X_i = 0.5+\eta\}} \ge \left(1 - \frac{\gamma}{2\eta}\right) K$.  Since all $(0.5+\eta)$-biased coins are {\em symmetric}, the probability that $I \in S'$ is at least
\begin{equation}
\label{eq:a-1}
\frac{\left(1 - \frac{\gamma}{2\eta}\right)}{K+1} \ge \left(1 - \frac{\gamma}{\eta}\right).
\end{equation}

Otherwise if $X = X_I = 0.5-\eta$, then to compute \arm\ correctly we need to output a set $S'$ such that
\begin{equation*}
 \frac{1}{K} \cdot \sum_{i \in S'} X_i \ge  \frac{1}{K} \cdot \sum_{i \in S} X_i - \gamma - \frac{2\eta}{K}  \ge 0.5+\eta  - \gamma - \frac{2\eta}{K},
\end{equation*}
By simple calculation we must have $\abs{\{i \in S'\ |\ X_i = 0.5+\eta\}} \ge \left(1 - \frac{\gamma}{\eta}\right) K$, or,  $\abs{\{i \in S'\ |\ X_i = 0.5-\eta\}} \le \frac{\gamma}{\eta} K$. Again since all $(0.5-\eta)$-biased coins are symmetric, the probability that $I \in S'$ is at most
\begin{equation}
\label{eq:a-2}
\frac{\frac{\gamma}{\eta} K}{n - (K-1)} \le \frac{2\gamma}{\eta}.
\end{equation}
By (\ref{eq:a-1}) and (\ref{eq:a-2}), we conclude that by observing whether $I \in S'$ or not we can determine whether $X = X_I = 0.5+\eta$ or $0.5-\eta$ correctly with probability at least $1 - 2\gamma/\eta$.
\end{proof}

\paragraph{An algorithm for \two}
Let $\eps' = \eta/4 \cdot \eps$.  We now construct an algorithm $\A'$ for \two\ using an algorithm $\A$ for \armT.  Given an input $X$ for \two, we first perform the input reduction as described above, getting an input $Y$. We then run $\A$ on $Y$.  We give up and output `unknown' during the run of $\A$ if the number of tosses on $X_I$ is more than $20C/n$.  If $\A$ finishes then let $S'$ be the outputted set of top-$K$ coins.  We then perform a verification step to test whether $S'$ is indeed a set of \armT, and output `unknown' if the verification fails.  The verification is done as follows: we first compute $\rho = \frac{1}{\abs{S' \backslash I}} \cdot \sum_{i \in S' \backslash I} X_i$, and then verify whether $\rho \ge (0.5+\eta) - (\eps' + 2\eta/K).$
Finally, if we have not outputted `unknown', we output $X = 0.5+\eta$ if $I \in S'$, and $X = 0.5-\eta$ if $I \not\in S'$.
\medskip

We now try to bound the probability that $\A'$ outputs `unknown'.

\begin{claim}
\label{cla:give-up}

The probability that we give up during the run of $\A$ is at most $0.1$.
\end{claim}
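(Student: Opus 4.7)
The plan is to reduce the event to a pigeonhole-plus-symmetry argument on the total pull budget. I would let $N_i$ denote the total number of tosses $\A$ would make on coin $Y_i$ if allowed to run to completion without interruption, so that $\sum_i N_i \le C$ and the set of ``dangerous'' indices $G = \{i : N_i > 20C/n\}$ has $|G| < n/20$. The give-up event is exactly $\{I \in G\}$, so it suffices to show $\Pr[I \in G] \le 1/10$.

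The heart of the argument is a symmetry lemma: conditionally on the realized coin-mean vector $Y = (Y_1, \dots, Y_n)$, the random index $I$ is uniformly distributed over $M(Y) := \{i : Y_i = X\}$. To verify this I would compute $\Pr[I = i,\, Y = y]$ directly. For any $i \in M(y)$, exactly one choice of $S$ is consistent with $(I, Y) = (i, y)$: namely $S = \{j \neq i : y_j = 0{.}5+\eta\}$ when $y$ has $K$ positions of value $0{.}5+\eta$ (so $i \notin S$), and $S = \{i\} \cup \{j \neq i : y_j = 0{.}5+\eta\}$ when $y$ has $K+1$ such positions (so $i \in S$). That choice has probability $1/\binom{n}{K}$, giving $\Pr[I = i, Y = y] = 1/(n\binom{n}{K})$ independently of the particular $i \in M(y)$, and hence uniformity of $I \mid Y$ on $M(Y)$.

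Given this lemma I would finish as follows. The coin outcomes depend only on $Y$ (they are independent of $I$ given $Y$, since $Y$ determines the Bernoulli means), and $G$ is determined by $Y$ together with the outcomes; hence $G$ is conditionally independent of $I$ given $Y$. Because $K = n/2$, the set $M(Y)$ has size at least $n/2$ in both cases: when $X = 0{.}5+\eta$ it contains $S$ (size $K$), and when $X = 0{.}5-\eta$ it contains $[n]\setminus S$ (size $n-K$). Combining,
\[
\Pr[\text{give up} \mid Y,\, \text{outcomes}] \;\le\; \Pr[I \in G \mid Y,\, \text{outcomes}] \;=\; \frac{|G \cap M(Y)|}{|M(Y)|} \;\le\; \frac{n/20}{n/2} \;=\; \frac{1}{10},
\]
and averaging proves the claim. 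I expect the only step that needs any actual calculation to be the symmetry lemma (checking that the uniqueness of $S$ really does make $\Pr[I=i, Y=y]$ constant on $M(y)$); once that is in hand, the argument reduces to a pigeonhole estimate on the budget $C$ together with the observation that $M(Y)$ always contains at least half of the positions.
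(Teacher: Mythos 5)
Your proof is correct and follows essentially the same route as the paper's: the paper observes that by symmetry the expected number of tosses on $X_I$ is at most $C/K = 2C/n$ and applies Markov's inequality, which for a conditionally uniform $I$ is exactly your pigeonhole bound $\abs{G}/\abs{M(Y)} \le (n/20)/(n/2) = 1/10$. Your explicit verification that $I$ is conditionally uniform on $M(Y)$ is a welcome addition of rigor (note only that the two sub-cases in your symmetry lemma have their descriptions of $S$ swapped when $X = 0.5+\eta$; the uniqueness of the consistent $S$, which is all the argument needs, holds either way).
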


\begin{proof}
We prove for the case $X = 0.5+\eta$; same arguments hold for the case $X = 0.5-\eta$ since we  have set $K = n/2$.  Note that if $I \in S$, then we have $K$ coins (including $X_I$) in $Y$ that has value $(0.5+\eta)$; otherwise if $I \not\in S$ then we have $K+1$ such coins. By symmetry, the expected tosses on $X_I$ is at most $C/K = 2C/n$.  By a Markov inequality the probability that $X_I$ has been tossed by at least $20C/n$ is at most $0.1$.
\end{proof}

\begin{claim}
\label{cla:verification}

Suppose we do not give up when running $\A$, the verification step fails with probability at most $0.1$.
\end{claim}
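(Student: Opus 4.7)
The plan is to reduce the failure probability of the verification step to that of $\A$ itself: I will show that whenever $\A$ actually returns a valid $\eps'$-top-$K$ set, the computed $\rho$ automatically clears the threshold $(0.5+\eta) - (\eps' + 2\eta/K)$. Since $\A$ is assumed to succeed with probability $0.9$, this yields the claimed bound, up to a small slack for the event of giving up.

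First I would observe that regardless of the value of $X = X_I \in \{0.5-\eta, 0.5+\eta\}$ and of whether $I \in S$, the embedded instance $Y$ always contains at least $K-1$ coins of bias $0.5+\eta$; the worst case is $X_I = 0.5-\eta$ with $I\in S$, which replaces one good coin by a bad one. Hence $\mathrm{OPT} \ge (0.5+\eta) - 2\eta/K$ in general, and $\mathrm{OPT} = 0.5+\eta$ whenever $X_I = 0.5+\eta$. When $\A$ succeeds, its output satisfies $\mathrm{avg} := \frac{1}{K}\sum_{i\in S'} X_i \ge \mathrm{OPT} - \eps'$.

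Next I would split on whether $I \in S'$. If $I \notin S'$, then $\rho = \mathrm{avg}$ and the threshold is cleared directly. If $I \in S'$, I would use the identity $\rho = (K\,\mathrm{avg} - X_I)/(K-1)$ and consider the two possible values of $X_I$: when $X_I = 0.5+\eta$ the stronger bound $\mathrm{OPT} = 0.5+\eta$ applies, giving $\rho \ge (0.5+\eta) - K\eps'/(K-1)$; when $X_I = 0.5-\eta$ the $+2\eta$ gained from removing a below-average arm exactly cancels the $-2\eta/K$ deficit in $\mathrm{OPT}$, yielding the same bound $\rho \ge (0.5+\eta) - K\eps'/(K-1)$ after a short calculation. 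The threshold then holds whenever $K\eps'/(K-1) \le \eps' + 2\eta/K$, i.e. $\eps' \le 2\eta(K-1)/K$, which is satisfied since $\eps' = \eta\eps/4$ and $K \ge 2$. Verifying this inequality in both $X_I$-subcases is the main arithmetic step, and is the only place where the $2\eta/K$ slack in the threshold is actually used---it is essentially tight, which is the main obstacle.

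Putting these together, verification can fail only when $\A$ fails, so its failure probability is at most $0.1$; the conditioning on ``not giving up'' is absorbed into the overall constant budget of Lemma~\ref{lem:reduction}, since the giving-up event depends only on the number of tosses on $X_I$ and can be handled by a union bound with Claim~\ref{cla:give-up}. If $\rho$ is actually an empirical average over fresh tosses of arms in $S' \setminus I$, one extra concentration step (tossing each arm $O(\eta^{-2}\log K)$ times) lets the true-bias inequalities above carry over to empirical means up to error $o(\eta)$, at a cost absorbed into the main sample complexity bound.
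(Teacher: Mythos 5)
Your proposal is correct and takes essentially the same route as the paper: both arguments reduce the failure of the verification step to the failure of $\A$ itself, using the fact that $\A'$ knows the exact biases of every coin except $X_I$ and that the $2\eta/K$ slack in the threshold absorbs the uncertainty from the single unknown coin. The paper states this in one line, whereas you carry out the case analysis on $I \in S'$ and the value of $X_I$ explicitly; your arithmetic is consistent with what the paper leaves implicit.
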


\begin{proof}
Note that $\A'$ knows the value of all other coins except $X_I$, simply because $\{X_i\ |\ i \in [n] \backslash I\}$ are all constructed by $\A'$. The $2\eta/K$ factor in the test $\rho \ge (0.5+\eta) - (\eps' + 2\eta/K)$ comes from the fact that we do not know the exact value of $X_I$, which will affect the estimation of the $\frac{1}{K} \sum_{i \in S'} X_i$ by at most an additive factor $2\eta/K$.  Therefore the failure probability of the verification is at most the failure probability of $\A$, which is upper bounded by $0.1$.
\end{proof}

%

Now we are ready to prove the lemma.
\begin{proof}(of Lemma \ref{lem:reduction})
First, note that if there is an algorithm for \arm\ that succeeds with probability $0.9$ using $C \le f(n, K) \cdot \text{poly}(\eps)$ tosses, then there is an algorithm for \armT\ ($\eps' = 4/\eta \cdot \eps$ for a constant $\eta$) that succeeds with probability $0.9$ using $O(C)$.

We now show that Algorithm $\A'$ constructed above for \two\ has the following properties, which conclude the lemma.
\begin{enumerate}
\item It tosses $X$ at most $O(C/n)$ times.

\item It outputs `unknown' with probability at most $0.9$.

\item When it does not output `unknown', it successfully computes $X$ with probability at least $1 - \eps$.
\end{enumerate}
The first item holds according to the construction of $\A'$.  For the second item, the probability that $\A'$ outputs `unknown' is upper bounded by the sum of the probability that we give up when running $\A$ and the failure probability of $\A$, which is at most $0.1 + 0.1 < 0.9$ by Claim~\ref{cla:give-up} and Claim~\ref{cla:verification}.   For the third item, note that any $S'$ that passes the verification step in $\A'$ is a set of $(\eps' + 2\eta/K)$-top-$K$ arms.  The item holds by applying Claim~\ref{cla:embed} (setting $\gamma = \eps' + 2\eta/K$). Note that $2\gamma/\eta = (2/\eta) \cdot (\eps' + 2\eta/K) \le \eps$ since $\eps' = \eta/4 \cdot \eps$ and $\eps K \ge c_K$ for a sufficiently large constant $c_K$.
\end{proof}

By Theorem~\ref{thm:two-point} and Lemma~\ref{lem:reduction} we have the following theorem.
\begin{theorem}
\label{thm:lb-eps}
Any algorithm that computes that \arm\ correctly with probability $0.9$ needs $\Omega(n \log \eps^{-1})$ tosses.
\end{theorem}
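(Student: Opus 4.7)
The plan is to derive Theorem~\ref{thm:lb-eps} as an immediate consequence of the coin-tossing lower bound (Theorem~\ref{thm:two-point}) combined with the reduction in Lemma~\ref{lem:reduction}. The argument will be a straightforward contradiction: assuming a too-fast algorithm for $\epsilon$-top-$K$ gives a too-fast algorithm for \two, which is ruled out.

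More concretely, I would proceed as follows. Suppose, toward a contradiction, that there is an algorithm $\A$ that solves the \arm\ problem on every instance with success probability at least $0.9$ using $C = o(n \log \epsilon^{-1})$ pulls. In particular, on the instance family produced by the input-reduction step of Lemma~\ref{lem:reduction} (the $n$-coin instance in which one coordinate is planted with the \two\ input $X$ and the rest are i.i.d.\ $(0.5\pm\eta)$-biased coins with exactly $K$, $K\pm 1$ heads-biased), we have $C \leq f(n,K)/\text{poly}(\epsilon)$ where $f(n,K) = o(n\log\epsilon^{-1})$, so the hypothesis $C \le f(n,K)/\text{poly}(\epsilon)$ of Lemma~\ref{lem:reduction} is satisfied for all sufficiently small $\epsilon$.

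Lemma~\ref{lem:reduction} then produces from $\A$ an algorithm $\A'$ for \two\ that succeeds with probability $1-\epsilon$ using $O(C/n) = o(\log \epsilon^{-1})$ tosses of the input coin $X$. But this directly contradicts Theorem~\ref{thm:two-point}, which asserts that any algorithm solving \two\ with success probability $1-\epsilon$ must use $\Omega(\log\epsilon^{-1})$ tosses. Hence no such $\A$ exists, and every algorithm succeeding with probability $0.9$ on $\epsilon$-top-$K$ must use $\Omega(n\log\epsilon^{-1})$ pulls.

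The only subtlety, which is not really an obstacle but worth noting, is to check that the range of parameters in Lemma~\ref{lem:reduction} covers the regime of Theorem~\ref{thm:intro-lb-eps}: we need $n = 2K$ (which is how the reduction was set up, since it takes $K = n/2$), $\epsilon = \Omega(n^{-1})$ (so that $\epsilon K \geq c_K$ holds for the reduction), and we need the resulting instances to satisfy $H^{(t,\epsilon)} = \Theta(n)$. The last point is guaranteed by Lemma~\ref{lem:reduction}'s guarantee that $H^{(t,\epsilon)} = \Theta(n \eta^{-2}) = \Theta(n)$ since $\eta$ is an absolute constant. Thus the lower bound of $\Omega(n\log\epsilon^{-1}) = \Omega(H^{(t,\epsilon)}\log\epsilon^{-1})$ holds on instances with the promised hardness parameter, completing the proof. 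Since the two ingredients (Theorem~\ref{thm:two-point} and Lemma~\ref{lem:reduction}) have already been established, I expect no further work beyond stating this contradiction argument cleanly.
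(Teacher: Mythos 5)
Your proposal is correct and follows exactly the paper's route: the paper derives Theorem~\ref{thm:lb-eps} immediately from Theorem~\ref{thm:two-point} and Lemma~\ref{lem:reduction}, which is precisely the contradiction argument you spell out (including the parameter checks $K = n/2$, $\eps K \ge c_K$, and $H^{(t,\eps)} = \Theta(n)$ that the paper delegates to the reduction lemma). No gaps.
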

Together with the bound for $H^{(t, \epsilon)}$ in Lemma~\ref{lem:reduction}, we prove Theorem~\ref{thm:intro-lb-eps}.

\newcommand{\optmai}{{OptMAI}}
\newcommand{\clucb}{{CLUCB-PAC}}
\newcommand{\adptopk}{{AdaptiveTopK}}
\newcommand{\uniform}{{\sc Uniform}}
\newcommand{\synp}{{\sc Synthetic}-$p$}
\newcommand{\synfour}{{\sc Synthetic-4}}
\newcommand{\synsix}{{\sc Synthetic-6}}
\newcommand{\syntwo}{{\sc Synthetic-2}}
\newcommand{\synthree}{{\sc Synthetic-3}}
\newcommand{\synhalf}{{\sc Synthetic-.5}}
\newcommand{\twogroup}{{\sc TwoGroup}}
\newcommand{\rte}{{\sc Rte}}

\section{Experiments}
\label{sec:exp}
In this section we present the experimental results. While our theorems are presented in the PAC form, it is in general difficult to verify them  directly because the parameter $\eps$ is merely an upper bound and the actual aggregate regret may deviate from it.  In our experiment, we convert our  Algorithm~\ref{alg:basic} to the fixed-budget version (that is, fix the budget of the number of pulls and calculate the aggregate regret). We compare our  Algorithm~\ref{alg:basic} (\adptopk~)  with two state-of-the-art methods -- \optmai~in \cite{Zhou:14} and \clucb~ in \cite{Chen-Lin-King-Lyu-Chen-13}.  The comparison between \optmai~/\clucb~ and previous methods (e.g., the methods in \cite{Bubeck:13} and \cite{Kalyanakrishnan:12}) have already been demonstrated in \cite{Zhou:14} and  \cite{Chen-Lin-King-Lyu-Chen-13}, and thus omitted here for the clarity of the presentation.  
%
To convert our algorithm to the fixed-budget version, we remove the outer while loop of Algorithm \ref{alg:basic}. As a replacement, we keep track of the total number of pulls, and stop pulling the arms once the budget is exhausted. 

We test our algorithm on both synthetic and real datasets as described as follows. For simulated datasets, we set the total number of arms $n=1,000$ and vary the parameter $K$. We set the tolerance parameter $\epsilon=0.01$. In \adptopk~and \clucb, another parameter $\delta$  (i.e., the failure probability) is required and we set $\delta=0.01$.
\begin{itemize}
\item \twogroup: the mean reward for the top $K$ arms is set to $0.7$ and that for the rest of the arms is set to  $0.3$.
\item \uniform: we set $\theta_i = 1 - \frac{i}{n}$ for $1 \leq i \leq n$.
\item \synp: we set $\theta_i = (1 - \frac{K}{n}) + \frac{K}{n}\cdot(1 - \frac{i}{K})^p$ for each $i \leq K$ and $\theta_{i} = (1 - \frac{K}{n}) - \frac{n-K}{n}\cdot(\frac{i-K}{n-K})^p$ for each $i > K$. Note that {\sc  Synthetic}-$1$ is identical to \uniform. When $p$ is larger than $1$, arms are made closer to the boundary that separates the top-$K$ from the rest (i.e. $1 - \frac{K}{n}$). When $p$ is smaller than $1$, arms are made farther to the boundary. We normalize all the arms such that the mean values of the arms still span the whole interval $[0, 1]$. We consider $p=.5, 1, 6$.

\item \rte: We generate $\theta$ from a real recognizing textual entailment (RTE) dataset \cite{Snow:2008}. There are $n = 164$ workers and we set each $\theta_i$ be the true labeling accuracy of the $i$-th worker. Note that the true label for each instance is provided  in this dataset.
\end{itemize}

For each dataset, we first fix the budget (total number of pulls allowed) and run each algorithm $200$ times. For each algorithm, we calculate the empirical probability (over 200 runs) that the aggregate regret of the selected arms is  above the tolerance threshold $\eps=0.01$, which is called \emph{failure probability}. A smaller failure probability means better performance. For each dataset and different $K$, we plot the curve of failure probability by varying the number of pulls. The results are shown in Figure \ref{fig:twogroup}-\ref{fig:rte}.

\begin{figure*}[!ht]
     \centering
     \subfloat[][$K=100$]{\includegraphics[height=0.19\textwidth,width=0.25\textwidth]{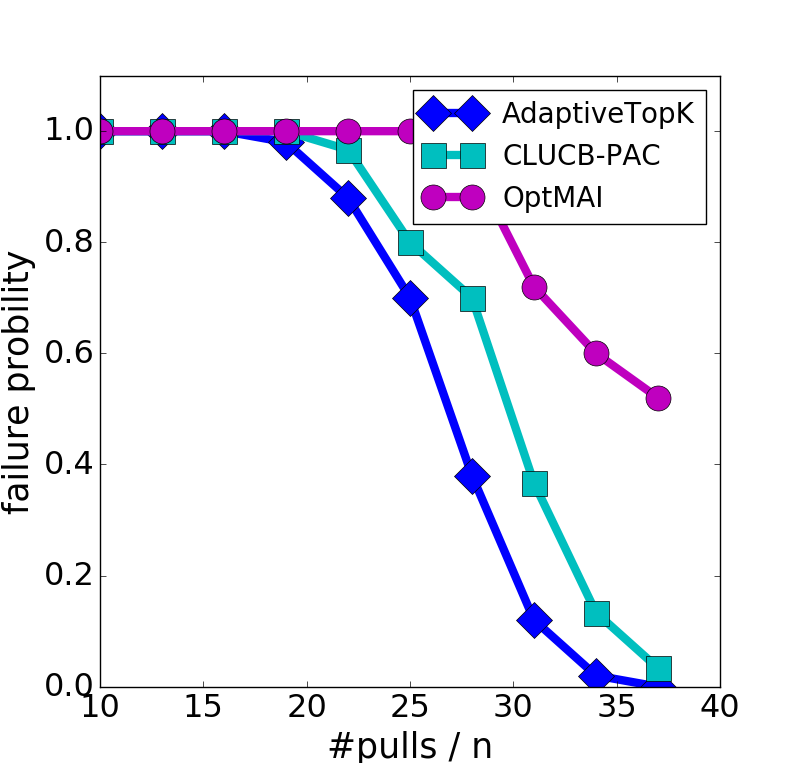}}
     \subfloat[][$K=250$]{\includegraphics[height=0.19\textwidth,width=0.25\textwidth]{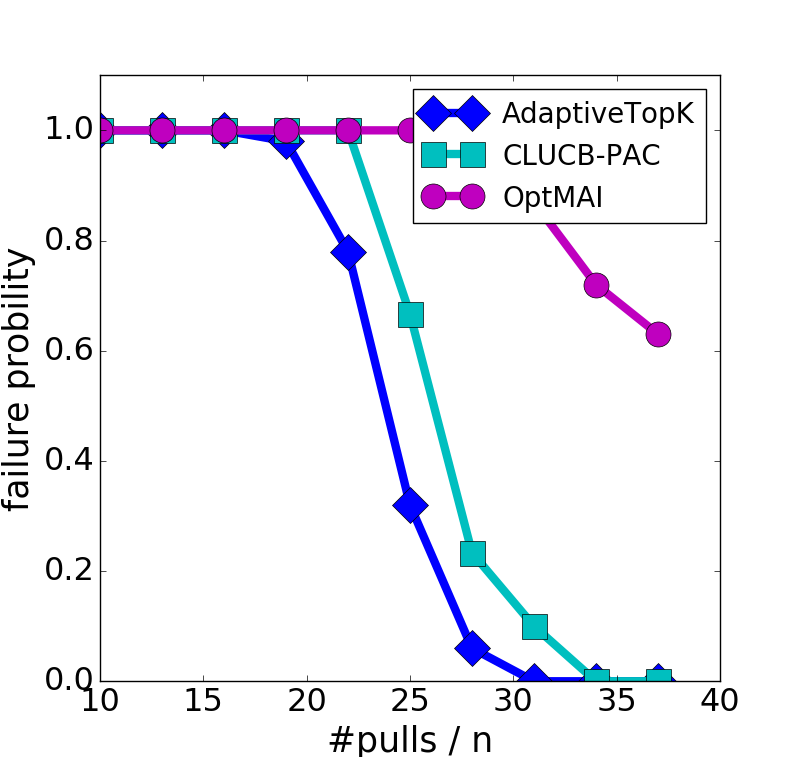}}
     \subfloat[][$K=500$]{\includegraphics[height=0.19\textwidth,width=0.25\textwidth]{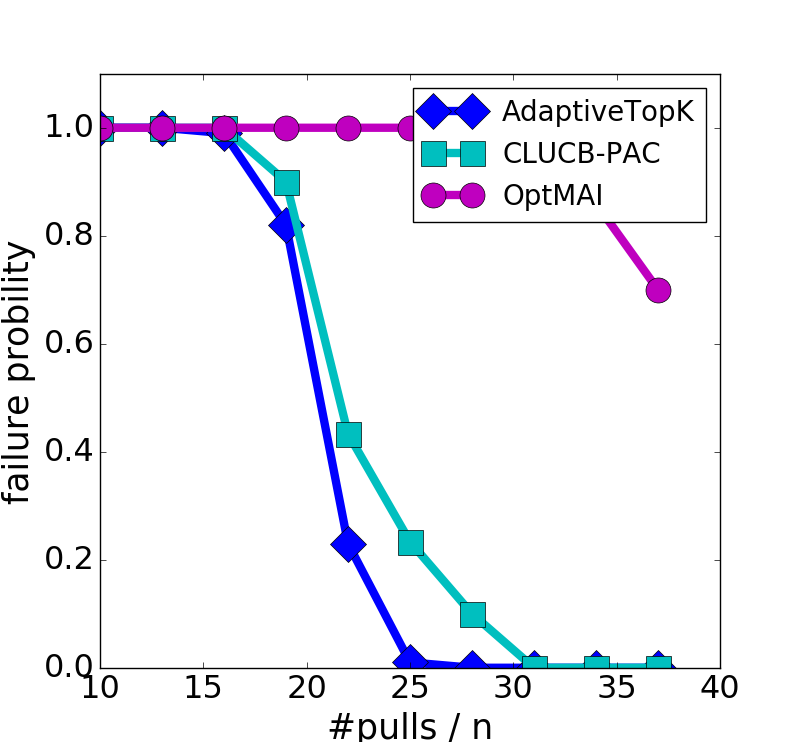}}
     \caption{\twogroup~dataset}
     \label{fig:twogroup}
\end{figure*}

\begin{figure*}[!ht]
     \centering
     \subfloat[][$K=100$]{\includegraphics[height=0.19\textwidth,width=0.25\textwidth]{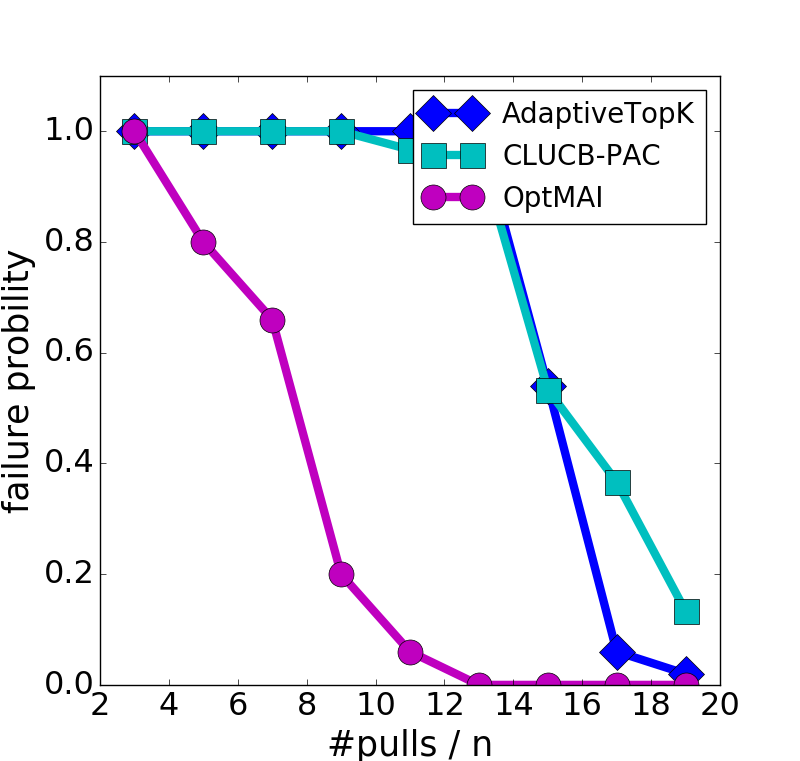}}
     \subfloat[][$K=250$]{\includegraphics[height=0.19\textwidth,width=0.25\textwidth]{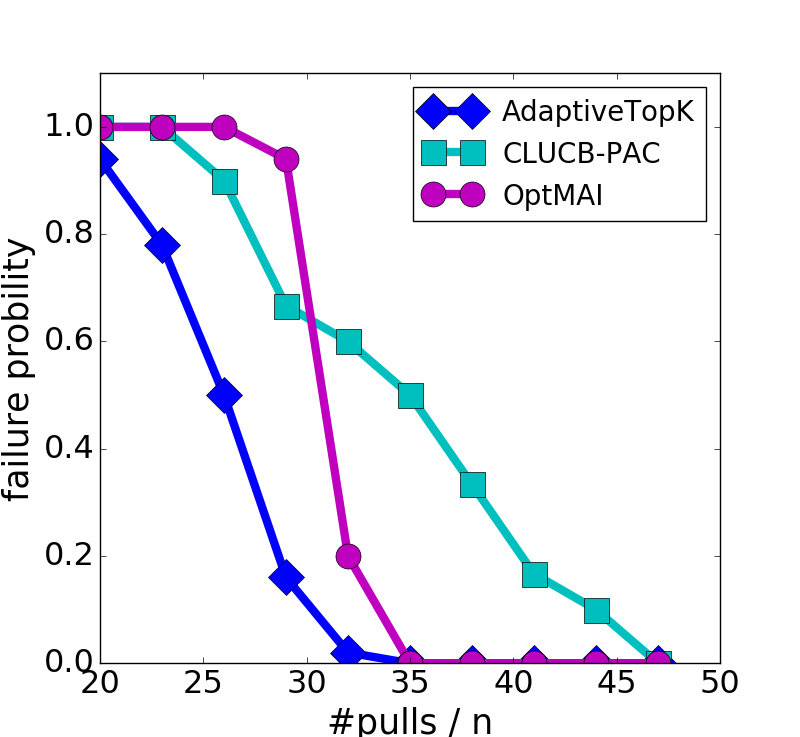}}
     \subfloat[][$K=500$]{\includegraphics[height=0.19\textwidth,width=0.25\textwidth]{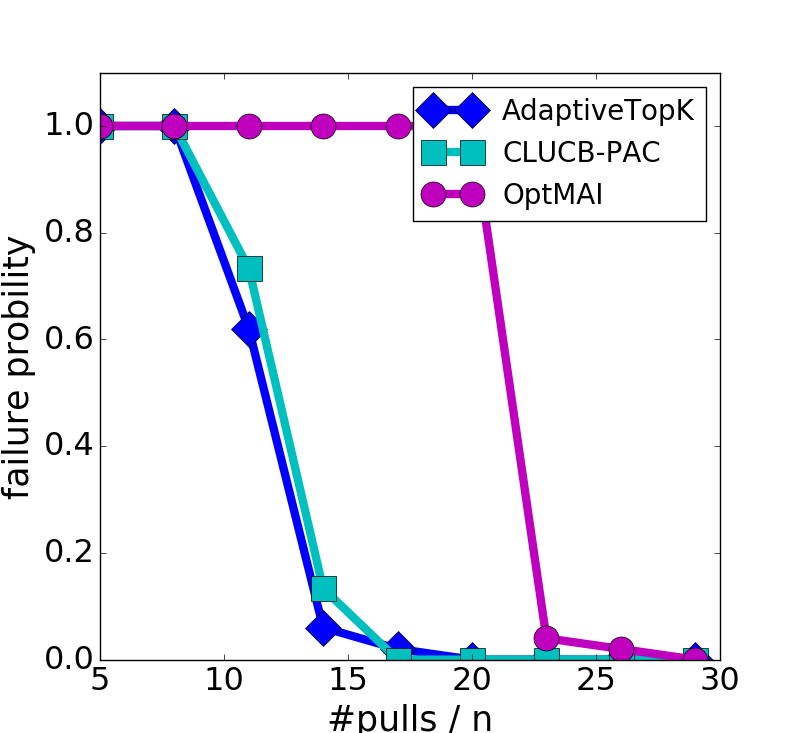}}
     \caption{\synhalf~dataset}
     \label{fig:synhalf}
\end{figure*}

\begin{figure*}[!ht]
     \centering
     \subfloat[][$K=100$]{\includegraphics[height=0.19\textwidth,width=0.25\textwidth]{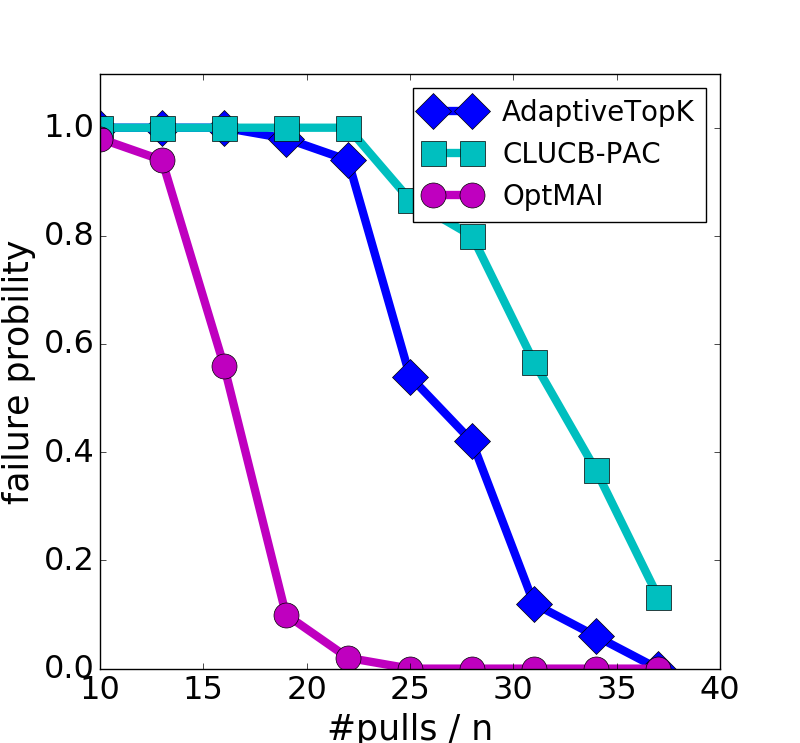}}
     \subfloat[][$K=250$]{\includegraphics[height=0.19\textwidth,width=0.25\textwidth]{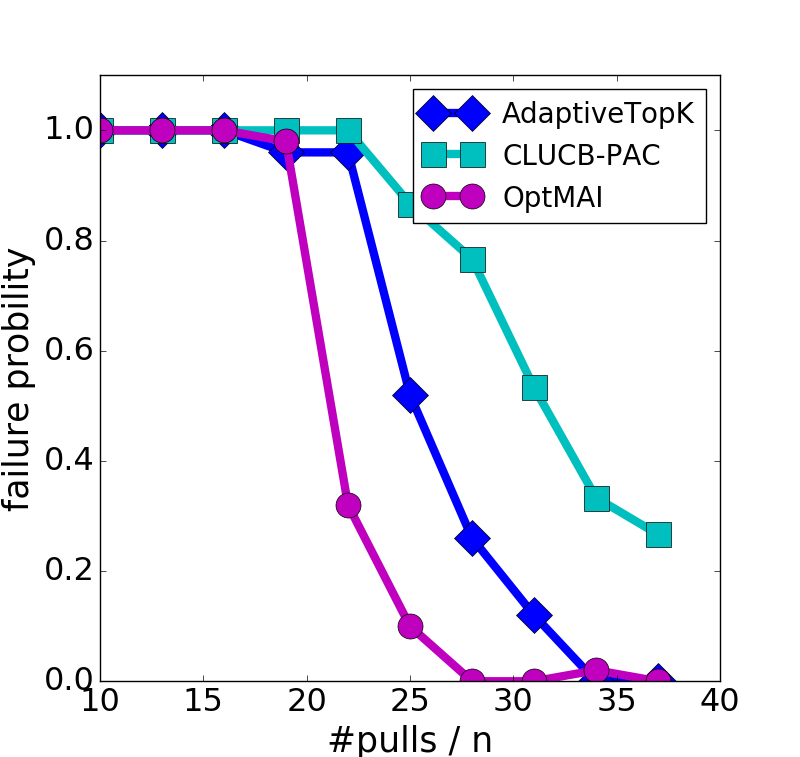}}
     \subfloat[][$K=500$]{\includegraphics[height=0.19\textwidth,width=0.25\textwidth]{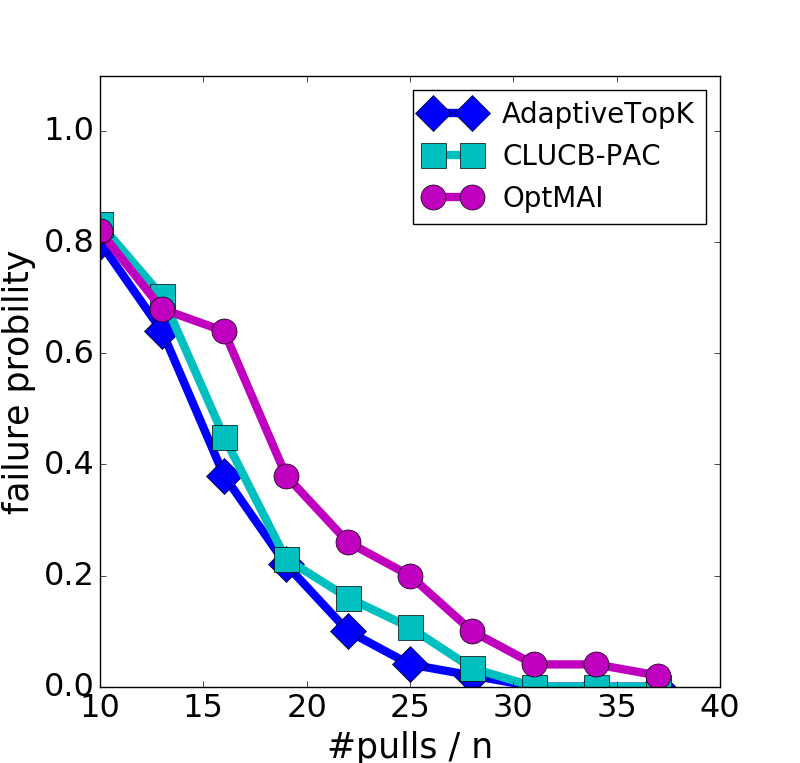}}
     \caption{\uniform~dataset}
     \label{fig:uniform}
\end{figure*}

\begin{figure*}[!ht]
     \centering
     \subfloat[][$K=100$]{\includegraphics[height=0.19\textwidth,width=0.25\textwidth]{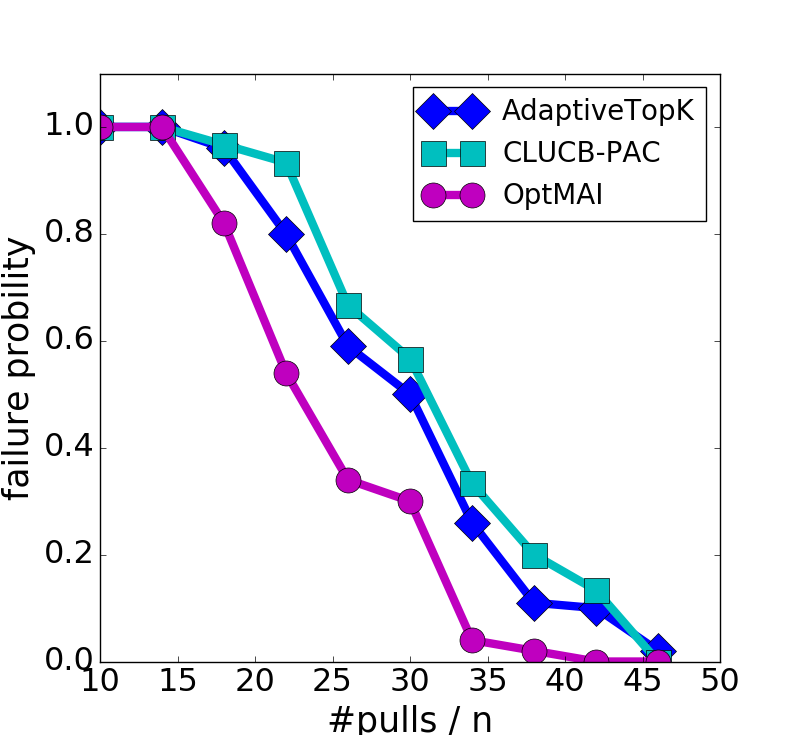}}
     \subfloat[][$K=250$]{\includegraphics[height=0.19\textwidth,width=0.25\textwidth]{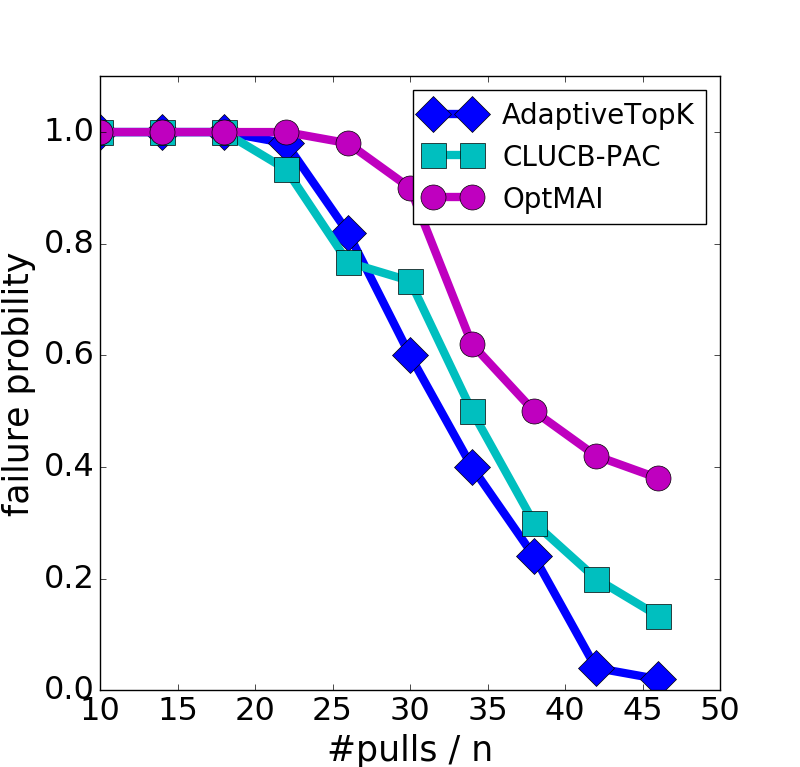}}
     \subfloat[][$K=500$]{\includegraphics[height=0.19\textwidth,width=0.25\textwidth]{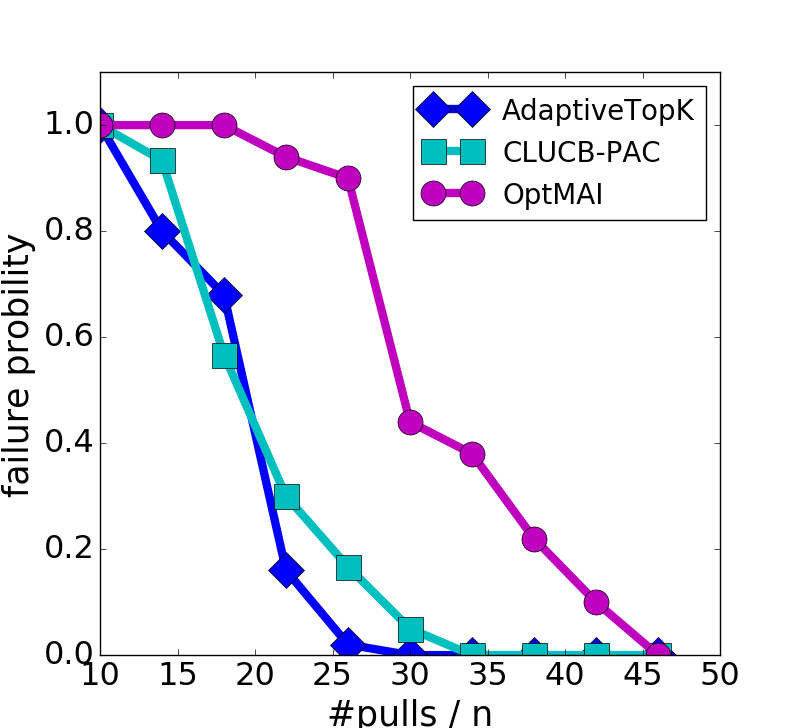}}
     \caption{\synsix~dataset}
     \label{fig:synsix}
\end{figure*}

\begin{figure*}[!ht]
     \centering
     \subfloat[][$K=30$]{\includegraphics[height=0.19\textwidth,width=0.25\textwidth]{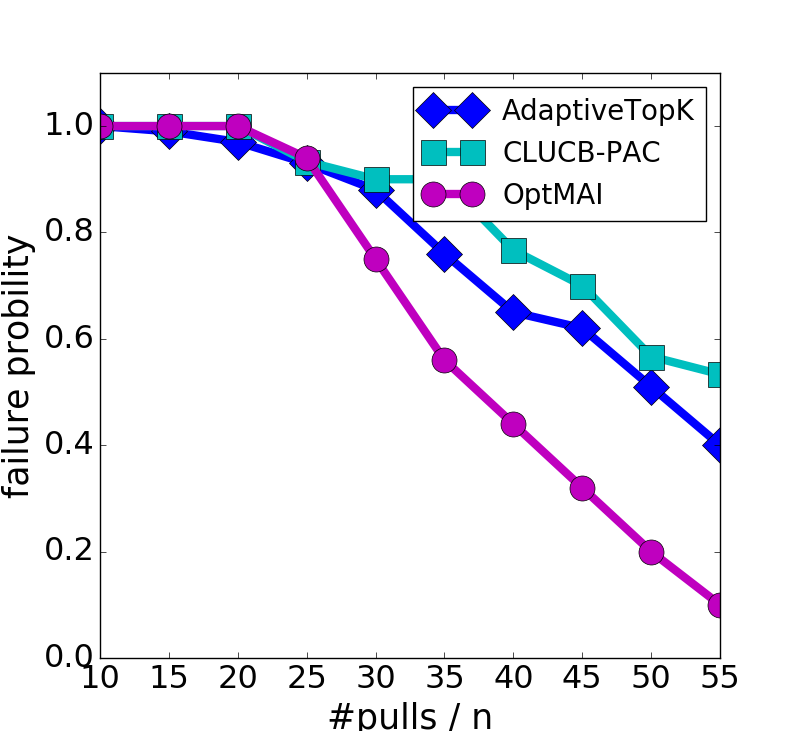}}
     \subfloat[][$K=50$]{\includegraphics[height=0.19\textwidth,width=0.25\textwidth]{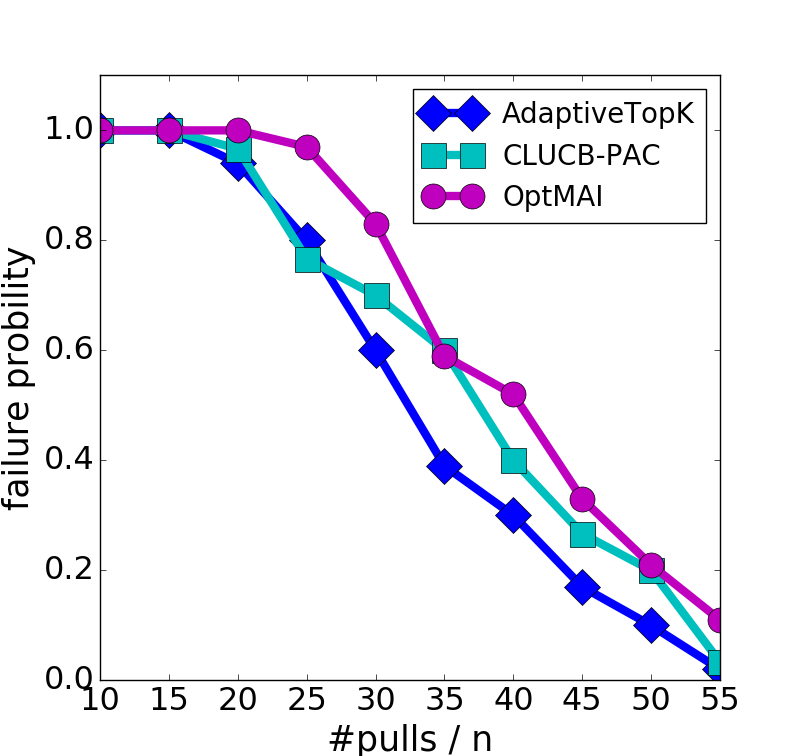}}
     \subfloat[][$K=80$]{\includegraphics[height=0.19\textwidth,width=0.25\textwidth]{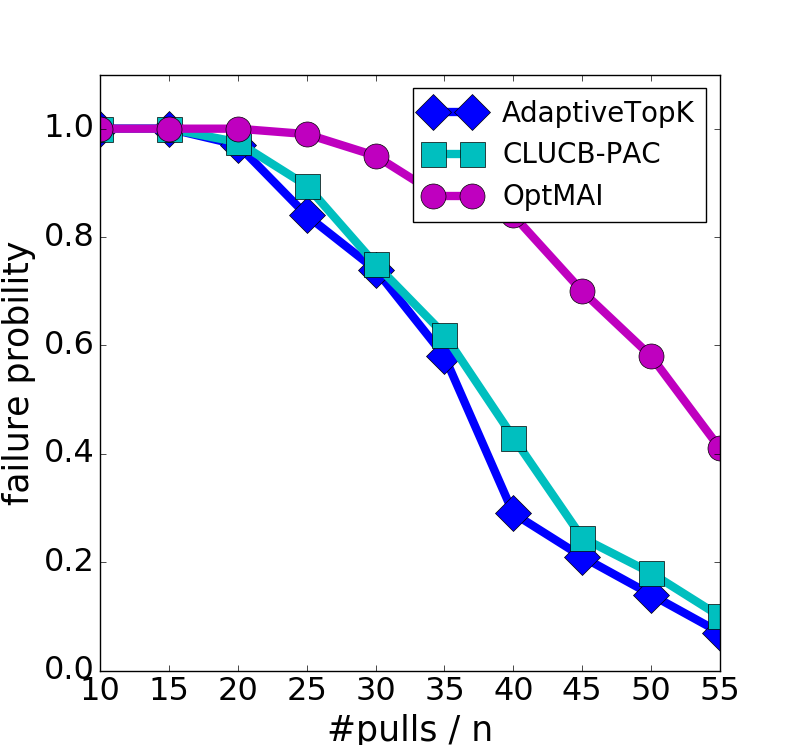}}
     \caption{\rte~dataset}
     \label{fig:rte}
\end{figure*}

It can be observed from the experimental results that \adptopk~(Algorithm \ref{alg:basic}) outperforms \clucb~in almost all the datasets. When $K$ is relatively small, \optmai~has the best performance in most datasets. When $K$ is large, \adptopk~outperforms \optmai~.  The details of  the experimental results are elaborated as follows.
\begin{itemize}
\item For \twogroup~dataset (see Figure~\ref{fig:twogroup}), \adptopk~outperforms other algorithms significantly for all values of $K$. The advantage comes from the adaptivity of our algorithm. In the \twogroup~dataset, top-$K$ arms are very well separated from the rest. Once our algorithm identifies this situation, it need only a few pulls to classify the arms. In details, the inner while loop (Line \ref{line:while-start}) of Algorithm \ref{alg:basic} make it possible to accept/reject a large number of arms in one round as long as the algorithm is confident.

\item As $K$ increases, the advantage of \adptopk~over other algorithms (\optmai~in particular)  becomes more significant. This can be explained by the definition of $H^{(t, \epsilon)}$: $t = t(\epsilon, K)$ usually becomes bigger as $K$ grows, leading to a smaller hardness parameter $H^{(t, \epsilon)}$.


\item A comparison between \synhalf, \uniform, \synsix~reveals that the advantage of \adptopk~over other algorithms (\optmai~in particular) becomes significant in both extreme scenarios, i.e., when arms are very well separated ($p \ll 1$) and when arms are very close to the separation boundary ($p \gg 1$).
\end{itemize}

\section{Conclusion and Future Work}

In this paper, we proposed two algorithms for a PAC version of the multiple-arm identification problem in a stochastic multi-armed bandit (MAB) game. We introduced a new hardness parameter for characterizing the difficulty of an instance when using the aggregate regret as the evaluation metric, and established the instance-dependent sample complexity based on this hardness parameter. We also established lower bound results to show the optimality of our algorithm in the worst case. Although we only consider the case when the reward distribution is supported on $[0,1]$, it is straightforward to extend our results to sub-Gaussian reward distributions.

For future directions, it is worthwhile to consider more general problem of pure exploration of MAB under matroid constraints, which includes the multiple-arm identification as a special case, or other polynomial-time-computable combinatorial constraints such as matchings. It is also interesting to extend the current work to finding top-$K$ arms in a linear contextual bandit framework.

\section*{Appendix}

\subsection*{Proof of Lemma \ref{lem:sum}}
\begin{proof}
We partition all $a_i$'s to groups $G_1, \ldots, G_{\lceil \log_2 M\rceil}$ where $G_j = \{i \in [n]\ |\ a_i \in [2^{j-1}, 2^j) \}$.  Let  $S = \sum_{i = 1}^{n} a_i \geq n$. For each $j \in \{1, 2, \dots, \lceil \log_2 M \rceil\}$, let $\delta_j = \frac{1}{S} \sum_{i \in G_j} a_i$. Observe that we have $\sum_j \delta_j = 1$. 
We will show for each $G_j$ that

\begin{align}
\sum_{i \in G_j} a_i \log(n/i) \leq O\left(\delta_j  \log M + \delta_j \log \delta_j^{-1} + 1 \right)S. \label{eq:sum-1pre}
\end{align}

Once we establish \eqref{eq:sum-1pre}, we prove the lemma as follows. We sum up the inequalities for all $j$ and get 
\begin{align*}
& \sum_{i \in [n]} a_i \log (n/i) \leq \sum_{j = 1}^{\lceil \log_2 M\rceil} O\left(\delta_j  \log M + \delta_j \log \delta_j^{-1} + 1 \right)S \\
 &  \qquad \qquad = O(\lceil \log M\rceil) S + O(S) \sum_{j = 1}^{\lceil \log_2 M\rceil}   \delta_j \log \delta_j^{-1}  \leq O(\lceil \log M \rceil) S + O(S) \cdot \log \lceil \log_2 M\rceil \leq O(\lceil\log M\rceil) S ,
\end{align*}
where the second last inequality is by Jensen's inequality and the convexity of $\delta \log \delta$ over $\delta \in (0, 1)$. 

Now we prove \eqref{eq:sum-1pre} for each group $G = G_j$ and $\delta = \delta_j$. Let $b = \max_{i \in G} \{ a_i\} \leq M$. By our partition rule we have that $b \leq 2 a_i$ for all $i \in G$. Observe that
\begin{align}
\sum_{i \in G}&  a_i \log (n/i) \leq  b \sum_{i=1}^{|G|} \log (n/i)   =  b |G| \log n - b \log (|G|!) \leq b |G| \log (n/|G|) + O(b |G|).
\label{eq:bound-sum-ai}
\end{align}
The last inequality of (\ref{eq:bound-sum-ai}) is by Stirling's approximation. Since $b|G| \geq \sum_{i \in G} a_i = \delta S$, we have $|G| \geq \frac{\delta S}{b} \geq \frac{\delta n}{b}$. We finish the proof of \eqref{eq:sum-1pre} by upper-bounding the RHS of \eqref{eq:bound-sum-ai} by
\[
b|G| \log (b \delta^{-1}) + O(b|G|) \leq 2 \delta S \log (b \delta^{-1})  + O(b|G|) \leq O\left(\delta S \log (M \delta^{-1}) + S\right),
\]
where the first inequality is because $b |G| \leq \sum_{i \in G} 2 a_i = 2 \delta S$ .

\end{proof}

\bibliographystyle{abbrvnat}
\bibliography{paper}

\end{document}